\theoremstyle{plain}
\newtheorem{theorem}{Theorem}[section]
\newtheorem{lemma}[theorem]{Lemma}
\newtheorem{corollary}[theorem]{Corollary}
\newtheorem{conjecture}[theorem]{Conjecture}
\theoremstyle{definition}
\newtheorem{definition}[theorem]{Definition}
\theoremstyle{remark}
\newcommand{\joonas}[1]{{}}
\newcommand{\ossi}[1]{{}}
\newcommand{\calb}{\mathcal{B}}
\newcommand{\caln}{\mathcal{N}}
\newcommand{\calm}{\mathcal{M}}
\newcommand{\call}{\mathcal{L}}
\newcommand{\calx}{\mathcal{X}}
\newcommand{\AO}{\mathrm{AO}}
\newcommand{\TV}{\mathrm{TV}}
\newcommand{\KL}{\mathrm{KL}}
\newcommand{\dx}{\mathrm{d}}
\newcommand{\R}{\mathbb{R}}
\newcommand{\N}{\mathbb{N}}
\newcommand{\clip}{\mathrm{clip}}
\newcommand{\lp}{\left(}
\newcommand{\rp}{\right)}
\newcommand{\dpG}{\tilde{G}} % perturbed scaled gradient
\DeclareMathOperator{\Var}{Var}
\DeclareMathOperator{\Exp}{\mathbb{E}}
\newcommand{\seff}{\sigma_{\text{eff}}}
\icmltitlerunning{Subsampling is not Magic: Why Large Batch Sizes Work for Differentially Private 
Stochastic Optimisation}
\begin{document}

\twocolumn[
\icmltitle{Subsampling is not Magic: Why Large Batch Sizes Work for Differentially Private 
Stochastic Optimisation}

\icmlsetsymbol{equal}{*}

\begin{icmlauthorlist}
\icmlauthor{Ossi Räisä}{equal,uh}
\icmlauthor{Joonas Jälkö}{equal,uh}
\icmlauthor{Antti Honkela}{uh}
\end{icmlauthorlist}

\icmlaffiliation{uh}{Department of Computer Science, University of Helsinki}

\icmlcorrespondingauthor{Ossi Räisä}{ossi.raisa@helsinki.fi}
\icmlcorrespondingauthor{Joonas Jälkö}{joonas.jalko@helsinki.fi}

\icmlkeywords{DP-SGD, Subsampling Amplification, Differential Privacy}

\vskip 0.3in
]

\printAffiliationsAndNotice{\icmlEqualContribution} % otherwise use the standard text.

%%%%%%%%%%%%%%%%%%%%%%%%%%%%%%%%%%%%%%%%%%
\begin{abstract}
We study how the batch size affects the total gradient variance in 
differentially private  stochastic gradient descent (DP-SGD), seeking a theoretical explanation 
for the usefulness of large batch sizes. As DP-SGD is the basis of modern DP deep learning, its properties
have been widely studied, and recent works have empirically found large batch sizes
to be beneficial. However, theoretical explanations of this benefit are currently heuristic 
at best. We first observe that the total gradient variance in DP-SGD can be decomposed into 
subsampling-induced and noise-induced variances. We then prove that in the limit of an
infinite number of iterations, the effective noise-induced variance is invariant to the batch size.
The remaining subsampling-induced variance decreases with larger batch sizes,
so large batches reduce the effective total gradient variance. We confirm numerically
that the asymptotic regime is relevant in practical settings when the batch size 
is not small, and find that outside the asymptotic 
regime, the total gradient variance decreases even more with large batch sizes. 
We also find a sufficient condition that implies that large 
batch sizes similarly reduce effective DP noise variance for one iteration of DP-SGD
\footnote{Addendum: The corresponding Conjecture~\ref{conj:a-minus-b} has been proven 
by~\citet{kalininNotesSampledGaussian2024} after the publication of this work.}.
\end{abstract}

%%%%%%%%%%%%%%%%%%%%%%%%%%%%%%%%%%%%%%%%%%
\section{Introduction}

As deep learning models are being trained on ever larger datasets, the privacy
of the subjects of these training datasets is a growing concern.
\emph{Differential privacy} (DP)~\citep{dworkCalibratingNoiseSensitivity2006} is
a property of an algorithm that formally quantifies the privacy leakage that can
result from releasing the output of the algorithm. Due to the formal guarantee
provided by DP, there is a great deal of interest in training deep learning
models with a DP variant of stochastic gradient descent
(DP-SGD)~\citep{songStochasticGradientDescent2013,
bassilyPrivateEmpiricalRisk2014,abadiDeepLearningDifferential2016}.

One of the key properties of DP is so called \emph{subsampling amplification}~\citep{liSamplingAnonymizationDifferential2012,beimelBoundsSampleComplexity2014}. 
Broadly speaking, subsampling the data before 
applying a DP mechanism adds an additional layer of protection to the data samples,
leading to stronger privacy guarantees for a fixed amount of added noise. Quantifying 
the gains from subsampling amplification~\citep{abadiDeepLearningDifferential2016,zhuPoissionSubsampledRenyi2019,KoskelaJH20,zhuOptimalAccountingDifferential2022} 
has been a crucial component in making algorithms such as DP-SGD
work under strict privacy guarantees.

In non-DP gradient descent, subsampling the gradients 
has been shown to provide better generalization on many 
occasions~\citep{hardtTrainFasterGeneralize2016,mouGeneralizationBoundsSGLD2018,kleinbergAlternativeViewWhen2018}. 
The reason for the improved generalization is that the subsampling-induced noise
allows the optimization to avoid collapsing into a local optimum early 
on~\citep{keskarLargeBatchTrainingDeep2017}. 
However, when optimizing with DP-SGD, many works have shown that the large 
batch sizes actually provide better performance compared to heavy subsampling
of the training data \cite{mcmahanLearningDifferentiallyPrivate2018, deUnlocking2022, mehtaLargeScaleTransfer2023}. 
This suggests that there are some fundamental differences in how 
the level of subsampling affects the performance of non-private and private 
SGD.

We focus on the Poisson subsampled Gaussian mechanism, which is the basis for the privacy analysis 
of DP-SGD. The privacy analysis can be done with several different 
upper bounds~\citep{abadiDeepLearningDifferential2016,zhuPoissionSubsampledRenyi2019,KoskelaJH20},
but we only look at the tight analysis that gives the smallest possible privacy bounds
for the subsampled Gaussian mechanism. The tight analysis is what modern numerical privacy 
accountants~\citep{KoskelaJH20,koskelaTightDifferentialPrivacy2021,GopiLW21,doroshenkoConnectDotsTighter2022a,alghamdiSaddlePointMethodDifferential2023} approximate.

In Poisson subsampling, each datapoint is included with probability $q$, called
the \emph{subsampling rate}, which is proportional to the expected batch size.
The total gradient variance in DP-SGD can be decomposed into two parts: the
subsampling variance and the Gaussian noise variance $\sigma^2$. A larger $q$
reduces the subsampling variance, but the effect on the noise variance is not as
clear. On one hand, a larger $q$ reduces the privacy amplification effect,
necessitating a larger $\sigma^2$, but on the other hand, an unbiased gradient
estimate must be divided by $q$, so the effective noise variance is
$\frac{\sigma^2}{q^2}$.

We study how the effective noise variance scales with the subsampling rate, making 
the following contributions:
\begin{enumerate}
    \item 
    In Section~\ref{sec:asym-linear-sigma}, we prove that in  the limit of an infinite number of iterations, there is a linear relationship 
    $\sigma = cq$ between $q$ and $\sigma$, meaning that the two effects $q$ has on
    the effective noise variance cancel each other, leaving the effective noise variance 
    invariant to the subsampling rate. This 
    means that a larger subsampling  rate always reduces the effective total gradient variance, since
    the subsampling-induced variance decreases with a larger subsampling rate.
    \item In Section~\ref{sec:no-comp}, we consider the case of a single iteration of 
    the subsampled Gaussian mechanism. We find a sufficient condition which implies 
    that large subsampling rates always reduce the effective injected DP noise variance, 
    hence also reducing the effective total gradient variance. We check this 
    condition numerically for a wide grid of hyperparameter values, and find that 
    the condition holds amongst these hyperparameters.
    \item We look at the relationship between the subsampling rate and noise standard deviation 
    empirically in Section~\ref{sec:empirical-results}, and find that the asymptotic regime 
    from our theory is reached quickly with small privacy parameters. Moreover, we find 
    that when we are not in the asymptotic regime, the effective injected DP noise variance decreases 
    even more with a large subsampling rate.
\end{enumerate}

\subsection{Related Work}\label{sec:related-work}
Several works have empirically observed that large batch sizes are useful in 
DP-SGD~\citep{mcmahanLearningDifferentiallyPrivate2018,deUnlocking2022,mehtaLargeScaleTransfer2023}.
Indeed, the linear relationship between the subsampling rate and noise standard deviation we 
study has been suggested as a heuristic rule-of-thumb in previous 
work~\citep{liLargeLanguageModels2021,sanderTANBurnScaling2023} to explain these results.

The linear relationship also appears in several works that study 
Rényi DP~\citep{mironovRenyiDifferentialPrivacy2017}
accounting of the subsampled Gaussian mechanism~\citep{
abadiDeepLearningDifferential2016,
bunComposableVersatilePrivacy2018,
mironovRenyiDifferentialPrivacy2019}, though these works do not make the connection with large 
subsampling rates. These Rényi DP-based analyses of 
the subsampled Gaussian mechanism do not provide tight 
privacy bounds anyway~\citep{zhuOptimalAccountingDifferential2022}, 
so these results do not imply that the linear relationship holds 
even asymptotically with tight accounting.

Our work focuses on Poisson subsampling, which is widely used and implemented 
in libraries like Opacus~\citep{yousefpourOpacusUserFriendlyDifferential2021a}.
There are other subsampling schemes like subsampling with replacement (WR) and 
subsampling without replacement (WOR), which take a fixed-size uniformly random subsample,
either with or without replacement.
Previous work has done tight privacy accounting for both of 
them~\citep{KoskelaJH20,zhuOptimalAccountingDifferential2022}. However, recent work 
has questioned the validity of these works for WOR 
subsampling~\citep{lebedaAvoidingPitfallsPrivacy2023}, pointing out issues which should 
be resolved before studying WOR subsampling further. WR subsampling is rarely used in practice,
and the accounting for it is much more complex~\citep{KoskelaJH20} than Poisson subsampling.

\citet{sommerPrivacyLossClasses2019} and \citet{DongRS22} prove central limit theorems
for privacy accounting, which essentially show that
the privacy loss of any sufficiently well-behaved mechanism after many compositions is asymptotically 
like the privacy 
loss of the Gaussian mechanism. As we study the asymptotic behaviour of the subsampled Gaussian 
mechanism after many 
compositions, it is possible that these theorems could be used to prove our result. 
However, we opted for another route in our proof. Instead of showing that privacy accounting 
for the subsampled Gaussian mechanism is asymptotically similar to accounting for the 
Gaussian mechanism, we show that the subsampled mechanism itself is asymptotically similar to 
the Gaussian mechanism.

%%%%%%%%%%%%%%%%%%%%%%%%%%%%%%%%%%%%%%%%%%
\section{Background}\label{sec:background}
In this section, we go through some background material on differential privacy.
We start with the definition and basics in Section~\ref{sec:dp-basics},
look at composition in Section~\ref{sec:dp-composition} and finally 
introduce DP-SGD in Section~\ref{sec:dp-sgd}.

%%%%%%%%%%%%%%%%%
\subsection{Differential privacy}\label{sec:dp-basics}
Differential privacy 
(DP)~\citep{dworkCalibratingNoiseSensitivity2006,dworkAlgorithmicFoundationsDifferential2014} 
is a property of an algorithm that quantifies 
the privacy loss resulting from releasing the output of the algorithm.
In the specific definition we use, the privacy loss is quantified by 
two numbers: $\epsilon \geq 0$ and $\delta \in [0, 1]$.
\begin{definition}
    Let $\calm$ be a randomised algorithm. $\calm$ is $(\epsilon, \delta)$-DP 
    if, for all measurable $A$ and all neighbouring $x, x'\in \calx$,
    \begin{equation}
        \Pr(\calm(x) \in A) \leq e^\epsilon \Pr(\calm(x') \in A) + \delta.
    \end{equation}
\end{definition}
DP algorithms are also called \emph{mechanisms}. We use $x\sim x'$ to denote 
$x$ and $x'$ being neighbouring. The meaning of neighbouring
is domain specific. The most common definitions are \emph{add/remove}
neighbourhood, where $x\sim x'$ if they differ in only 
adding or removing one row, and \emph{substitute} neighbourhood, where 
$x\sim x'$ if they differ in at most one row, and have 
the same number of rows. We focus on the add/remove neighbourhood in this 
work.

Differential privacy is immune to post-processing, so applying any 
function to the output of a DP mechanism cannot weaken the privacy 
guarantee~\citep{dworkAlgorithmicFoundationsDifferential2014}.
\begin{theorem}\label{thm:dp-post-processing-immunity}
    Let $\calm$ be an $(\epsilon, \delta)$-DP mechanism and let $f$ be 
    any randomised algorithm. Then $f\circ \calm$ is $(\epsilon, \delta)$-DP.
\end{theorem}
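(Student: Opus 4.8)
The plan is to reduce the randomised case to the deterministic case by externalising the randomness of $f$. First I would treat deterministic (measurable) $f$: for any measurable set $A$ in the codomain of $f$, the preimage $f^{-1}(A)$ is measurable, and $\{f(\calm(x)) \in A\} = \{\calm(x) \in f^{-1}(A)\}$ as events. Applying the $(\epsilon,\delta)$-DP guarantee of $\calm$ to the measurable set $f^{-1}(A)$ then gives, for all $x \sim x'$,
\[
  \Pr(f(\calm(x)) \in A) = \Pr(\calm(x) \in f^{-1}(A)) \leq e^\epsilon \Pr(\calm(x') \in f^{-1}(A)) + \delta = e^\epsilon \Pr(f(\calm(x')) \in A) + \delta,
\]
which is exactly the claimed bound for $f \circ \calm$.

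For a randomised $f$, I would write $f(y) = g(y, R)$ where $g$ is a deterministic measurable map and $R$ is a source of randomness independent of $\calm$, with law $\nu$. Conditioning on $R = r$, the map $y \mapsto g(y, r)$ is deterministic, so by the previous paragraph $\Pr(g(\calm(x), r) \in A) \leq e^\epsilon \Pr(g(\calm(x'), r) \in A) + \delta$ for every $r$. Using independence of $R$ and $\calm$ to write $\Pr(f(\calm(x)) \in A) = \int \Pr(g(\calm(x), r) \in A)\, \dx\nu(r)$ (Fubini/Tonelli), and integrating the pointwise inequality over $r \sim \nu$, preserves the bound because $\nu$ is a probability measure, so the additive term integrates to $\delta$ and we obtain $\Pr(f(\calm(x)) \in A) \leq e^\epsilon \Pr(f(\calm(x')) \in A) + \delta$.

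The main obstacle is purely the measure-theoretic bookkeeping rather than the probabilistic content: justifying the representation $f = g(\cdot, R)$ with jointly measurable $g$ (standard when the output spaces are Borel/standard spaces), checking that $(y, r) \mapsto \indic{g(y,r) \in A}$ is jointly measurable so that Fubini applies, and confirming measurability of $f^{-1}(A)$ in the deterministic step. None of these are deep, but they are where all the care is needed; the inequality itself propagates trivially through the conditioning-and-averaging argument, since taking preimages and integrating against a probability measure are both monotone operations.
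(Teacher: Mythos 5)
Your argument is correct: this is the standard post-processing proof, and the paper itself states this theorem as background, citing \citet{dworkAlgorithmicFoundationsDifferential2014} without reproving it. Your two-step reduction — deterministic $f$ via $\Pr(f(\calm(x))\in A)=\Pr(\calm(x)\in f^{-1}(A))$, then randomised $f$ by conditioning on independent external randomness and averaging the pointwise bound against a probability measure — is exactly the textbook argument from the cited source, and your remarks on the measurability bookkeeping correctly identify the only points requiring care.
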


The basic mechanism we look at is the Gaussian mechanism, which 
adds Gaussian noise to the output of a function~\citep{dworkOurDataOurselves2006}:
\begin{definition}\label{def:gaussian-mech}
    The Gaussian mechanism $G$ for a function $f\colon \calx \to \R^d$ 
    and noise variance $\sigma^2$
    is $G(x) = f(x) + \eta$, where $\eta \sim \caln_d(0, \sigma^2I_d)$.
\end{definition}
The privacy bounds of the Gaussian mechanism depend on the sensitivity
of $f$.
\begin{definition}\label{def:sensitivity}
    The sensitivity of a function $f$ is $\Delta = \sup_{x\sim x'}||f(x) - f(x')||_2$.
\end{definition}
Tight privacy bounds for the Gaussian mechanism were derived by 
\citet{balleImprovingGaussianMechanism2018}. We omit the exact expression here,
as it is not important for this work.

\subsection{Composition of Differential Privacy}\label{sec:dp-composition}
Another useful property of DP is composition, which means that 
running multiple DP algorithms in succession degrades the privacy 
bounds in a predictable way. The tight composition theorem we are interested 
in is most easily expressed through privacy loss random 
variables~\citep{sommerPrivacyLossClasses2019} and 
dominating pairs~\citep{zhuOptimalAccountingDifferential2022}, 
which we introduce next.

\begin{definition}\label{def:hockey-stick-divergence}
    For $\alpha \geq 0$ and random variables $P, Q$, the hockey-stick divergence is 
    \begin{equation}
        H_\alpha(P, Q) = \Exp_{t\sim Q}\left(\left(\frac{\dx P}{\dx Q}(t) - \alpha\right)_+\right),
    \end{equation}
    where $(x)_+ = \max\{x, 0\}$ and $\frac{\dx P}{\dx Q}$ is the Radon-Nikodym
    derivative, which simplifies to the density ratio if $P$ and $Q$ are 
    continuous.
\end{definition}
Differential privacy can be characterised with the hockey-stick divergence:
$\calm$ is $(\epsilon, \delta)$-DP if and only if 
$\sup_{x\sim x'} H_{e^\epsilon}(\calm(x), \calm(x')) \leq \delta$~\citep{zhuOptimalAccountingDifferential2022}.

\begin{definition}[\citealt{zhuOptimalAccountingDifferential2022}]\label{def:dominating-pair}
    A pair of random variables $P, Q$ is called a dominating pair for mechanism 
    $\calm$ if for all $\alpha \geq 0$,
    \begin{equation}
        \sup_{x\sim x'} H_\alpha(\calm(x), \calm(x')) \leq H_\alpha(P, Q).
    \end{equation}
\end{definition}
Dominating pairs allow computing privacy bounds for a mechanism from simpler 
distributions than the mechanism itself. A dominating pair for the 
Gaussian mechanism is $P = \caln(0, \sigma^2)$, 
$Q = \caln(\Delta, \sigma^2)$~\citep{KoskelaJH20}.

\begin{definition}[\citealt{sommerPrivacyLossClasses2019}]\label{def:plrv}
    For random variables $P, Q$, the privacy loss function is
    $\call(t) = \ln \frac{\dx P}{\dx Q}(t)$ and the 
    privacy loss random variable (PLRV) is $L = \call(T)$
    where $T\sim P$.
\end{definition}
The PLRV of a mechanism is formed by setting $P$ and $Q$ to be a 
dominating pair for the mechanism. For the Gaussian mechanism,
the PLRV is $\caln(\mu, 2\mu)$ with 
$\mu = \frac{\Delta^2}{2\sigma^2}$~\citep{sommerPrivacyLossClasses2019}.

The PLRV allows computing the privacy bounds of the mechanism
with a simple expectation~\citep{sommerPrivacyLossClasses2019}: 
\begin{equation}
    \delta(\epsilon) = \Exp_{s\sim L}((1 - e^{\epsilon - s})_+).\label{eq:plrv-to-delta}
\end{equation}
PLRVs also allow expressing the tight composition theorem in a simple way.
\begin{theorem}[\citealt{sommerPrivacyLossClasses2019}]\label{thm:plrv-composition}
    If $L_1, \dotsc, L_T$ are the PLRVs for mechanisms $\calm_1, \dotsc, \calm_T$,
    the PRLV of the adaptive composition of $\calm_1, \dotsc, \calm_T$ is 
    the convolution of $L_1, \dotsc, L_T$, which is the distribution of 
    $L_1 + \dotsb + L_T$.
\end{theorem}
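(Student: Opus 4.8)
The plan is to reduce the statement to product (tensor) distributions, via the fact that a dominating pair for the adaptive composition can be taken to be the coordinatewise product of the dominating pairs of the individual mechanisms. Recall that the adaptive composition $\calm$ on input $x$ samples $y_1\sim\calm_1(x)$, then $y_2\sim\calm_2(x;y_1)$, and so on up to $y_T\sim\calm_T(x;y_1,\dotsc,y_{T-1})$, outputting the transcript $(y_1,\dotsc,y_T)$; by hypothesis each $\calm_i(\,\cdot\,;a)$ admits the dominating pair $(P_i,Q_i)$ uniformly over the auxiliary input $a$. Since the PLRV of $\calm$ is by definition constructed from a dominating pair for $\calm$, it suffices to (i) show that $(P_1\otimes\dotsb\otimes P_T,\ Q_1\otimes\dotsb\otimes Q_T)$ is such a dominating pair, and (ii) identify the privacy loss random variable of this product pair with $L_1+\dotsb+L_T$.

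Part (ii) is routine: the Radon--Nikodym derivative of a product measure with respect to another factorises, $\frac{\dx(P_1\otimes\dotsb\otimes P_T)}{\dx(Q_1\otimes\dotsb\otimes Q_T)}(t_1,\dotsc,t_T)=\prod_{i=1}^T\frac{\dx P_i}{\dx Q_i}(t_i)$, so the privacy loss function of the product pair is $\call(t_1,\dotsc,t_T)=\sum_{i=1}^T\call_i(t_i)$ with $\call_i=\ln\frac{\dx P_i}{\dx Q_i}$. Evaluating at a draw $(T_1,\dotsc,T_T)\sim P_1\otimes\dotsb\otimes P_T$, which has independent coordinates $T_i\sim P_i$, yields a PLRV equal in distribution to $\sum_i\call_i(T_i)=\sum_i L_i$ with the $L_i$ independent, i.e.\ exactly the convolution $L_1*\dotsb*L_T$. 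As a consistency check, substituting this into \eqref{eq:plrv-to-delta} reproduces the composition bound $\delta(\epsilon)=\Exp_{s\sim L_1*\dotsb*L_T}\!\left((1-e^{\epsilon-s})_+\right)$.

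Part (i) is the crux and the main obstacle. By induction it reduces to $T=2$. Fixing neighbours $x\sim x'$ and conditioning on $y_1$ gives the exact identity $H_\alpha(\calm(x),\calm(x'))=\Exp_{y_1\sim\calm_1(x')}\!\left[r(y_1)\,H_{\alpha/r(y_1)}\big(\calm_2(x;y_1),\calm_2(x';y_1)\big)\right]$, where $r=\frac{\dx\calm_1(x)}{\dx\calm_1(x')}$. Applying the uniform domination bound for $\calm_2$ inside the expectation (this is precisely where the adaptivity of $\calm_2$ on $y_1$ is harmless), and then using that a dominating pair dominates not merely the hockey-stick divergences but every $f$-divergence — which holds because each $f$-divergence is a mixture of hockey-stick divergences, and $r\mapsto r\,H_{\alpha/r}(P_2,Q_2)$ is a convex such integrand — bounds the right-hand side by $\Exp_{y_1\sim Q_1}\!\left[r^*(y_1)\,H_{\alpha/r^*(y_1)}(P_2,Q_2)\right]$ with $r^*=\frac{\dx P_1}{\dx Q_1}$, which is exactly $H_\alpha(P_1\otimes P_2,Q_1\otimes Q_2)$ by running the conditioning identity in reverse. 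The fiddly points I expect are the $f$-divergence domination lemma and the measure-theoretic edge cases — if some $P_i\not\ll Q_i$, then $\call_i=+\infty$ on a set of positive $P_i$-mass, so the sum and convolution must be read on $[-\infty,\infty]$ with the convention $e^{-\infty}=0$, and \eqref{eq:plrv-to-delta} still applies. In a write-up I would cite the adaptive-composition theorem for dominating pairs of \citet{zhuOptimalAccountingDifferential2022} for part (i) rather than reprove it from scratch.
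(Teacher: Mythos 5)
The paper does not prove Theorem~\ref{thm:plrv-composition}; it is quoted as a known result of \citet{sommerPrivacyLossClasses2019}, so there is no in-paper argument to compare against. Your reconstruction is correct and follows the now-standard dominating-pair route of \citet{zhuOptimalAccountingDifferential2022}: part~(ii) (factorisation of the Radon--Nikodym derivative of the product pair, hence additivity of the privacy loss over independent coordinates) is indeed routine, and part~(i) is exactly the adaptive composition theorem for dominating pairs. Your sketch of part~(i) is sound: the conditioning identity $H_\alpha(\calm(x),\calm(x'))=\Exp_{y_1\sim\calm_1(x')}[r(y_1)H_{\alpha/r(y_1)}(\calm_2(x;y_1),\calm_2(x';y_1))]$ is the right disintegration, and the function $g(r)=rH_{\alpha/r}(P_2,Q_2)=\Exp_{t\sim Q_2}\bigl[(r\,\tfrac{\dx P_2}{\dx Q_2}(t)-\alpha)_+\bigr]$ is a nonnegative mixture of hockey-stick integrands with nonnegative thresholds, so the bound $D_g(\calm_1(x)\Vert\calm_1(x'))\le D_g(P_1\Vert Q_1)$ follows from the hockey-stick domination of $(P_1,Q_1)$ by linearity alone — you do not even need the full ``all $f$-divergences'' lemma. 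Two small caveats worth making explicit in a write-up: the hypothesis must be that $(P_i,Q_i)$ dominates $\calm_i(\cdot;a)$ uniformly over the auxiliary input $a$ (you state this, but it is not literal in the theorem as quoted), and the conclusion should be read as ``the product pair is a dominating pair whose PLRV is the convolution'' — the canonical convention under which the composition's PLRV is defined — rather than a claim that this pair is tightly dominating for the composed mechanism. Deferring part~(i) to \citet{zhuOptimalAccountingDifferential2022} is appropriate.
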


%%%%%%%%%%%%%%%%%
\subsection{Differentially private SGD}\label{sec:dp-sgd}
Differentially private SGD 
(DP-SGD)~\citep{songStochasticGradientDescent2013,bassilyPrivateEmpiricalRisk2014,abadiDeepLearningDifferential2016} 
is one of the most important DP algorithms. Making SGD private 
requires \emph{clipping} the gradients to bound their sensitivity to a threshold $C$:
$\clip_C(x) = \frac{x}{||x||_2}\min\{||x||_2, C\}$, $\clip_C(0) = 0$. Then noise is 
added to the sum of the clipped gradients with the Gaussian mechanism, 
so DP-SGD uses 
\begin{equation}
    \label{eq:dp_sgd_gradients}
    G_{DP} = \sum_{i\in \calb}\clip_C(g_i) + \caln(0, \sigma^2I_d)
\end{equation}
in place of the non-private sum of gradients. $G_{DP}$ can also be used in
adaptive versions of SGD like Adam. 

To compute the privacy bounds for DP-SGD, we need to account for 
the \emph{subsampling amplification} that comes from the subsampling
in SGD. This requires fixing the subsampling scheme. We consider 
Poisson subsampling, where each datapoint in $\calb$ is included 
in the subsample with probability $q$, independently of any other 
datapoints.

When the neighbourhood relation is add/remove, the Poisson 
subsampled Gaussian mechanism, of which DP-SGD is an instance of, 
has the dominating pair~\citep{KoskelaJH20}
\begin{equation}
    \label{eq:dominating-pair}
    \begin{split}
        P &= q\caln(\Delta, \sigma^2) + (1 - q)\caln(0, \sigma^2), \\
        Q &= \caln(0, \sigma^2).
    \end{split}
\end{equation}
With this dominating pair, we can form the PLRV for the mechanism, take 
a $T$-fold convolution for $T$ iterations, and
compute the privacy bounds from the expectation in \eqref{eq:plrv-to-delta}. 
The computation is not 
trivial, but can be done using numerical privacy 
accountants~\citep{KoskelaJH20,koskelaTightDifferentialPrivacy2021,GopiLW21,doroshenkoConnectDotsTighter2022a,alghamdiSaddlePointMethodDifferential2023}.
These accountants are used by libraries implementing DP-SGD like 
Opacus~\citep{yousefpourOpacusUserFriendlyDifferential2021a}
to compute privacy bounds in practice.

\subsection{Kullback-Leibler Divergence and Total Variation Distance}

Our proofs use two notions of distance between random variables, \emph{total variation distance} 
and \emph{Kullback-Leibler (KL) divergence}~\citep{kullbackInformationSufficiency1951}.
\begin{definition}
    Let $P$ and $Q$ be random variables.
    \begin{enumerate}
        \item 
        The total variation distance between $P$ and $Q$ is 
        \begin{equation}
            \TV(P, Q) = \sup_A |\Pr(P\in A) - \Pr(Q \in A)|.
        \end{equation}
        The supremum is over all measurable sets $A$.
        \item 
        The KL divergence between $P$ and $Q$ with densities $p(t)$ and $q(t)$ is 
        \begin{equation}
            \KL(P, Q) = \Exp_{t\sim P}\left(\ln \frac{p(t)}{q(t)}\right).
        \end{equation}
    \end{enumerate}
\end{definition}
The two notions of distance are related by Pinsker's inequality.
\begin{lemma}[\citealp{kelbertSurveyDistancesMost2023}]
    For random variables $P, Q$,
    \begin{equation}
        \TV(P, Q) \leq \sqrt{\frac{1}{2}\KL(P, Q)}.
    \end{equation}
\end{lemma}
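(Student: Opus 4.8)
The plan is to reduce the general inequality to the two-point (Bernoulli) case and then verify that case by elementary calculus. First, fix a measurable set $A$ and set $p = \Pr(P \in A)$ and $q = \Pr(Q \in A)$. Applying the deterministic map $t \mapsto \indic{t \in A}$ to both $P$ and $Q$ turns them into $\mathrm{Bernoulli}(p)$ and $\mathrm{Bernoulli}(q)$, and KL divergence cannot increase under such a push-forward. This data-processing step follows from the log-sum inequality $\sum_i a_i \ln(a_i/b_i) \ge \big(\sum_i a_i\big)\ln\big(\sum_i a_i / \sum_i b_i\big)$ applied to the two cells of the partition $\{A, A^c\}$, and it gives $\KL(P, Q) \ge d(p \| q)$, where $d(p\|q) = p\ln\frac{p}{q} + (1 - p)\ln\frac{1 - p}{1 - q}$ is the binary relative entropy.

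Next I would establish the scalar bound $d(p \| q) \ge 2(p - q)^2$ for all $p, q \in [0, 1]$. Fixing $q$ and letting $f(p) = d(p \| q) - 2(p - q)^2$, one checks that $f(q) = 0$, that $f'(p) = \ln\frac{p}{q} - \ln\frac{1-p}{1-q} - 4(p - q)$ so $f'(q) = 0$, and that $f''(p) = \frac{1}{p(1 - p)} - 4 \ge 0$ since $p(1 - p) \le \frac14$. Thus $f$ is convex with a critical point at $p = q$, so its minimum value is $f(q) = 0$, i.e.\ $f \ge 0$.

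Combining the two steps, $|p - q| \le \sqrt{\tfrac12\, d(p \| q)} \le \sqrt{\tfrac12 \KL(P, Q)}$ for every measurable $A$. Taking the supremum over $A$ on the left and recalling $\TV(P, Q) = \sup_A |\Pr(P \in A) - \Pr(Q \in A)|$ yields the stated inequality.

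The only mildly delicate point is the first step: justifying that passing to the binary partition $\{A, A^c\}$ can only decrease the KL divergence. I expect this data-processing (log-sum) argument to be the main obstacle to a fully self-contained write-up; everything after it is a routine one-variable convexity computation. If one is content to cite the data-processing inequality for KL as standard, the proof becomes essentially immediate.
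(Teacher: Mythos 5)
Your proof is correct. Note, however, that the paper does not prove this lemma at all: it is Pinsker's inequality, stated with a citation to a survey and used as a known black box. What you have written is the standard textbook derivation --- reduce to the binary case via the data-processing (log-sum) inequality applied to the partition $\{A, A^c\}$, then show $d(p\|q) \geq 2(p-q)^2$ by the second-derivative computation $f''(p) = \frac{1}{p(1-p)} - 4 \geq 0$ --- and both steps check out (your derivative calculations are right, and the endpoint/degenerate cases $p \in \{0,1\}$ or $P \not\ll Q$ are handled by the usual conventions, the latter making the KL infinite so the bound is vacuous). The only thing you flag as delicate, the data-processing step, is indeed the one place where a fully self-contained write-up needs the log-sum inequality or Jensen applied to $t \mapsto t\ln t$; citing that as standard is entirely reasonable, and is no less rigorous than what the paper itself does, which is to cite the whole inequality.
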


\section{Accounting Oracles}\label{sec:accounting-oracles}

We use the concept of \emph{accounting oracles}~\citep{tajeddinePrivacypreservingDataSharing2020}
to make formalising properties of privacy accounting easier. The accounting oracle 
is the ideal privacy accountant that numerical accountants aim to approximate.

\begin{definition}
    For a mechanism $\calm$, the accounting oracle $\AO_\calm(\epsilon)$
    returns the smallest $\delta$ such that $\calm$ is $(\epsilon, \delta)$-DP.
\end{definition}
In case $\calm$ has hyperparameters that affect its privacy bounds, these 
hyperparameters will also be arguments of $\AO_\calm$.
We write the accounting oracle for the $T$-fold composition of the Poisson subsampled 
Gaussian mechanism as 
$\AO_S(\sigma, \Delta, q, T, \epsilon)$, and the accounting oracle of a composition of 
the Gaussian mechanism as $\AO_G(\sigma, \Delta, T, \epsilon)$.

Accounting oracles make it easy to formally express symmetries of privacy accounting. 
For example, privacy bounds are invariant to post-processing with a bijection.
\begin{lemma}\label{lemma:bijective-post-processing-immunity}
    Let $\calm$ be a mechanism and $f$ be a bijection. Then
    %\begin{equation}
        $\AO_\calm(\epsilon) = \AO_{f\circ \calm}(\epsilon)$.
    %\end{equation}
\end{lemma}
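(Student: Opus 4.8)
The plan is to show that the set of admissible privacy parameters is literally the same for $\calm$ and $f\circ\calm$, from which equality of the smallest admissible $\delta$ is immediate. Concretely, writing $D_\calm = \{\delta \in [0,1] : \calm \text{ is } (\epsilon,\delta)\text{-DP}\}$ and similarly $D_{f\circ\calm}$, I would prove $D_\calm = D_{f\circ\calm}$ and then conclude with the definition $\AO_\calm(\epsilon) = \min D_\calm$, $\AO_{f\circ\calm}(\epsilon) = \min D_{f\circ\calm}$.

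For the inclusion $D_\calm \subseteq D_{f\circ\calm}$, fix $\delta \in D_\calm$; then $\calm$ is $(\epsilon,\delta)$-DP, and \Cref{thm:dp-post-processing-immunity} applied with the post-processing map $f$ gives that $f\circ\calm$ is $(\epsilon,\delta)$-DP, i.e.\ $\delta \in D_{f\circ\calm}$. For the reverse inclusion I would use that $f$ is a bijection: its inverse $f^{-1}$ exists, and $\calm = f^{-1}\circ(f\circ\calm)$ as mechanisms, since $f^{-1}(f(\calm(x))) = \calm(x)$ for every input $x$. Hence if $\delta \in D_{f\circ\calm}$, applying \Cref{thm:dp-post-processing-immunity} once more, now with the post-processing map $f^{-1}$, yields that $\calm$ is $(\epsilon,\delta)$-DP, so $\delta\in D_\calm$. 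Combining the two inclusions gives $D_\calm = D_{f\circ\calm}$ and therefore $\AO_\calm(\epsilon) = \AO_{f\circ\calm}(\epsilon)$.

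The only subtlety — and the one place to be careful — is that \Cref{thm:dp-post-processing-immunity} is stated for (randomised) algorithms, so to legitimately apply it with $f^{-1}$ in the second step we need $f^{-1}$ to be an admissible post-processing map, i.e.\ measurable. This is automatic if we read ``bijection'' in the statement as a bi-measurable bijection (a Borel isomorphism), which is the relevant notion here and covers all the cases we invoke later, such as coordinatewise affine maps; I would state this assumption explicitly. Everything else is a direct two-line application of post-processing immunity in both directions, so there is no real computational content and no serious obstacle beyond this measurability bookkeeping.
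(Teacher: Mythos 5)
Your proof is correct and uses essentially the same argument as the paper: apply post-processing immunity once with $f$ and once with $f^{-1}$, which the bijectivity of $f$ makes available, to get the two inequalities (equivalently, equality of the admissible $\delta$ sets). The remark about $f^{-1}$ needing to be measurable is a fair point of hygiene that the paper leaves implicit, but it does not change the argument.
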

\begin{proof}
    This follows by using post-processing immunity for $f$ to show that 
    $\AO_{f \circ \calm}(\epsilon) \leq \AO_\calm(\epsilon)$ and for 
    $f^{-1}$ to show that $\AO_{f \circ \calm}(\epsilon) \geq \AO_\calm(\epsilon)$.
\end{proof}
We can also formalise the lemma that considering 
$\Delta = 1$ is sufficient when analysing the (subsampled) Gaussian mechanism.
\begin{lemma}\label{lemma:sensitivity-to-noise-exchange}
    \begin{align}
        \AO_G(\sigma, \Delta, T, \epsilon) &= \AO_G(\sigma/\Delta, 1, T, \epsilon), \\
        \AO_S(\sigma, \Delta, q, T, \epsilon) &= \AO_S(\sigma/\Delta, 1, q, T, \epsilon).
    \end{align}
\end{lemma}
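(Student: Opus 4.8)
The plan is to prove both identities by the same mechanism: exhibit an explicit bijective rescaling of $\R^d$ that transforms the $\Delta$-sensitivity (subsampled) Gaussian mechanism into the unit-sensitivity one with noise scale $\sigma/\Delta$, and then invoke Lemma~\ref{lemma:bijective-post-processing-immunity}. Concretely, let $\calm$ denote the $T$-fold composition of the (subsampled) Gaussian mechanism with parameters $(\sigma, \Delta, \dotsc)$ acting on some function $f$ of sensitivity $\Delta$, and consider the map $f_{1/\Delta}(z) = z/\Delta$, applied coordinate-wise to each of the $T$ released vectors. This is a bijection on $(\R^d)^T$, so by Lemma~\ref{lemma:bijective-post-processing-immunity} we get $\AO_\calm(\epsilon) = \AO_{f_{1/\Delta}\circ\calm}(\epsilon)$. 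It then remains to identify $f_{1/\Delta}\circ\calm$ as exactly the mechanism with parameters $(\sigma/\Delta, 1, \dotsc)$.

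First I would handle the single-iteration Gaussian case to fix ideas: $G(x) = f(x) + \eta$ with $\eta\sim\caln_d(0,\sigma^2 I_d)$, so $G(x)/\Delta = f(x)/\Delta + \eta/\Delta$. The rescaled signal $f/\Delta$ has sensitivity $\sup_{x\sim x'}\|f(x)/\Delta - f(x')/\Delta\|_2 = \Delta/\Delta = 1$, and $\eta/\Delta \sim \caln_d(0, (\sigma/\Delta)^2 I_d)$ by the scaling property of Gaussians. Hence $f_{1/\Delta}\circ G$ is precisely the Gaussian mechanism for the unit-sensitivity function $f/\Delta$ with noise scale $\sigma/\Delta$, giving the first identity for $T=1$. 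For the $T$-fold composition one applies the same bijection in each coordinate block; since composition of the rescaled per-iteration mechanisms is the same as rescaling the composed mechanism (the post-processing commutes with the adaptive composition because each iteration's output is independently rescaled), $\AO_G(\sigma,\Delta,T,\epsilon) = \AO_G(\sigma/\Delta, 1, T, \epsilon)$.

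For the subsampled case, the only extra ingredient is that Poisson subsampling with rate $q$ acts on \emph{which} datapoints enter the clipped sum, and is entirely independent of the ambient rescaling of the output space; rescaling the released vector by $1/\Delta$ turns $\sum_{i\in\calb}\clip_C(g_i) + \caln(0,\sigma^2 I_d)$ into a sum of rescaled clipped gradients plus $\caln(0,(\sigma/\Delta)^2 I_d)$, and the sensitivity-$\Delta$ summand rescales to a sensitivity-$1$ summand exactly as above, with $q$ and $T$ untouched. Alternatively — and perhaps more cleanly — one can argue directly at the level of dominating pairs: the pair in \eqref{eq:dominating-pair} is $P = q\caln(\Delta,\sigma^2) + (1-q)\caln(0,\sigma^2)$, $Q = \caln(0,\sigma^2)$, and applying $t\mapsto t/\Delta$ to both sends this to $P' = q\caln(1,(\sigma/\Delta)^2) + (1-q)\caln(0,(\sigma/\Delta)^2)$, $Q' = \caln(0,(\sigma/\Delta)^2)$, which is the dominating pair for parameters $(\sigma/\Delta, 1, q)$; since hockey-stick divergences (and hence $\AO$ via $T$-fold convolution of PLRVs and \eqref{eq:plrv-to-delta}) are invariant under a common bijective transformation of $P$ and $Q$, the second identity follows.

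The argument is essentially routine, so I do not anticipate a genuine obstacle; the one point requiring a little care is making precise that post-processing by a coordinate-wise bijection commutes with \emph{adaptive} composition — i.e. that $f_{1/\Delta}\circ(\calm_1\oplus\dotsb\oplus\calm_T)$ equals the adaptive composition of $f_{1/\Delta}\circ\calm_1, \dotsc, f_{1/\Delta}\circ\calm_T$ — since later iterations may depend on earlier outputs. This holds because $f_{1/\Delta}$ is a bijection, so an adaptive adversary seeing the rescaled outputs has exactly the same information as one seeing the originals; equivalently, it is cleanest to sidestep this entirely and use the dominating-pair route above, where the $T$-fold convolution of PLRVs from Theorem~\ref{thm:plrv-composition} makes the $T$-dependence transparent and the rescaling invariance is a one-line change of variables in the hockey-stick divergence.
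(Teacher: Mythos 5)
Your proposal is correct and follows essentially the same route as the paper: rescale the output by $1/\Delta$ (the paper equivalently writes the original mechanism as $\Delta$ times the unit-sensitivity mechanism), observe that the rescaled function has sensitivity $1$ and the rescaled noise is $\caln_d(0,(\sigma/\Delta)^2 I_d)$, invoke Lemma~\ref{lemma:bijective-post-processing-immunity}, and apply the transformation per iteration of the composition. Your extra care about commuting the bijection with adaptive composition, and the alternative dominating-pair argument, go beyond what the paper records but do not change the substance.
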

\begin{proof}
    Let $\calm$ be the (subsampled) Gaussian mechanism. Then 
    \begin{equation}
        \calm(x) = f(x) + \eta = \Delta\left(\frac{1}{\Delta} f(x) + \frac{1}{\Delta}\eta \right)
    \end{equation}
    with $\eta \sim \caln_d(0, \sigma^2 I_d)$.
    The sensitivity of $\frac{1}{\Delta}f(x)$ is 1, so the part inside parenthesis
    in the last expression is a (subsampled) Gaussian mechanism with sensitivity 1
    and noise standard deviation $\sigma/\Delta$. Multiplying by $\Delta$ is bijective
    post-processing, so that mechanism must have the same privacy bounds 
    as the original mechanism. In a composition, this transformation can be done separately 
    for each individual mechanism of the composition.
\end{proof}
Since considering $\Delta = 1$ when analysing the subsampled Gaussian mechanism is enough 
by Lemma~\ref{lemma:sensitivity-to-noise-exchange}, we occasionally shorten 
$\AO_S(\sigma, 1, q, T, \epsilon)$ to $\AO_S(\sigma, q, T, \epsilon)$.

The next lemma and corollary show that mechanisms close in total variation 
distance also have similar privacy bounds.
\begin{lemma}\label{lemma:dp-total-variation-distance-bound}
    Let $\calm$ be an $(\epsilon, \delta)$-DP mechanism, and let 
    $\calm'$ be a mechanism with 
    \begin{equation}
        \sup_x \TV(\calm(x), \calm'(x)) \leq d,
    \end{equation}
    for some $d \geq 0$.
    Then $\calm'$ is $(\epsilon, \delta + (1 + e^\epsilon)d)$-DP.
\end{lemma}
\begin{proof}
    For any measurable set $A$,
    \begin{equation}
        \begin{split}
            \Pr(\calm'(x) \in A) 
            &\leq \Pr(\calm(x) \in A) + d
            \\&\leq e^\epsilon\Pr(\calm(x') \in A) + \delta + d
            \\&\leq e^\epsilon(\Pr(\calm'(x') \in A) + d) + \delta + d
            \\&= e^\epsilon\Pr(\calm'(x') \in A) + \delta 
            + (1 + e^\epsilon)d.
        \end{split}
    \end{equation}
\end{proof}
Lemma~\ref{lemma:dp-total-variation-distance-bound} can also be expressed with 
accounting oracles.
\begin{corollary}\label{corollary:ao-total-variation-distance-bound}
    Let $\AO_\calm$ and $\AO_{\calm'}$ be the accounting oracles for mechanisms
    $\calm$ and $\calm'$, respectively. If 
    \begin{equation}
        \sup_x \TV(\calm(x), \calm'(x)) \leq d.
    \end{equation}
    then
    \begin{equation}
        |\AO_\calm(\epsilon) - \AO_{\calm'}(\epsilon)| \leq (1 + e^\epsilon)d.
    \end{equation}
\end{corollary}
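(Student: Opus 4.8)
The plan is to derive this directly from Lemma~\ref{lemma:dp-total-variation-distance-bound} together with the symmetry of the total variation distance. First I would observe that the infimum defining $\AO_\calm(\epsilon)$ is attained: by the hockey-stick characterisation of DP, $\calm$ is $(\epsilon,\delta)$-DP exactly when $\delta \geq \sup_{x\sim x'} H_{e^\epsilon}(\calm(x),\calm(x'))$, so the set of admissible $\delta$ is a closed half-line (intersected with $[0,1]$) and $\calm$ is in fact $(\epsilon, \AO_\calm(\epsilon))$-DP.

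Next, set $\delta = \AO_\calm(\epsilon)$. Since $\calm$ is $(\epsilon,\delta)$-DP and $\sup_x \TV(\calm(x),\calm'(x)) \leq d$, Lemma~\ref{lemma:dp-total-variation-distance-bound} gives that $\calm'$ is $(\epsilon, \delta + (1+e^\epsilon)d)$-DP. By the minimality in the definition of the accounting oracle, $\AO_{\calm'}(\epsilon) \leq \delta + (1+e^\epsilon)d = \AO_\calm(\epsilon) + (1+e^\epsilon)d$.

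Then I would use that $\TV(P,Q) = \TV(Q,P)$, so the hypothesis also reads $\sup_x \TV(\calm'(x),\calm(x)) \leq d$. Repeating the previous step with the roles of $\calm$ and $\calm'$ exchanged yields $\AO_\calm(\epsilon) \leq \AO_{\calm'}(\epsilon) + (1+e^\epsilon)d$. Combining the two inequalities gives $|\AO_\calm(\epsilon) - \AO_{\calm'}(\epsilon)| \leq (1+e^\epsilon)d$, which is the claim.

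There is no substantial obstacle here; the corollary is essentially a restatement of the lemma in the language of accounting oracles. The only point that needs a moment of care is the attainment of the infimum in the definition of $\AO$, which I would handle as above — or, if one prefers to avoid the hockey-stick characterisation, by a limiting argument: take $\delta_n \downarrow \AO_\calm(\epsilon)$ with $\calm$ being $(\epsilon,\delta_n)$-DP, apply Lemma~\ref{lemma:dp-total-variation-distance-bound} to each $\delta_n$ to get $\AO_{\calm'}(\epsilon) \le \delta_n + (1+e^\epsilon)d$, and let $n\to\infty$.
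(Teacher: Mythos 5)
Your proof is correct and follows essentially the same route as the paper's: both directions come from applying Lemma~\ref{lemma:dp-total-variation-distance-bound} with the roles of $\calm$ and $\calm'$ exchanged (the paper phrases the lower bound as a contradiction, you phrase it via the symmetry of $\TV$, which is the same argument). Your extra remark on the attainment of the infimum in the definition of $\AO_\calm(\epsilon)$ is a point the paper silently assumes, and your treatment of it is fine.
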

\begin{proof}
    Let $\delta = \AO_\calm(\epsilon)$. By Lemma~\ref{lemma:dp-total-variation-distance-bound},
    $\calm'$ is $(\epsilon, (1 + e^\epsilon)d + \delta)$-DP, so
    \begin{equation}
        \AO_{\calm'}(\epsilon) \leq \delta + (1 + e^\epsilon)d. 
        \label{eq:corollary-ao-total-variation-distance-bound-1}
    \end{equation}
    If 
    $\AO_{\calm'}(\epsilon) < \delta - (1 + e^\epsilon)d$, 
    \begin{equation}
        \AO_\calm(\epsilon) < \delta - (1 + e^\epsilon)d + (1 + e^\epsilon)d = \delta
    \end{equation}
    by Lemma~\ref{lemma:dp-total-variation-distance-bound}, which is a contradiction,
    so 
    \begin{equation}
        \AO_{\calm'}(\epsilon) \geq \delta - (1 + e^\epsilon)d. 
        \label{eq:corollary-ao-total-variation-distance-bound-2}
    \end{equation}
    Combining \eqref{eq:corollary-ao-total-variation-distance-bound-1} and
    \eqref{eq:corollary-ao-total-variation-distance-bound-2},
    \begin{equation}
        |\AO_\calm(\epsilon) - \AO_{\calm'}(\epsilon)| \leq (1 + e^\epsilon)d. \qedhere
    \end{equation}
\end{proof}

%%%%%%%%%%%%%%%%%%%%%%%%%%%%%%%%%%%%%%%%%%
\section{DP-SGD noise decomposition}
\label{sec:noise-decomposition}
For now, let us denote the sum on the right in Equation \eqref{eq:dp_sgd_gradients},
the sum of clipped gradients with subsampling rate $q$, with $G_q$:
\begin{align}
    G_q = 
        \sum_{i\in \calb}\clip_C(g_i).
\end{align}
In each step of DP-SGD, we are releasing this sum using Gaussian perturbation.
However, due to the subsampling, our gradient sum can comprise of any number of terms between
$0$ and $N$. Therefore, before we do the step, we want to upscale the summed
gradient to approximate the magnitude of the full data gradient $G_1$. This is
typically done by scaling the $G_q$ with $N/|B|$ where $|B|$ denotes the size of
the minibatch. However, using the $|B|$ as such might leak private information.
Instead, we will use $1/q$ scaling for $G_q$, which also gives an unbiased
estimator of $G_1$:
\begin{align}
    \Exp \left[ \frac{1}{q}G_q - G_1 \right] 
        &= \Exp \left[ \frac{1}{q}\sum_{i \in [N]}b_i \tilde{g_i} - \sum_{i \in [N]} \tilde{g_i} \right] \\
        & = \sum_{i \in [N]} \tilde{g_i} \Exp \left[ \frac{b_i}{q} - 1 \right] = 0,
\end{align}
where $b_i \sim \mathrm{Bernoulli}(q)$ and $\tilde{g_i}$ denote the clipped per-example gradients.
Using Lemma \ref{lemma:sensitivity-to-noise-exchange}, we can decouple the
noise-scale and the clipping threshold $C$ and write the DP gradient $\dpG$ used to
update the parameters as
\begin{align}
    \label{eq:unbiased-grad}
    \dpG = \frac{1}{q} \left( G_q + C \sigma \eta\right),
\end{align}
where $\eta \sim \caln(0, I_d)$. The clipping threshold $C$ affects
the update as a constant scale independent of $q$, and therefore
for notational simplicity we set $C=1$. Since the subsampled 
gradient $G_q$ and the DP noise are independent, we can decompose the total gradient variance 
of the $j$th element of $\dpG$ as 
\begin{align}
    \label{eq:variance-decomp}
    \underbrace{\Var(\dpG_j)}_{\text{Total}} 
    = \underbrace{\frac{1}{q^2} \Var(G_{q, j})}_{\text{Subsampling}} 
    + \underbrace{\lp\frac{\sigma}{q}\rp^2}_{\text{Effective DP Noise}}.
\end{align}
The first component on the right in Equation \eqref{eq:variance-decomp}, the
subsampling variance, arises from the subsampling, so it is easy to see, as we
show in \cref{app:subsampling-variance}, that it is decreasing w.r.t.\ $q$,
which holds regardless of the clipping threshold assuming $C > 0$. For the rest
of the paper, we will use $\sigma_{\text{eff}}^2 := \sigma^2 / q^2$ to denote
the second component of the variance decomposition in \cref{eq:variance-decomp},
the effective noise variance.

In order to guarantee $(\epsilon, \delta)$-DP, the $\sigma$ term in $\sigma_{\text{eff}}$ needs to 
be adjusted based on the number of iterations $T$ and the subsampling rate 
$q$. Hence we will treat the $\sigma$ as a function of $q$ and $T$, and denote 
the smallest $\sigma$ that provides $(\epsilon, \delta)$-DP as $\sigma(q, T)$. 

Now, the interesting question is, at what rate does the $\sigma(q, T)$ grow w.r.t.~$q$? 
In \cref{eq:variance-decomp}, we saw that the effective variance arising from
the DP noise addition scales as $(\sigma(q,T) / q)^2$. 
In \cref{sec:asym-linear-sigma}, we show that as $T \rightarrow \infty$, 
$\sigma(q, T)$ becomes a linear function w.r.t.\ $q$.
Since $\sigma(1, T) = 1\cdot \sigma(1, T)$, this means that
\begin{equation}
    \sigma(q, T) \approx q\sigma(1, T). 
\end{equation}
This result implies that subsampling does not reduce the effective level of 
injected DP noise, while we would have more subsampling-induced noise arising from the first term
of Equation \eqref{eq:variance-decomp} for small values of $q$. 

In Section~\ref{sec:empirical-results} we study $\sigma_{\text{eff}}$ 
numerically, and find that as $T$ grows, the $\sigma_{\text{eff}}$ decreases 
monotonically towards $\sigma(1, T)$.

Furthermore, in \cref{sec:no-comp} we show that when $T=1$, the subsampling 
can in fact be harmful and smaller values $q$ incur a proportionally larger effective noise 
variance in the gradient updates.

%%%%%%%%%%%%%%%%%%%%%%%%%%%%%%%%%%%%%%%%%%
\section{Subsampled Gaussian Mechanism in the Limit of Many Compositions}
\label{sec:asym-linear-sigma}

Since privacy accounting for DP-SGD is based on the accounting oracle $\AO_S(\sigma, \Delta, q, T, \epsilon)$ for the Poisson
subsampled Gaussian mechanism, it suffices to study the behaviour of $\AO_S$, instead of DP-SGD directly.
This can be done in a much simpler setting. 

In the setting we use, there is a single datapoint $x\in \{0, 1\}$,
which is released $T$ times with the Poisson subsampled Gaussian mechanism:
\begin{equation}
    \calm_i(x) \sim xB_q + \caln(0, \sigma^2_T)
\end{equation}
for $1\leq t \leq T$, where $B_q$ is a Bernoulli random variable.
Since $\calm$ is a Poisson subsampled Gaussian mechanism, its accounting
oracle is $\AO_S(\sigma, 1, q, T, \epsilon)$.\footnote{
Considering $\Delta = 1$ suffices due to Lemma~\ref{lemma:sensitivity-to-noise-exchange}.
}
We have 
\begin{align}
    \Exp(\calm_i(x)) &= qx \\
    \Var(\calm_i(x)) &= x^2q(1 - q) + \sigma_T^2.
\end{align}
As $\sigma_T^2\to \infty$, the variance and $\sigma_T^2$ approach each other.
As a result, we can approximate $\calm_i$ with 
\begin{equation}
    \calm_i'(x) \sim \caln(qx, \sigma_T^2).
\end{equation}

To prove this, we first need to find a lower bound on how quickly 
$\sigma_T^2$ must grow as $T\to \infty$.
\begin{restatable}{theorem}{theoremsigmagrowth}\label{thm:sigma-growth}
    Let $\sigma_T$ be such that $\AO_S(\sigma_T, \Delta, q, T, \epsilon) \leq \delta$ for all 
    $T$, with $\delta < 1$. Then $\sigma_T^2 = \Omega(T)$.
\end{restatable}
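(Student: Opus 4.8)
The plan is to exhibit, for each $T$, a single ``distinguishing event'' for the $T$-fold composition mechanism whose probability under the two neighbouring inputs differs so much that $(\epsilon,\delta)$-DP with $\delta<1$ forces $\sigma_T^2$ to grow at least linearly in $T$. By Lemma~\ref{lemma:sensitivity-to-noise-exchange} it suffices to treat $\Delta=1$ (the general case only rescales the hidden constant by $\Delta^2$), so I would work with the single-datapoint mechanism $\calm$ of this section: $\calm(1)=(\calm_1(1),\dots,\calm_T(1))$ has law $P^{\otimes T}$ with $P=q\caln(1,\sigma_T^2)+(1-q)\caln(0,\sigma_T^2)$, and $\calm(0)$ has law $Q^{\otimes T}$ with $Q=\caln(0,\sigma_T^2)$. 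Since $\AO_S(\sigma_T,1,q,T,\epsilon)\le\delta$, the mechanism $\calm$ is $(\epsilon,\delta)$-DP, so for every measurable $A$,
\begin{equation}
    \delta \ge \Pr(\calm(1)\in A) - e^\epsilon\Pr(\calm(0)\in A).
\end{equation}

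First I would take $A=\{(t_1,\dots,t_T):\sum_{i=1}^T t_i > Tq/2\}$, i.e.\ thresholding the sum of the $T$ outputs halfway between its two typical values. Under $\calm(0)$ the sum is $\caln(0,T\sigma_T^2)$, so the Gaussian tail bound $\Pr(\caln(0,1)>u)\le\tfrac12 e^{-u^2/2}$ gives $\Pr(\calm(0)\in A)\le\tfrac12\exp(-Tq^2/(8\sigma_T^2))$. Under $\calm(1)$ the sum is $B+Z$ with $B\sim\mathrm{Binomial}(T,q)$ and $Z\sim\caln(0,T\sigma_T^2)$, hence has mean $Tq$ and variance $Tq(1-q)+T\sigma_T^2$, so Chebyshev's inequality gives $\Pr(\calm(1)\notin A)\le 4\big(q(1-q)+\sigma_T^2\big)/(Tq^2)$. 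Plugging these in,
\begin{equation}
    \delta \ge 1 - \frac{4\big(q(1-q)+\sigma_T^2\big)}{Tq^2} - \frac{e^\epsilon}{2}\exp\!\left(-\frac{Tq^2}{8\sigma_T^2}\right).
\end{equation}

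To finish, suppose for contradiction that $\sigma_T^2$ is not $\Omega(T)$. Then there is a subsequence $T_k\to\infty$ with $\sigma_{T_k}^2/T_k\to 0$. Along it, the first subtracted term tends to $0$ (its two pieces are $O(1/T_k)$ and $O(\sigma_{T_k}^2/T_k)$), and the second tends to $0$ because $T_k/\sigma_{T_k}^2\to\infty$; hence the right-hand side tends to $1$, contradicting $\delta<1$ for large $k$. Therefore $\sigma_T^2=\Omega(T)$, and restoring a general $\Delta$ via Lemma~\ref{lemma:sensitivity-to-noise-exchange} gives the claim.

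The one thing that needs care is that the distinguishing event must beat the multiplicative factor $e^\epsilon$: this is why I threshold the \emph{sum} rather than a single coordinate, since under $\calm(0)$ the event is then exponentially unlikely in $T/\sigma_T^2$, which swamps the fixed constant $e^\epsilon$, while under $\calm(1)$ it still has probability $1-O(\sigma_T^2/T)$. (A naive route through $\TV(P^{\otimes T},Q^{\otimes T})$ and a bound of the form $H_{e^\epsilon}\ge\TV-(e^\epsilon-1)$ would become vacuous once $\epsilon\ge\ln 2$, which is exactly what the explicit event circumvents.) The remaining subtlety is only bookkeeping: both error terms are controlled by $\sigma_T^2/T$, so they vanish on the same subsequence.
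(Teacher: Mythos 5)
Your proof is correct, but it takes a genuinely different route from the paper's. The paper works entirely inside the privacy-loss-random-variable framework: it lower-bounds the mean of the per-step PLRV by $q^2/(2\sigma^2)$ via a maximum-entropy argument, upper-bounds its variance by $1/\sigma^2+1/(4\sigma^4)$, applies Chebyshev to push the mass of the $T$-fold convolved PLRV above any fixed threshold when $\sigma_T^2\neq\Omega(T)$, and then carefully decomposes $\delta(\epsilon)=\Pr_{s\sim L_T}(s>\epsilon)-\Exp_{s\sim L_T}(I(s>\epsilon)e^{\epsilon-s})$, controlling the second term by a geometric series, to conclude $\delta(\epsilon)>\delta$. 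You instead exhibit an explicit distinguishing event (thresholding $\sum_i t_i$ at $Tq/2$) and plug it directly into the definition of $(\epsilon,\delta)$-DP: the Gaussian tail bound under $\calm(0)$ and Chebyshev under $\calm(1)$ both degenerate in terms of the single ratio $\sigma_T^2/T$, so along any subsequence with $\sigma_{T_k}^2/T_k\to 0$ the DP inequality forces $\delta\ge 1-o(1)$. All the individual steps check out (the variance of the sum under $\calm(1)$, the negation of $\Omega(T)$ as $\liminf\sigma_T^2/T=0$, and the reduction to $\Delta=1$), and your observation about why the event must beat the multiplicative $e^\epsilon$ factor is exactly the right one. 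Your argument is more elementary and self-contained --- no entropy bound, no PLRV mean/variance lemmas, no series manipulation --- and it yields an explicit finite-$T$ inequality rather than only an asymptotic contradiction; the paper's version has the advantage of staying within the dominating-pair/PLRV machinery used throughout the rest of the analysis. Either proof is acceptable.
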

\begin{proof}
    The idea of the proof is to show that unless $\sigma_T^2 = \Omega(T)$,
    the mean of the PLRV can obtain arbitrarily large values, and then
    show that this leads to violating the $\delta$ bound.
    We defer the full proof to Appendix~\ref{app:sigma-growth}.
\end{proof}

Now we can prove that the approximation is sound.
\begin{restatable}{theorem}{theoremsubsampledmechconvergence}\label{thm:subsampled-mech-convergence}
    For $1 \leq i \leq T$, let 
    \begin{align}
        \calm_i(x) &\sim xB_q + \caln(0, \sigma_T^2), \label{eq:simple-setting-mog}\\
        \calm_i'(x) &\sim \caln(qx, \sigma_T^2).
    \end{align}
    be independent for each $i$. Let $\calm_{1:T}$ be the composition
    of $\calm_i$, and let $\calm_{1:T}'$ be the composition of $\calm'_i$.
    Then
    \begin{equation}
        \sup_x \TV(\calm_{1:T}(x), \calm_{1:T}'(x)) \to 0
    \end{equation}
    as $T\to \infty$.
\end{restatable}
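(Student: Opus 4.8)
The plan is to control the total variation distance by the Kullback--Leibler divergence through Pinsker's inequality, exploiting that KL is additive over the independent composition. Since $x\in\{0,1\}$, only two cases arise, and for $x=0$ we have $\calm_i(0)=\caln(0,\sigma_T^2)=\calm_i'(0)$, so $\TV(\calm_{1:T}(0),\calm_{1:T}'(0))=0$ identically; hence it suffices to treat $x=1$. There the single‑step laws are the mixture $P := q\,\caln(1,\sigma_T^2)+(1-q)\,\caln(0,\sigma_T^2)$ and the Gaussian $P' := \caln(q,\sigma_T^2)$, which crucially have the \emph{same mean} $q$. Because the $\calm_i$ (resp.\ $\calm_i'$) are independent and non‑adaptive, $\calm_{1:T}(1)$ and $\calm_{1:T}'(1)$ are $T$‑fold product measures, so by additivity of KL over products $\KL(\calm_{1:T}(1),\calm_{1:T}'(1))=T\,\KL(P,P')$, and Pinsker gives $\TV(\calm_{1:T}(1),\calm_{1:T}'(1))\le\sqrt{\tfrac12 T\,\KL(P,P')}$. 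It therefore remains to show $\KL(P,P')=o(1/T)$.

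Next I would bound the single‑step KL divergence by a $\chi^2$‑type quantity with a closed form. Using the elementary inequality $u\ln u\le (u-1)+(u-1)^2$ for $u>0$ together with $\Exp_{t\sim P'}\big(\tfrac{\dx P}{\dx P'}(t)\big)=1$, one gets $\KL(P,P')=\Exp_{t\sim P'}\big(\tfrac{\dx P}{\dx P'}\ln\tfrac{\dx P}{\dx P'}\big)\le \chi^2(P,P') := \int \frac{(p-p')^2}{p'}$. Writing $\phi_\mu$ for the $\caln(\mu,\sigma_T^2)$ density, the Gaussian identity $\int \phi_a\phi_b/\phi_c = \exp\big((c-a)(c-b)/\sigma_T^2\big)$ applied to $p^2=q^2\phi_1^2+2q(1-q)\phi_1\phi_0+(1-q)^2\phi_0^2$ yields the exact expression
\begin{equation*}
    \chi^2(P,P')+1 = q^2 e^{(1-q)^2/\sigma_T^2} + 2q(1-q)e^{-q(1-q)/\sigma_T^2} + (1-q)^2 e^{q^2/\sigma_T^2}.
\end{equation*}
Expanding in $1/\sigma_T^2$, the order‑$0$ terms sum to $1$ and — this is where the matched means pay off — the order‑$1$ terms sum to $0$, leaving $\KL(P,P')\le\chi^2(P,P')=\tfrac{q^2(1-q)^2}{2}\,\sigma_T^{-4}+O(\sigma_T^{-6})=O(\sigma_T^{-4})$.

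Combining the two steps, $\KL(\calm_{1:T}(1),\calm_{1:T}'(1))=T\,\KL(P,P')=O(T/\sigma_T^4)$. Invoking Theorem~\ref{thm:sigma-growth} (applicable since $\sigma_T$ is taken large enough to keep the $T$‑fold composition $(\epsilon,\delta)$‑DP) gives $\sigma_T^2=\Omega(T)$, hence $T/\sigma_T^4=O(1/T)\to 0$, and Pinsker finishes $\sup_x\TV(\calm_{1:T}(x),\calm_{1:T}'(x))\to 0$. The one real obstacle is obtaining a single‑step bound that beats $1/T$: the naive route via subadditivity of total variation, $\TV(\calm_{1:T}(1),\calm_{1:T}'(1))\le T\,\TV(P,P')$, is hopeless, since a single mixture‑vs‑Gaussian TV distance only decays like $\sigma_T^{-1}$, giving a useless $O(T/\sigma_T)=O(\sqrt T)$ bound. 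One must instead use the quadratic control of KL/$\chi^2$ and, inside it, the exact first‑order cancellation from $P$ and $P'$ sharing the same mean (the remaining discrepancy being only the $O(1)$ variance gap $q(1-q)$ and higher moments against the $\sigma_T^2$ scale). Checking the inequality $u\ln u\le(u-1)+(u-1)^2$ and the Gaussian integral identity, and noting that the resulting $\chi^2$ expression is a finite sum of exponentials, are routine.
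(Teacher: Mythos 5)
Your proof is correct, and for the key technical step it takes a genuinely different and arguably cleaner route than the paper. Both arguments share the same skeleton: reduce to $x=1$, use additivity of KL over the $T$-fold product, apply Pinsker, and close the gap with $\sigma_T^2=\Omega(T)$ from Theorem~\ref{thm:sigma-growth}, so that any single-step bound of order $\sigma_T^{-4}=O(T^{-2})$ suffices. Where you diverge is in how that $O(\sigma_T^{-4})$ bound is obtained. The paper reparameterises by $u=1/\sigma_T$, forms the fourth-order Taylor expansion of $\Exp_x(f_x(u))$ at $u=0$ (with derivatives computed in Mathematica), checks that the second- and third-order terms vanish under the Gaussian expectation, and spends several lemmas (on ``PESP'' functions and integrable envelopes) justifying differentiation under the integral sign. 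You instead bound $\KL(P,P')\le\chi^2(P,P')$ via $u\ln u\le u(u-1)=(u-1)+(u-1)^2$ and $\Exp_{P'}[\dx P/\dx P']=1$, and then exploit that $\chi^2(P,P')+1=\int p^2/p'$ is an \emph{exact} finite sum of three exponentials by the identity $\int\phi_a\phi_b/\phi_c=\exp((c-a)(c-b)/\sigma_T^2)$; the first-order cancellation (the matched means) and the leading coefficient $\tfrac12 q^2(1-q)^2\sigma_T^{-4}$ then follow from elementary algebra, with no interchange-of-limits issues to justify since the expression is analytic in $1/\sigma_T^2$. Your leading constant is consistent with the paper's ($\KL\approx\tfrac12\chi^2$, and the paper finds $\tfrac14 q^2(1-q)^2 u^4$). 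Note that you take the KL arguments in the opposite order from the paper ($\KL(P,P')$ versus $\KL(P',P)$); this is immaterial since Pinsker controls the symmetric TV either way, and your order is exactly the one that makes the $\chi^2$ integral Gaussian-denominated and hence computable in closed form. Your closing remark that the naive bound $\TV(\calm_{1:T}(1),\calm_{1:T}'(1))\le T\,\TV(P,P')$ is insufficient is also correct and explains why the quadratic (KL or $\chi^2$) control is essential.
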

\begin{proof}
    It suffices to show
    \begin{equation}
        \sup_x T\cdot \KL(\calm_i'(x), \calm_i(x)) \to 0
    \end{equation}
    due to Pinsker's inequality and the additivity of KL divergence for products of independent
    random variables. When $x = 0$, the two mechanism are the same, so it suffices to 
    look at $x = 1$. The idea for the rest of the proof is to first reparameterise 
    $\sigma$ in terms of a variable $u$ with $u \to 0$ as $T\to \infty$ and then 
    use Taylor's theorem at $u = 0$ to find the convergence rate of the KL divergence.
    For the complete proof see Appendix~\ref{app:subsampled-mech-convergence}.
\end{proof}

The mechanism $\calm_i'$ is nearly a Gaussian mechanism, since 
\begin{equation}
    \frac{1}{q}\calm_i'(x) \sim x + \caln\left(0, \frac{\sigma_T^2}{q^2}\right).\label{eq:approx-mech-is-almost-gaussian}
\end{equation}
On the right is a Gaussian mechanism with noise standard deviation $\frac{\sigma_T}{q}$, which has
the linear relationship between $\sigma_T$ and $q$ we are looking for. In the next theorem, we use this 
to prove our main result.
\begin{theorem}\label{thm:subsampled-mech-ao-convergence}
    For any $\sigma$, $q_1$, $q_2$, $\Delta$ and $\epsilon$
    \begin{equation}
        |\AO_S(\sigma, \Delta, q_1, T, \epsilon) - \AO_S(\sigma\cdot q_2 / q_1, \Delta, q_2, T, \epsilon)| \to 0
    \end{equation}
    as $T \to \infty$.
\end{theorem}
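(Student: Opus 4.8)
The plan is to bridge the subsampled accounting oracle to that of a plain Gaussian mechanism through the Gaussian surrogate $\calm'_{1:T}$ of \cref{thm:subsampled-mech-convergence}, and then to notice that the bridging Gaussian mechanism is literally identical for the two parameter settings being compared. First I would invoke \cref{lemma:sensitivity-to-noise-exchange} to normalise $\Delta = 1$: with $\sigma' := \sigma/\Delta$ the claimed bound becomes $|\AO_S(\sigma', 1, q_1, T, \epsilon) - \AO_S(\sigma' q_2/q_1, 1, q_2, T, \epsilon)| \to 0$, so we may assume $\Delta = 1$. By the reduction set up just before \cref{thm:subsampled-mech-convergence}, $\AO_S(\sigma, 1, q, T, \epsilon)$ is exactly the accounting oracle of the single-datapoint composition $\calm_{1:T}$ with $\calm_i(x) \sim x B_q + \caln(0, \sigma^2)$, so the whole argument can be carried out in that simple setting. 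Here $\sigma$ should be read as a sequence $\sigma = \sigma_T$; the case of interest is when it grows like $\sqrt T$, which it must to keep $\delta < 1$ by \cref{thm:sigma-growth}.

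I would then carry out a two-step reduction for a generic pair $(\sigma, q)$. In the first step, \cref{corollary:ao-total-variation-distance-bound} applied to $\calm_{1:T}$ and its Gaussian surrogate $\calm'_{1:T}$ (where $\calm'_i(x) \sim \caln(qx, \sigma^2)$) gives
\begin{equation}
  |\AO_{\calm_{1:T}}(\epsilon) - \AO_{\calm'_{1:T}}(\epsilon)| \le (1 + e^\epsilon)\, d_T,
\end{equation}
with $d_T := \sup_x \TV(\calm_{1:T}(x), \calm'_{1:T}(x))$, and $d_T \to 0$ by \cref{thm:subsampled-mech-convergence}. In the second step, \eqref{eq:approx-mech-is-almost-gaussian} gives $\tfrac{1}{q}\calm'_i(x) \sim x + \caln(0, \sigma^2/q^2)$; dividing every coordinate of the composed output by $q$ is a bijection of $\R^T$, hence bijective post-processing of $\calm'_{1:T}$, so \cref{lemma:bijective-post-processing-immunity} identifies $\AO_{\calm'_{1:T}}(\epsilon)$ with $\AO_G(\sigma/q, 1, T, \epsilon)$, the accounting oracle of the $T$-fold composition of the Gaussian mechanism with sensitivity $1$ and noise standard deviation $\sigma/q$. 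Chaining the two steps, $\AO_S(\sigma, 1, q, T, \epsilon) - \AO_G(\sigma/q, 1, T, \epsilon) \to 0$.

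Now I would apply this once with $(\sigma', q_1)$ and once with $(\sigma' q_2/q_1, q_2)$: the first becomes close to $\AO_G(\sigma'/q_1, 1, T, \epsilon)$ and the second to $\AO_G((\sigma' q_2/q_1)/q_2, 1, T, \epsilon) = \AO_G(\sigma'/q_1, 1, T, \epsilon)$, the very same quantity, so the triangle inequality delivers the theorem.

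The main obstacle is making sure the total variation bound $d_T$ actually vanishes. \cref{thm:subsampled-mech-convergence} needs $\sigma$ to grow fast enough in $T$, and \cref{thm:sigma-growth} is precisely what supplies this: as long as the mechanism stays $(\epsilon, \delta)$-DP with $\delta < 1$ we get $\sigma^2 = \Omega(T)$, and the same holds for the rescaled noise $\sigma q_2/q_1$. Should $\sigma$ grow more slowly, the first step is unavailable, but then the mean of the $T$-fold PLRV diverges for both parameter settings, so by \eqref{eq:plrv-to-delta} both accounting oracles tend to $1$ and their difference vanishes as well; this case split, together with checking that coordinatewise scaling by $1/q$ commutes with the composition so that \cref{lemma:bijective-post-processing-immunity} applies to $\calm'_{1:T}$ as a whole, are the only points needing care.
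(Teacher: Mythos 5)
Your proposal follows essentially the same route as the paper's proof: normalise to $\Delta=1$ via Lemma~\ref{lemma:sensitivity-to-noise-exchange}, bridge each $\AO_S$ to the surrogate oracle of $\calm'_{1:T}$ using Corollary~\ref{corollary:ao-total-variation-distance-bound} together with Theorem~\ref{thm:subsampled-mech-convergence}, identify that surrogate oracle with $\AO_G(\sigma/q, T, \epsilon)$ by bijective post-processing, observe that $\sigma/q_1 = (\sigma q_2/q_1)/q_2$ makes the two Gaussian oracles coincide, and conclude by the triangle inequality. Your extra case split on whether $\sigma_T^2 = \Omega(T)$ addresses a hypothesis that the paper leaves implicit (it surfaces only inside the appendix proof of Theorem~\ref{thm:subsampled-mech-convergence}), but it does not alter the core argument.
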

\begin{proof}
    It suffices to look at $\Delta = 1$ by Lemma~\ref{lemma:sensitivity-to-noise-exchange}.
    Let $\AO'(\sigma, q, T, \epsilon)$ be the accounting oracle for $\calm_{1:T}'$.
    By Lemma~\ref{lemma:bijective-post-processing-immunity} and \eqref{eq:approx-mech-is-almost-gaussian},
    \begin{equation}
        \AO'(\sigma, q, T, \epsilon) = \AO_G\left(\frac{\sigma}{q}, T, \epsilon\right).\label{eq:approx-mech-is-almost-gaussian-ao}
    \end{equation}

    Let $\sigma_2 = \sigma \cdot q_2 / q_1$,
    \begin{equation}
        d_T^{(1)} = \sup_x\TV(\calm_{1:T}(x, q_1, \sigma), \calm'_{1:T}(x, q_1, \sigma)),
    \end{equation}
    and 
    \begin{equation}
        d_T^{(2)} = \sup_x\TV(\calm_{1:T}(x, q_2, \sigma_2), \calm'_{1:T}(x, q_2, \sigma_2)).
    \end{equation}
    Now
    \begin{equation}
        |\AO_S(\sigma, q_1, T, \epsilon) - \AO'(\sigma, q_1, T, \epsilon)|
        \leq (1 + e^\epsilon)d_T^{(1)}
    \end{equation}
    and
    \begin{equation}
        |\AO_S(\sigma_2, q_2, T, \epsilon) - \AO'(\sigma_2, q_2, T, \epsilon)|
        \leq (1 + e^\epsilon)d_T^{(2)}
    \end{equation}
    by Corollary~\ref{corollary:ao-total-variation-distance-bound}.
    Since $\frac{\sigma}{q_1} = \frac{\sigma_2}{q_2}$,
    \begin{equation}
        \begin{split}
            \AO'(\sigma, q_1, T, \epsilon) 
            &= \AO_G\left(\frac{\sigma}{q_1}, T, \epsilon\right)
            \\&= \AO_G\left(\frac{\sigma_2}{q_2}, T, \epsilon\right)
            \\&= \AO'(\sigma_2, q_2, T, \epsilon).
        \end{split}
    \end{equation}
    Now 
    \begin{equation}
        \begin{split}
            &|\AO_S(\sigma, q_1, T, \epsilon) - \AO_S(\sigma\cdot q_2 / q_1, q_2, T, \epsilon)| 
            \\&\leq|\AO_S(\sigma, q_1, T, \epsilon) - \AO'(\sigma, q_1, T, \epsilon)| 
            \\&\phantom{x}+|\AO_S(\sigma_2, q_2, T, \epsilon) - \AO'(\sigma_2, q_2, T, \epsilon)| 
            \\&\leq (1 + e^\epsilon)(d_T^{(1)} + d_T^{(2)})
            \to 0
        \end{split}
    \end{equation}
    by Theorem~\ref{thm:subsampled-mech-convergence}.
\end{proof}

\begin{figure*}[t]
    \centering
    \includegraphics{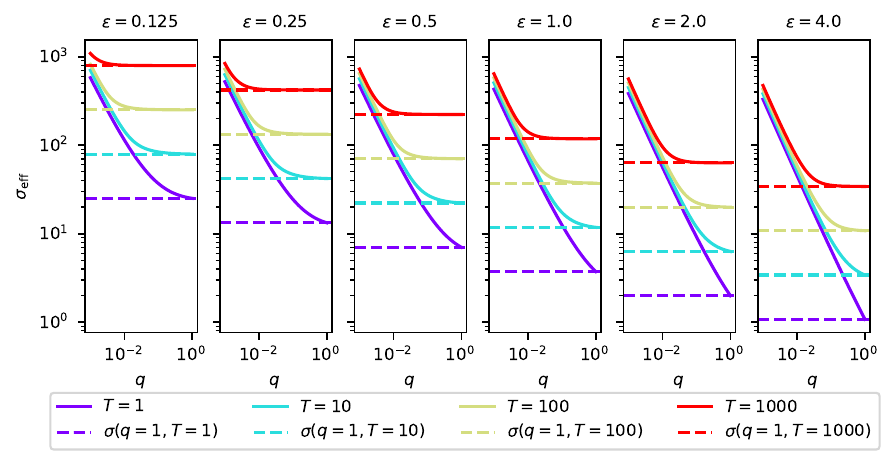}
    \vspace{-5mm}
    \caption{
        The $\seff:=\sigma(q, T)/q$ decreases as $q$ grows for all the $\epsilon$ and $T$ values. 
        As $T$ grows, $\seff$ approaches the $\sigma(1, T)$, as the asymptotic theory predicts.
        The privacy parameter $\delta$ was set to $10^{-5}$ when computing the $\sigma(q, T)$.
        }
    \label{fig:convergence-plot}
\end{figure*}

By setting $q_2 = 1$, we see that $\AO_S(\sigma, \Delta, q, T, \epsilon)$ behaves 
like $\AO_S(\sigma/q, \Delta, 1, T, \epsilon)$ in the $T\to \infty$ limit, so
$\frac{\sigma}{q}$ must be a constant that does not depend on $q$ to reach a 
given $(\epsilon, \delta)$-bound. We formalise this in the following corollary.
\begin{restatable}{corollary}{corollarylinearsigmalimit}\label{corollary:linear-sigma-limit}
    Let $\sigma(q, T)$ be the smallest $\sigma$ such that 
    $\AO_S(\sigma, \Delta, q, T, \epsilon) \leq \delta$. Then 
    %\begin{equation}
        %\lim_{T\to \infty} |\sigma(q, T) - q\sigma(1, T)| = 0.
    %\end{equation}
    \begin{equation}
        \lim_{T\to \infty} \frac{\sigma(q, T)}{q\sigma(1, T)} = 1.
    \end{equation}
\end{restatable}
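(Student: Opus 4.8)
The plan is a sandwich argument: for every $\gamma>1$ I will show $q\sigma(1,T)/\gamma<\sigma(q,T)\le\gamma q\sigma(1,T)$ once $T$ is large, using Theorem~\ref{thm:subsampled-mech-ao-convergence} to pull the comparison back to the plain Gaussian mechanism, where the noise--compositions tradeoff is explicit. By Lemma~\ref{lemma:sensitivity-to-noise-exchange} I may assume $\Delta=1$. First note that at $q=1$ the Poisson subsampled Gaussian mechanism is literally the Gaussian mechanism (every point is kept with probability one, and the dominating pair \eqref{eq:dominating-pair} degenerates to the Gaussian one), so $\AO_S(\sigma,1,T,\epsilon)=\AO_G(\sigma,T,\epsilon)$. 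The PLRV of the Gaussian mechanism with noise $\sigma$ is $\caln(\mu,2\mu)$ with $\mu=1/(2\sigma^2)$, so by Theorem~\ref{thm:plrv-composition} its $T$-fold composition has PLRV $\caln(T\mu,2T\mu)$, which is exactly the PLRV of a single Gaussian mechanism with noise $\sigma/\sqrt{T}$; since the accounting oracle is a functional of the PLRV only (via \eqref{eq:plrv-to-delta} and tightness of the Gaussian dominating pair), $\AO_G(\sigma,T,\epsilon)=h(\sigma/\sqrt{T})$, where $h(s):=\AO_G(s,1,\epsilon)$ is a fixed function that is continuous and strictly decreasing from $h(0^+)=1$ to $h(\infty)=0$. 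Consequently $\sigma(1,T)=\sqrt{T}\,s_*$ with $s_*:=h^{-1}(\delta)=\sigma(1,1)$; in particular $\sigma(1,T)=\Theta(\sqrt{T})$.

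The key estimate is then, for each fixed $\alpha>0$ and with $\sigma_T:=\alpha q\,\sigma(1,T)=\alpha q\sqrt{T}\,s_*$,
\[
\AO_S(\sigma_T,q,T,\epsilon)=\AO_S(\sigma_T/q,1,T,\epsilon)+o(1)=\AO_G\!\left(\alpha\sqrt{T}\,s_*,\,T,\,\epsilon\right)+o(1)=h(\alpha s_*)+o(1)
\]
as $T\to\infty$: the first step is Theorem~\ref{thm:subsampled-mech-ao-convergence} with $q_1=q$, $q_2=1$ at noise level $\sigma_T$, the second is the $q=1$ identity above, and the third is $\AO_G(\sigma,T,\epsilon)=h(\sigma/\sqrt{T})$.

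Now I run the sandwich. Fix $\gamma>1$. Taking $\alpha=\gamma$ in the key estimate, $\AO_S(\gamma q\sigma(1,T),q,T,\epsilon)\to h(\gamma s_*)<h(s_*)=\delta$, so for all large $T$ this quantity is $<\delta$; since $\sigma(q,T)$ is by definition the smallest noise attaining $(\epsilon,\delta)$-DP, $\sigma(q,T)\le\gamma q\sigma(1,T)$. Taking $\alpha=1/\gamma$, $\AO_S(q\sigma(1,T)/\gamma,q,T,\epsilon)\to h(s_*/\gamma)>\delta$, so for all large $T$ this quantity is $>\delta$; because $\AO_S(\cdot,q,T,\epsilon)$ is non-increasing in $\sigma$ (adding extra independent Gaussian noise is post-processing, Theorem~\ref{thm:dp-post-processing-immunity}) and continuous, the set of admissible noise levels is a closed ray, so $q\sigma(1,T)/\gamma$ lying outside it forces $\sigma(q,T)>q\sigma(1,T)/\gamma$. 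Dividing, $1/\gamma<\sigma(q,T)/(q\sigma(1,T))\le\gamma$ for all large $T$, and since $\gamma>1$ was arbitrary the limit equals $1$.

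The only step requiring genuine care is the first equality of the key estimate: Theorem~\ref{thm:subsampled-mech-ao-convergence} must be applied with the $T$-dependent noise $\sigma_T=\Theta(\sqrt{T})$ rather than a constant (for a constant noise level the total-variation gap between the mixture and the Gaussian would not vanish under composition). Its proof goes through Theorem~\ref{thm:subsampled-mech-convergence}, whose argument---reparameterising the noise and Taylor-expanding the per-step KL divergence---is exactly set up to handle a noise sequence growing polynomially in $T$ (which is why Theorem~\ref{thm:sigma-growth} is invoked there), and the sequences I plug in, being $\Theta(\sqrt{T})$, lie well inside that regime. Everything else---continuity and monotonicity of the accounting oracle in $\sigma$, and the exact one-shot Gaussian noise level---is routine.
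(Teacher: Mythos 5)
Your proof is correct and follows essentially the same route as the paper's: reduce to $\Delta=1$, apply Theorem~\ref{thm:subsampled-mech-ao-convergence} with $q_2=1$ to the ($T$-dependent, $\Theta(\sqrt{T})$) noise sequence, and exploit the exact $\sqrt{T}$-scaling of the Gaussian accounting oracle (the paper's Lemma~\ref{lemma:aog-and-inverse-composition-behaviour}) together with its strict monotonicity and continuity in $\sigma$. The only difference is cosmetic: you extract the limit via a $\gamma$-sandwich using strict monotonicity of $h$, whereas the paper invokes the continuity of $\AO_G^{-1}$ and then divides by $\sigma(1,T)=\Omega(\sqrt{T})$; your explicit caveat about applying Theorem~\ref{thm:subsampled-mech-ao-convergence} to $T$-dependent noise matches what the paper's own proof implicitly relies on.
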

\begin{proof}
    The claim follows from Theorem~\ref{thm:subsampled-mech-ao-convergence} and the 
    continuity of the inverse of 
    $\AO_S(\sigma, \Delta, 1, T, \epsilon) = \AO_G(\sigma, \Delta, T, \epsilon)$
    when considered only a function of $\sigma$, with other parameters fixed. 
    We defer the full proof to Appendix~\ref{app:linear-sigma-limit}.
\end{proof}

%%%%%%%%%%%%%%%%%%%%%%%%%%%%%%%%%%%%%%%%%%%%%%%%%%%%%%%%%%%%%%%%%%%%%%%%%%%%%%%

\subsection{Empirical results}\label{sec:empirical-results}

In order to demonstrate how the $\sigma_{\text{eff}}$ converges to $q \sigma(1,
T)$, we use Google's open source implementation\footnote{Available at
\url{https://github.com/google/differential-privacy/tree/main/python}} of the PLD
accountant \cite{doroshenkoConnectDotsTighter2022a} to compute the $\sigma(q,
T)$.

\cref{fig:convergence-plot} shows that as the the number of iterations grows,
the $\seff$ approaches the $\sigma(1, T)$ line. We can also see that for
smaller values of $\epsilon$, the convergence happens faster. This behaviour can be
explained by the larger level of noise needed for smaller $\epsilon$
values, which will make the components of the Gaussian mixture in
\cref{eq:simple-setting-mog} less distinguishable from each other.

We can also see that for all of the settings, the $\seff$ stays above the
$\sigma(1, T)$ line, and it is the furthest away when $q$ is the smallest. This
would suggest, that our observation in \cref{sec:no-comp} hold also for $T > 1$,
and that smaller values of $q$ incur a disproportionally large DP-induced
variance component in \cref{eq:variance-decomp}.

The source code for reproducing our experiments can be found in 
\url{https://github.com/DPBayes/subsampling-is-not-magic}.

%%%%%%%%%%%%%%%%%%%%%%%%%%%%%%%%%%%%%%%%%%
\section{Subsampled Gaussian Mechanism without Compositions}
\label{sec:no-comp}
For now, let us consider $T=1$, and denote $\sigma(q) := \sigma(q, 1)$. 
We can express the $\delta$ using the hockey-stick divergence 
and the dominating pair $(P, Q)$ defined in \cref{eq:dominating-pair}
as 
\begin{equation}
    \label{eq:delta-no-comp}
    \begin{split}
        \delta(q) = &q\Pr (Z  \geq \sigma(q) \log \lp \frac{h(q)}{q} \rp - \frac{1}{2\sigma(q)}) \\
           &-h(q) \Pr(Z  \geq \sigma(q) \log \lp \frac{h(q)}{q} \rp + \frac{1}{2\sigma(q)}),
    \end{split}
\end{equation}
where $h(q) := e^\epsilon - (1-q)$. Recall that
we have assumed $\sigma(q, T)$ to be a function that returns a noise-level
matching to a particular $(\epsilon,\delta)$ privacy level for given $q$ and $T$. 
Therefore, the derivative of $\delta(q)$ in \cref{eq:delta-no-comp} w.r.t.\ 
$q$ is $0$, and we can solve the derivative of the RHS for $\sigma'(q)$.
Let us denote 
\begin{equation}
    \label{eq:def-a}
    %\begin{split}
    a := \frac{1}{2\sqrt{2} \sigma(q)}
    %\end{split}
\end{equation}
\begin{equation}
    \label{eq:def-b}
    b := \frac{\sigma(q)}{\sqrt{2}} \log \lp \frac{e^\epsilon - (1-q)}{q} \rp.
\end{equation}
%Using this notation we can write 
We have the following Lemma
\begin{restatable}{lemma}{lemmanocompderivative}
    \label{lemma:no-comp-derivative}
    For the smallest $\sigma(q)$ that provides $(\epsilon, \delta)$-DP for the Poisson subsampled
    Gaussian mechanism, we have
    \begin{align}
        \label{eq:sigma-no-comp-der}
        \sigma'(q) 
            &= \frac{\sigma(q)}{q} \frac{1}{2 a} \frac{1}{\erf'(a-b)} (\erf(a-b) - \erf(-a-b)).
    \end{align}
\end{restatable}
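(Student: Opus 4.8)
The plan is to differentiate the privacy constraint \eqref{eq:delta-no-comp} implicitly in $q$, exploiting a cancellation identity that kills the most complicated terms. First I would fix notation: set $h = e^\epsilon - (1-q)$, $L = \sigma(q)\log(h/q)$, $M = \tfrac{1}{2\sigma(q)}$, and let $Z \sim \caln(0,1)$ with density $\phi$, so that \eqref{eq:delta-no-comp} reads $\delta(q) = q\Pr(Z \geq L - M) - h\Pr(Z \geq L + M)$. The central observation is the identity
\[ q\,\phi(L - M) = h\,\phi(L + M), \]
which holds because $\phi(L-M)/\phi(L+M) = \exp\bigl(\tfrac{1}{2}((L+M)^2 - (L-M)^2)\bigr) = \exp(2LM) = \exp(\log(h/q)) = h/q$; equivalently, $L \pm M$ are the rescaled points at which the mixture-to-Gaussian density ratio of the dominating pair \eqref{eq:dominating-pair} equals $e^\epsilon$.

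Next I would argue that, since $\sigma(q)$ is the smallest noise level giving $(\epsilon,\delta)$-DP and the right-hand side of \eqref{eq:delta-no-comp} is continuous and strictly decreasing in $\sigma$ on the relevant range, the constraint holds with equality, $\delta(q) \equiv \delta$; the implicit function theorem (using $\partial \delta / \partial \sigma < 0$) then gives differentiability of $\sigma(q)$. Differentiating $\delta(q) = \delta$ in $q$, with $h'(q) = 1$ and $\tfrac{d}{dt}\Pr(Z \geq t) = -\phi(t)$, yields
\[ \Pr(Z \geq L - M) - \Pr(Z \geq L + M) - q\phi(L-M)(L' - M') + h\phi(L+M)(L' + M') = 0. \]
Here the identity does its work: substituting $h\phi(L+M) = q\phi(L-M)$, the $L'$ terms cancel and the $M'$ terms add, leaving $2q\,\phi(L-M)\,M' = \Pr(Z \geq L + M) - \Pr(Z \geq L - M)$. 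Using $M' = -\sigma'(q)/(2\sigma(q)^2)$ and solving gives
\[ \sigma'(q) = \frac{\sigma(q)^2}{q\,\phi(L-M)}\bigl(\Pr(Z \geq L - M) - \Pr(Z \geq L + M)\bigr). \]

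Finally I would translate this into the error-function form. With $a$ and $b$ as in \eqref{eq:def-a}--\eqref{eq:def-b} one has $L - M = \sqrt{2}(b - a)$ and $L + M = \sqrt{2}(a + b)$; plugging in $\Pr(Z \geq t) = \tfrac{1}{2}(1 - \erf(t/\sqrt{2}))$, $\phi(t) = \tfrac{1}{\sqrt{2\pi}}e^{-t^2/2}$, $\erf'(z) = \tfrac{2}{\sqrt{\pi}}e^{-z^2}$, and $\tfrac{1}{2a} = \sqrt{2}\,\sigma(q)$, and using oddness of $\erf$ to rewrite $-\erf(b-a) = \erf(a-b)$ and $-\erf(a+b) = \erf(-a-b)$, the expression for $\sigma'(q)$ collapses to \eqref{eq:sigma-no-comp-der}. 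The only genuinely delicate step is the regularity argument — knowing $\sigma(q)$ is differentiable and that the constraint holds with equality, which needs a continuity/monotonicity analysis of the right-hand side of \eqref{eq:delta-no-comp} in $\sigma$ together with the implicit function theorem; everything else is the single clean identity $q\phi(L-M) = h\phi(L+M)$, which is what makes the $L'$ contribution vanish so that $\sigma'(q)$ depends only on $\erf$ at $a-b$ and $-a-b$ rather than on $\log(h/q)$ and its derivative, followed by routine bookkeeping between the $\Pr(Z \geq \cdot)$, $\phi$, and $\erf$ parameterizations.
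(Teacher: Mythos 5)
Your proof is correct and reaches exactly the formula in \eqref{eq:sigma-no-comp-der}, but it gets there by a genuinely different and arguably cleaner route than the paper. The paper differentiates the constraint and solves for $\sigma'(q)$ with Mathematica, which produces a large expression containing the prefactor $e^{\frac{1}{2}\sigma(q)^2\log^2(h/q)+\frac{1}{8\sigma(q)^2}}$ that then has to be massaged by completing the square into $e^{(a-b)^2}\sqrt{h/q}$ before the $\erf'$ identity can be applied. Your cancellation identity $q\,\phi(L-M)=h\,\phi(L+M)$ (an immediate consequence of $2LM=\log(h/q)$, and really just the statement that $L\pm M$ are the standardized points where the privacy loss of the dominating pair equals $\epsilon$) does the equivalent work up front: it makes the $L'$ contributions cancel in the implicitly differentiated constraint, so the awkward derivative of $\sigma(q)\log(h/q)$ never has to be written out, and the remaining computation --- $2q\phi(L-M)M'=\Pr(Z\geq L+M)-\Pr(Z\geq L-M)$ with $M'=-\sigma'(q)/(2\sigma(q)^2)$, followed by the translation $L-M=\sqrt{2}(b-a)$, $L+M=\sqrt{2}(a+b)$, $\phi(L-M)=\erf'(a-b)/(2\sqrt{2})$, $\tfrac{1}{2a}=\sqrt{2}\sigma(q)$ --- is a few lines of hand-checkable algebra. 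What your approach buys is a fully self-contained, CAS-free derivation; what the paper's buys is essentially nothing beyond automation. You are also more careful than the paper on the one genuinely delicate point, namely justifying that the constraint binds with equality and that $\sigma(q)$ is differentiable via monotonicity of $\delta$ in $\sigma$ and the implicit function theorem, which the paper simply asserts by writing $\delta'(q)=0$.
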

\begin{proof}    
    See \cref{sec:app-sigma-derivative}.
\end{proof}
Now, \cref{lemma:no-comp-derivative} allows us to establish following result.
\begin{theorem}
    \label{thm:no-comp-result}
    If $a < b$ for $a$ and $b$ defined in Equations~\eqref{eq:def-a} and \eqref{eq:def-b}, then
    %\begin{align}
        $\frac{\dd}{\dd q} \frac{\sigma(q)}{q} < 0$.
    %\end{align}
\end{theorem}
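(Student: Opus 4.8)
The plan is to reduce the statement to a one-variable integral inequality that becomes transparent once the $\erf$ terms are rewritten in integral form. First I would differentiate directly:
\[
    \frac{\dd}{\dd q}\frac{\sigma(q)}{q} = \frac{\sigma'(q)\,q - \sigma(q)}{q^2} = \frac{1}{q}\left(\sigma'(q) - \frac{\sigma(q)}{q}\right).
\]
Since $q > 0$ and $\sigma(q) > 0$, the sign of the left-hand side is the sign of $\sigma'(q) - \tfrac{\sigma(q)}{q}$, so it suffices to prove $\sigma'(q) < \tfrac{\sigma(q)}{q}$. Substituting the expression for $\sigma'(q)$ from Lemma~\ref{lemma:no-comp-derivative} and dividing through by $\tfrac{\sigma(q)}{q} > 0$, the goal reduces to the scalar inequality
\[
    \frac{1}{2a}\cdot\frac{1}{\erf'(a-b)}\bigl(\erf(a-b) - \erf(-a-b)\bigr) < 1 .
\]
Here I need only note that $a = \tfrac{1}{2\sqrt2\,\sigma(q)} > 0$, $\erf' > 0$, and (using $a < b$) the bracketed quantity is positive, so all the divisions preserve the inequality direction.

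Next I would rewrite everything with the integral definition of $\erf$. Using $\erf'(z) = \tfrac{2}{\sqrt\pi}e^{-z^2}$, oddness of $\erf$, and $(a-b)^2 = (b-a)^2$, one gets $\erf(a-b) - \erf(-a-b) = \erf(a+b) - \erf(b-a) = \tfrac{2}{\sqrt\pi}\int_{b-a}^{a+b} e^{-t^2}\,\dd t$, so the target inequality becomes
\[
    \frac{1}{2a}\,e^{(b-a)^2}\int_{b-a}^{a+b} e^{-t^2}\,\dd t < 1,
    \qquad\text{i.e.}\qquad
    \int_{b-a}^{a+b} e^{(b-a)^2 - t^2}\,\dd t < 2a .
\]
The change of variable $t = (b-a) + s$, with $s$ ranging over $[0, 2a]$, turns the exponent into $-2(b-a)s - s^2$ (the $(b-a)^2$ cancels), reducing the claim to
\[
    \int_0^{2a} e^{-2(b-a)s - s^2}\,\dd s < 2a .
\]

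Finally I would invoke the hypothesis $a < b$: then $b - a > 0$, so for every $s \in (0, 2a]$ the exponent $-2(b-a)s - s^2$ is strictly negative, hence the integrand is strictly below $1$ there; integrating a continuous function that is $<1$ over an interval of length $2a$ yields a value strictly less than $2a$, which is exactly what we need. I do not expect a genuine obstacle in this argument; the only care required is the sign bookkeeping in the first paragraph (that $a>0$, $\sigma(q)/q>0$, $\erf'(a-b)>0$, and the $\erf$-difference is positive) and recognizing the completion-of-square / shift of the integration variable that collapses the whole inequality to a pointwise comparison of the integrand with the constant $1$.
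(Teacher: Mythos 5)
Your proof is correct and follows essentially the same route as the paper: both reduce, via Lemma~\ref{lemma:no-comp-derivative}, to the inequality $\erf(a-b) - \erf(-a-b) < 2a\,\erf'(a-b)$. The paper justifies this by the strict convexity of $\erf$ on $\R_{<0}$ (the secant slope over $[-a-b,\,a-b]$ is below the derivative at the right endpoint), while your change of variables merely unpacks that same fact into the pointwise bound $e^{-t^2} < e^{-(a-b)^2}$ on the integration interval, so the two arguments are equivalent.
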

\begin{proof}
   The $\erf(x)$ is a convex function for $x \in \mathbb{R}_{<0}$. Since $a, b \geq 0$
   we have $-a-b \leq 0$ and if $a-b < 0$ we get from the convexity that
   \begin{align}
       \erf(a-b) - \erf(-a-b) 
            &< 2a \erf'(a-b).
   \end{align}
   Substituting this upper bound into \cref{eq:sigma-no-comp-der} gives
   \begin{align}
       \sigma'(q) < \frac{\sigma(q)}{q} \Leftrightarrow \frac{q\sigma'(q)-\sigma(q)}{q^2} = \frac{\dd}{\dd q} \frac{\sigma(q)}{q} < 0.
   \end{align}
\end{proof}
Now, \cref{thm:no-comp-result} implies that $\sigma_{\text{eff}}$ is a decreasing 
function w.r.t.\ $q$, and therefore larger subsampling rates should be preferred when
$a < b$. So now the remaining question is, when is $a$ smaller than $b$. It is 
easy to see that if $a > b$, we have an upper bound on the $\sigma(q)$, and therefore
we cannot obtain arbitrarily strict privacy levels. 

However, analytically solving the region where $a < b$ is intractable as we do
not have a closed form expression for $\sigma(q)$. Therefore we make the following
Conjecture, which we study numerically.
\begin{conjecture}
    \label{conj:a-minus-b}
    For $\epsilon, q \geq 4 \delta$, we have $a-b < 0$.
\end{conjecture}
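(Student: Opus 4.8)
The plan is to convert the conjecture into a one-variable analytic inequality via the closed form \eqref{eq:delta-no-comp}, and then to settle that inequality with elementary calculus together with the monotonicity of the Gaussian Mills ratio. Write $r:=\log\frac{e^\epsilon-(1-q)}{q}$, so that $b=\frac{\sigma(q)}{\sqrt2}r$ by \eqref{eq:def-b}, and comparing with \eqref{eq:def-a} the inequality $a-b<0$ is equivalent to $\frac1{2\sigma(q)}<\sigma(q)r$, i.e.\ to $\sigma(q)^2>\frac1{2r}$. (We may assume $\delta>0$, since the Poisson subsampled Gaussian mechanism has unbounded privacy loss and hence cannot be $(\epsilon,0)$-DP; then the hypotheses force $\epsilon>0$ and $r>0$.) I would proceed by contradiction and assume $\sigma(q)\le\frac1{\sqrt{2r}}$.

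Let $\delta(q,\sigma)$ denote the right-hand side of \eqref{eq:delta-no-comp} with $\sigma(q)$ replaced by a free variable $\sigma$; this is the $\delta$ attained at noise level $\sigma$. Adding extra independent Gaussian noise to the dominating pair \eqref{eq:dominating-pair} is post-processing, so $\delta(q,\sigma)$ is non-increasing in $\sigma$. By definition of $\sigma(q)$ we have $\delta(q,\sigma(q))\le\delta$, while monotonicity together with the contradiction hypothesis gives $\delta(q,\sigma(q))\ge\delta\big(q,\tfrac1{\sqrt{2r}}\big)$. At $\sigma=\tfrac1{\sqrt{2r}}$ the two arguments in \eqref{eq:delta-no-comp} become $\sigma r-\tfrac1{2\sigma}=0$ and $\sigma r+\tfrac1{2\sigma}=\sqrt{2r}$, so, writing $\bar\Phi$ for the standard normal survival function and using $e^\epsilon-(1-q)=qe^{r}$, one gets $\delta\big(q,\tfrac1{\sqrt{2r}}\big)=\tfrac q2-qe^{r}\bar\Phi(\sqrt{2r})=:q\,g(r)$ with $g(r):=\tfrac12-e^{r}\bar\Phi(\sqrt{2r})$. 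Hence $q\,g(r)\le\delta$. On the other hand, $\epsilon\ge4\delta$ and $q\ge4\delta$ give $\delta\le\tfrac14\min(\epsilon,q)$, and since $\epsilon\le e^\epsilon-1=q(e^{r}-1)$ this yields $\delta\le\tfrac q4\min(1,e^{r}-1)$. Combining, $g(r)\le\tfrac14\min(1,e^{r}-1)$, which contradicts the analytic fact $g(r)>\tfrac14\min(1,e^{r}-1)$; establishing the latter for all $r>0$ is therefore all that remains.

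To prove that fact, I would first observe $e^{r}\bar\Phi(\sqrt{2r})=\tfrac1{\sqrt{2\pi}}\,\bar\Phi(\sqrt{2r})/\phi(\sqrt{2r})$; since the Mills ratio $\bar\Phi/\phi$ is strictly decreasing, $g$ is strictly increasing with $g(0^+)=0$ and $g(\infty)=\tfrac12$. For $r\ge\ln2$ the claim reads $g(r)>\tfrac14$, which by monotonicity of $g$ reduces to $g(\ln2)>\tfrac14$, i.e.\ $\bar\Phi(\sqrt{2\ln2})<\tfrac18$ — a sharp but correct Gaussian tail estimate (e.g.\ via $\erfc(x)\le\tfrac{2e^{-x^2}}{\sqrt\pi\,(x+\sqrt{x^2+2})}$). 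For $0<r\le\ln2$ the claim reads $g(r)>\tfrac14(e^{r}-1)$, equivalently $F(r):=\tfrac34e^{-r}-\tfrac14-\bar\Phi(\sqrt{2r})>0$; here $F(0^+)=0$ and $F'(r)=e^{-r}\big(\tfrac1{2\sqrt{\pi r}}-\tfrac34\big)$ is positive for $r<\tfrac4{9\pi}$ and negative for $r>\tfrac4{9\pi}$, so $F$ rises from $0$ and then falls, and $F>0$ on $(0,\ln2]$ reduces once more to $F(\ln2)=\tfrac18-\bar\Phi(\sqrt{2\ln2})>0$.

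The main obstacle is this last analytic inequality: $g(r)>\tfrac14\min(1,e^{r}-1)$ is essentially tight — that is precisely why the hypothesis uses the constant $4$ and not something smaller — so the proof hinges on (i) extracting the clean ``$g$ strictly increasing'' statement from Mills-ratio monotonicity and (ii) a rigorous tail bound sharp enough to certify $\bar\Phi(\sqrt{2\ln2})<\tfrac18$, which holds with only a small numerical margin. Everything else — the reduction to $\sigma(q)^2>\tfrac1{2r}$, the monotonicity of $\delta(q,\sigma)$ in $\sigma$, and the evaluation at $\sigma=1/\sqrt{2r}$ — is routine.
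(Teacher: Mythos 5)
The paper does not prove this statement at all: it is left as a conjecture, supported only by a numerical sweep over $(q,\sigma,\epsilon)$ (the footnote in the abstract notes it was proven later by Kalinin et al.). Your argument is therefore doing strictly more than the paper, and its skeleton is sound and, in my reading, correct: the equivalence $a<b\iff\sigma(q)^2>\tfrac1{2r}$, the monotonicity of $\delta(q,\sigma)$ in $\sigma$ via the data-processing inequality for the dominating pair, the observation that at $\sigma=1/\sqrt{2r}$ the first tail argument in \eqref{eq:delta-no-comp} vanishes so that $\delta(q,1/\sqrt{2r})=q\bigl(\tfrac12-e^{r}\bar\Phi(\sqrt{2r})\bigr)$, the use of both hypotheses $\epsilon\ge 4\delta$ and $q\ge 4\delta$ together with $\epsilon\le e^\epsilon-1=q(e^r-1)$, and the reduction of the resulting one-variable inequality to $\bar\Phi(\sqrt{2\ln 2})<\tfrac18$ via Mills-ratio monotonicity on $[\ln 2,\infty)$ and the unimodality of $F(r)=\tfrac34e^{-r}-\tfrac14-\bar\Phi(\sqrt{2r})$ on $(0,\ln 2]$ (your computation $F'(r)=e^{-r}\bigl(\tfrac1{2\sqrt{\pi r}}-\tfrac34\bigr)$ checks out). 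This is an elegant one-dimensionalisation that the paper's authors state they could not find.

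The one genuine defect is the final certification of $\bar\Phi(\sqrt{2\ln 2})<\tfrac18$, which is exactly where the argument is numerically tight ($\bar\Phi(\sqrt{2\ln 2})\approx 0.1195$ versus $0.125$). The bound you cite, $\erfc(x)\le\frac{2e^{-x^2}}{\sqrt\pi\,(x+\sqrt{x^2+2})}$, is in fact the classical \emph{lower} bound (it evaluates to $\bar\Phi\le 0.114$, below the true value, which is impossible for an upper bound); the correct upper bound of that family replaces $2$ by $4/\pi$ under the root and yields only $\bar\Phi(\sqrt{2\ln 2})\le 0.1263>\tfrac18$, so it does not close the argument. You need a sharper certificate, e.g.\ the alternating Maclaurin series $\bar\Phi(z)=\tfrac12-\tfrac1{\sqrt{2\pi}}\sum_{k\ge0}\frac{(-1)^kz^{2k+1}}{k!\,2^k(2k+1)}$, whose partial sums bracket the value and give $\bar\Phi(\sqrt{2\ln 2})<0.1196$ after six terms. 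With that substitution the proof is complete; as written, the last step fails.
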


Verifying the \cref{conj:a-minus-b} numerically requires computing the $\sigma(q)$ values
for a range of $q$ and $\epsilon$ values, which would be computationally inefficient.
However, computing $a, b$ and $\delta$ can be easily parallelized over multiple
values of $q$ and $\sigma$. We set $\delta_{\text{target}}=10^{-5}$ and compute the 
$a, b$ and $\delta$ for $q \in [4 \delta_{\text{target}}, 1.0]$, 
$\sigma \in [\min(q) \sigma(1, \epsilon), \sigma(1, \epsilon)]$ and $\epsilon \in [4\delta_{\text{target}}, 4.0]$ 
which would be a reasonable range of $\epsilon$ values for practical use. The 
$\sigma(1, \epsilon)$ is a noise-level matching $(\epsilon, \delta_{\text{target}})$-DP guarantee for $q=1$,
and the lower bound $\min(q) \sigma(1, \epsilon)$ for $\sigma$ was selected based on hypothesis 
that within $[\min(q) \sigma(1, \epsilon), \sigma(1, \epsilon)]$ we can find a noise-level closely
matching the $\delta_{\text{target}}$.

\cref{fig:a-minus-b} shows the largest $a-b$ value for our target $\epsilon$
values among the $q$ and $\sigma$ pairs that resulted into $\delta \leq \delta_{\text{target}}$.
We can see that among these values there are no cases of $a > b$.
While our evaluation covers a range of $\epsilon$ values,
$a-b$ seems to be monotonically decreasing w.r.t.\ $\epsilon$, which 
suggests that the conjecture should hold even for larger values of $\epsilon$.
Based on our numerical evaluation, the $\sigma$ range
resulted into $\delta$ values differing from $\delta_{\text{target}}=10^{-5}$ 
at most $\approx 2\times10^{-7}$ in absolute difference.
As a final remark, the constant $4$ in the \cref{conj:a-minus-b}
was found numerically. With smaller values of this constant we obtained
cases for which $a - b > 0$. Furthermore, values of $q \approx \delta$ empirically produce results that even fail to satisfy the claim of \cref{thm:no-comp-result}.

\begin{figure}
    \centering
    \includegraphics{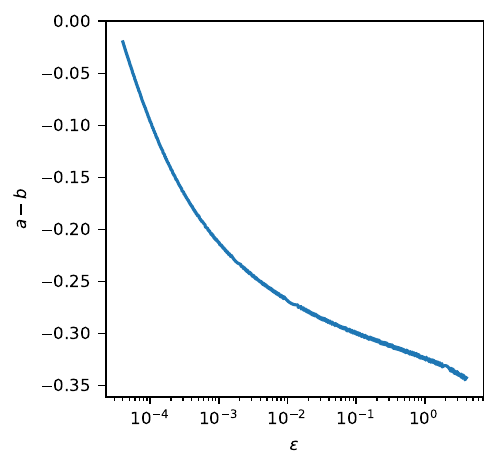}
    \caption{
        The largest $a-b$ computed for multiple $(q,\sigma)$ pairs stays negative
        for a broad range of $\epsilon$ values. The $a$ and $b$ were selected so
        that the corresponding $(q, \sigma)$ pair satisfies $(\epsilon, \delta)$-DP 
        with $\delta \leq 10^{-5}$.
    }
    \label{fig:a-minus-b}
\end{figure}

%%%%%%%%%%%%%%%%%%%%%%%%%%%%%%%%%%%%%%%%%%%%%%%%%%%%%%%%%%%%%%%%%%%%%%%%%%%%%%%

\paragraph{Limitations}
Our analysis for the $T=1$ case reduced the monotonicity of effective DP noise standard deviation $\seff$ to a sufficient condition $a < b$.
We were unable to provide a formal proof of when this condition is satisfied, but verified numerically that it appears valid for a broad range of practically relevant parameters with $\delta=10^{-5}$, a standard value suggested by NIST\footnote{\url{https://doi.org/10.6028/NIST.SP.800-226.ipd}}.

\section{Discussion}
In this paper, we have studied how the subsampling rate affects the level of Gaussian 
perturbation in Poisson subsampled DP-SGD. An important question is how this affects
the convergence of DP-SGD. Recently, \citet{Bu0ZK23} studied the convergence of DP-SGD using 
gradient normalisation instead of clipping.
In their work, they proved a convergence result for DP-SGD with expected batch size $B$, 
number of iterations $T$ and Gaussian noise with standard deviation of $\sigma$. Their Theorem 
4~gives an upper bound on the expected gradient norm as 
\begin{align*}
    \min_{0 \leq t \leq T} \mathbb{E}(||g_t||) 
        \leq \mathcal{G}\left( \frac{4}{\sqrt{T}} \sqrt{(\mathcal{L}_0-\mathcal{L}_{*} )L\left(1 + \frac{\sigma^2 d}{B^2}\right)}; \xi, \gamma \right)
\end{align*}
where the $\mathcal{L}_0, \mathcal{L}*$ and $L$ are regularity condition constants for the loss 
function, $\gamma$ is a stability parameter for the gradient normalisation, and 
$\xi^2$ is the sum of the total gradient variances for each dimension from 
\eqref{eq:variance-decomp}. $\mathcal{G}$ is a complicated function that arises from the gradient 
normalisation.

The bound is affected by the noise-level and the batch size in terms of the factor 
$\frac{\sigma^2}{B^2}$. This term is linearly proportional to the $\frac{\sigma(q)^2}{q^2}$ which 
we study in this paper. Thus, given the same number of iterations, our results that 
$\frac{\sigma(q)}{q}$ is non-increasing w.r.t. $q$ would also imply that the 
$\frac{\sigma^2}{B^2}$ term in the above bound is also non-increasing w.r.t $q$. 
The function $\mathcal{G}$ in this Theorem is increasing with regards to the first parameter, 
which is the only place the number of iterations $T$ appears.
This means that the batch size does not affect the convergence rate in the asymptotic regime,
and a large batch size improves the constant for the convergence rate in the 
non-asymptotic regime.

The effect of the batch size through $\xi$ is harder to reason about due to the 
complicated nature of $\mathcal{G}$.
$\mathcal{G}$ appears to be increasing with regards to $\xi$ based on numerical results 
of \citet{Bu0ZK23}, which would mean that a larger batch 
size also improves the convergence through reducing $\xi$.
When $\gamma = 0$, \citet{Bu0ZK23} show that $\mathcal{G} \geq \xi$, 
so a larger batch size improves the quality of the solution DP-SGD can find in this case.

\section{Conclusion}
We studied the relationship between the effective noise variance and the subsampling
rate in Poisson subsampled DP-SGD, and proved that as the number of iterations
approaches infinity, the relationship becomes linear, which cancels 
the effect of the subsampling rate in the effective noise variance. This means 
that a large subsampling rate always reduces the effective total gradient variance.
Furthermore, we demonstrated that under a wide range of $\epsilon$ values,
a single application of the Poisson subsampled Gaussian mechanism actually
incurs a monotonically decreasing effective noise variance w.r.t.\  subsampling rate.
Our numerical experiments show that the asymptotic regime is relevant in practice, and
that outside the asymptotic regime, smaller subsampling rates lead to increasingly large effective total gradient variances. This explains the observed benefits of large batch sizes 
in DP-SGD, which has so far had only empirical and heuristic explanations, furthering the 
theoretical understanding of DP-SGD.

For future work, it would be important to theoretically study how to interpolate
our results between the $T=1$ and the asymptotic case. Based on our numerical
evaluations however, we expect our main conclusion, that the large batches
provide smaller effective noise, to hold even for finite $T>1$.

In all cases we have studied, less subsampling (larger $q$) always leads to better privacy--utility trade-off, at the cost of more compute.
Thus the magic of subsampling amplification lies in saving compute, not in achieving higher utility than without subsampling.

%%%%%%%%%%%%%%%%%%%%%%%%%%%%%%%%%%%%%%%%%%%%%%%%%%%%%%%%%%%%%%%%%%%%%%%%%%%%%%%
\section*{Acknowledgements}
This work was supported by the Research Council of Finland 
(Flagship programme: Finnish Center for Artificial Intelligence, 
FCAI as well as Grants 356499 and 359111), the Strategic Research Council 
at the Research Council of Finland (Grant 358247)
as well as the European Union (Project
101070617). Views and opinions expressed are however
those of the author(s) only and do not necessarily reflect
those of the European Union or the European Commission. Neither the European 
Union nor the granting authority can be held responsible for them.

\section*{Impact Statement}
This paper presents work whose goal is to advance the field of 
trustworthy machine learning.
Our results improve the understanding of DP-SGD and help designing 
better DP algorithms. As such, we believe work like ours has an 
overall positive effect on machine learning.

\bibliography{references.bib}
\bibliographystyle{icml2024}

%%%%%%%%%%%%%%%%%%%%%%%%%%%%%%%%%%%%%%%%%%%%%%%%%%%%%%%%%%%%%%%%%%%%%%%%%%%%%%%
%%%%%%%%%%%%%%%%%%%%%%%%%%%%%%%%%%%%%%%%%%%%%%%%%%%%%%%%%%%%%%%%%%%%%%%%%%%%%%%
% APPENDIX
%%%%%%%%%%%%%%%%%%%%%%%%%%%%%%%%%%%%%%%%%%%%%%%%%%%%%%%%%%%%%%%%%%%%%%%%%%%%%%%
%%%%%%%%%%%%%%%%%%%%%%%%%%%%%%%%%%%%%%%%%%%%%%%%%%%%%%%%%%%%%%%%%%%%%%%%%%%%%%%
\newpage
\appendix
\onecolumn

\section{Missing proofs}\label{app:missing-proofs}

\subsection{The subsampling induced variance decreases w.r.t.\ $q$}
\label{app:subsampling-variance}
Recall from \cref{sec:noise-decomposition}, that we denote the sum of clipped
subsampled gradients with $G_q$:
\begin{align}
    G_q = 
        \sum_{i\in \calb}\clip_C(g_i).
\end{align}
Now for the subsampling induced variance in noise decomposition of \cref{eq:variance-decomp}
we have
\begin{align}
    \Var\lp \frac{1}{q} G_{q, j} \rp 
        &= \frac{1}{q^2} \Var\lp \sum_{i \in [N]} b_i g_{i,j} \rp \\
        &= \frac{1}{q^2}  \sum_{i \in [N]} g_{i,j}^2 \Var\lp b_i  \rp \\
        &= \frac{q(1-q)}{q^2}  \sum_{i \in [N]} g_{i,j}^2 \\
        &= \frac{1-q}{q}  \sum_{i \in [N]} g_{i,j}^2 \label{eq:subsampling-variance-last}.
\end{align}
Now, it is easy to see that the sum in \cref{eq:subsampling-variance-last} is a 
constant w.r.t.\ $q$, and the term $(1-q)/q$ is decreasing w.r.t.\ $q$. Thus
the subsampling induced variance is decreasing w.r.t.\ $q$.

\subsection{Useful Lemmas}\label{app:lemmas}

\begin{lemma}[\citealp{kullbackInformationSufficiency1951}]\label{lemma:kl-div-properties}
    Properties of KL divergence:
    \begin{enumerate}
        \item If $P$ and $Q$ are joint distributions over independent random 
        variables $P_1, \dotsc, P_T$ and $Q_1, \dotsc, Q_T$,
        \begin{equation}
            \KL(P, Q) = \sum_{i=1}^T \KL(P_i, Q_i).
        \end{equation}
        \item If $f$ is a bijection, 
        \begin{equation}
            \KL(f(P), f(Q)) = \KL(P, Q).
        \end{equation}
    \end{enumerate}
\end{lemma}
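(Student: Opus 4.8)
The plan is to prove both parts directly from the definition $\KL(P, Q) = \Exp_{t\sim P}(\ln(p(t)/q(t)))$, manipulating densities; both are classical facts, so no deep idea is needed.

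\textbf{Part 1 (additivity).} Since $P$ is the joint law of the independent $P_1, \dotsc, P_T$ and likewise $Q$ of the independent $Q_1, \dotsc, Q_T$, the densities factorise: writing $t = (t_1, \dotsc, t_T)$ and letting $p_i, q_i$ be the densities of $P_i, Q_i$, we have $p(t) = \prod_{i=1}^T p_i(t_i)$ and $q(t) = \prod_{i=1}^T q_i(t_i)$. Taking logs turns the product into a sum, $\ln(p(t)/q(t)) = \sum_{i=1}^T \ln(p_i(t_i)/q_i(t_i))$, and linearity of expectation gives $\KL(P, Q) = \sum_{i=1}^T \Exp_{t\sim P}(\ln(p_i(t_i)/q_i(t_i)))$. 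The $i$-th summand depends on $t$ only through the coordinate $t_i$, whose $P$-marginal is $P_i$, so it equals $\Exp_{t_i\sim P_i}(\ln(p_i(t_i)/q_i(t_i))) = \KL(P_i, Q_i)$, which yields the claim.

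\textbf{Part 2 (bijection invariance).} The key step is the change-of-variables formula. Let $g = f^{-1}$ with Jacobian $J_g$. The density of $f(P)$ at $y$ is $p(g(y))\,|\det J_g(y)|$ and that of $f(Q)$ at $y$ is $q(g(y))\,|\det J_g(y)|$, so in the ratio the Jacobian factor cancels, leaving $p(g(y))/q(g(y))$. Substituting $y = f(t)$ with $t\sim P$ into the defining expectation gives
\[
\KL(f(P), f(Q)) = \Exp_{y\sim f(P)}\left(\ln\frac{p(g(y))}{q(g(y))}\right) = \Exp_{t\sim P}\left(\ln\frac{p(t)}{q(t)}\right) = \KL(P, Q).
\]
A coordinate-free alternative is the data-processing inequality for KL divergence: applied to $f$ it gives $\KL(f(P), f(Q)) \leq \KL(P, Q)$, and applied to $f^{-1}$ it gives $\KL(P, Q) = \KL(f^{-1}(f(P)), f^{-1}(f(Q))) \leq \KL(f(P), f(Q))$, so the two are equal.

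I do not expect a genuine obstacle; the only thing to watch is measure-theoretic hygiene — assuming the densities exist, or phrasing everything through Radon--Nikodym derivatives as in Definition~\ref{def:hockey-stick-divergence}, and assuming $f$ is regular enough for the change-of-variables formula, a requirement the data-processing argument avoids entirely.
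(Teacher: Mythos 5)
The paper does not prove this lemma at all --- it is stated as a classical fact with a citation to Kullback and Leibler, so there is no in-paper argument to compare against. Your proof is correct and standard: Part~1 via factorisation of the joint densities and linearity of expectation is exactly right, and for Part~2 your data-processing argument (apply the inequality to $f$ and to $f^{-1}$) is the cleaner route, since the Jacobian change-of-variables version needs $f$ to be a diffeomorphism rather than a bare bijection; for the paper's actual use of the lemma (affine rescalings and shifts in the proof of Theorem~\ref{thm:subsampled-mech-convergence}) either version suffices.
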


We will need to analyse the following function:
\begin{equation}
    f_x(u) = \ln\frac{\caln(x; 0, 1)}{q\caln(x; u - qu, 1) + (1 - q)\caln(x; -qu, 1)}.
\end{equation}
In particular, we need the fourth-order Taylor approximation of 
$\Exp_x(f_x(u))$ for $x\sim \caln(0, 1)$ at $u = 0$. We begin by 
looking at the Taylor approximation of $f_x(u)$ without the expectation,
and then show that we can differentiate under the expectation.

\begin{lemma}\label{lemma:f-taylor-approximation}
    The fourth-order Taylor approximation of $f_x$ at $u = 0$ is 
    \begin{equation}
        \begin{split}
            f_x(u) &= \frac{1}{2}(q - 1)q(x^2 - 1)u^2 
            \\&- \frac{1}{6}(q - 1)q(2q - 1)x(x^2 - 3)u^3
            \\&+ \frac{1}{24}q\left(-3 + 6x^2 - x^4 - 12q^2(2 - 4x^2 + x^4) + 6q^3(2 - 4x^2 + x^4) + q(15 - 30x^2 + 7x^4)\right)u^4
            \\&+ r_x(u)u^4,
        \end{split}
    \end{equation}
    with $\lim_{u\to 0} r_x(u) = 0$.
\end{lemma}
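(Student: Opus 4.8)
The statement to prove is Lemma~\ref{lemma:f-taylor-approximation}, which gives the fourth-order Taylor expansion at $u = 0$ of the log-ratio
\[
    f_x(u) = \ln\frac{\caln(x; 0, 1)}{q\caln(x; u - qu, 1) + (1 - q)\caln(x; -qu, 1)}.
\]
The plan is to compute the expansion directly by differentiation, exploiting the fact that the denominator is a mixture of two Gaussians whose means are both linear in $u$. First I would write the denominator as $g_x(u) = q\,\caln(x; (1-q)u, 1) + (1-q)\,\caln(x; -qu, 1)$, so that $f_x(u) = \ln\caln(x;0,1) - \ln g_x(u)$. Since the first term is constant in $u$, the whole $u$-dependence sits in $-\ln g_x(u)$, and I need the Taylor coefficients of $\ln g_x$ at $u = 0$. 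Note $g_x(0) = \caln(x;0,1)$, so $f_x(0) = 0$; this already matches the claimed expansion (no constant term).

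The cleanest route is to factor out $\caln(x;0,1)$ from $g_x(u)$: writing $\phi(x) = \caln(x;0,1)$, we have
\[
    \frac{g_x(u)}{\phi(x)} = q\,e^{(1-q)u\,x - \frac{1}{2}(1-q)^2u^2} + (1-q)\,e^{-qu\,x - \frac{1}{2}q^2u^2} =: h_x(u),
\]
so that $f_x(u) = -\ln h_x(u)$ with $h_x(0) = 1$. Now $h_x(u)$ is an explicit analytic function of $u$ whose Taylor coefficients at $0$ are elementary: expand each exponential as a power series in $u$ (the exponents are degree-$2$ polynomials in $u$ with $x$-dependent coefficients), collect terms up to $u^4$, then use $-\ln(1 + w) = -w + \tfrac{1}{2}w^2 - \tfrac{1}{3}w^3 + \tfrac{1}{4}w^4 + O(w^5)$ with $w = h_x(u) - 1 = O(u)$. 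Keeping track of which powers of $u$ each piece contributes and re-collecting gives the stated polynomial coefficients in $x$ and $q$; the existence of a remainder term $r_x(u)u^4$ with $r_x(u) \to 0$ as $u \to 0$ is just Taylor's theorem with the Peano form of the remainder, valid since $h_x$ is smooth and $h_x(0) = 1 \neq 0$ so $\ln h_x$ is smooth near $u = 0$.

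The only real obstacle is bookkeeping: the $u^3$ and especially the $u^4$ coefficient require combining several cross terms (products of linear-in-$u$ and quadratic-in-$u$ contributions from the two exponentials, together with the $w^2$, $w^3$, $w^4$ terms of $-\ln(1+w)$), and it is easy to drop a term or mis-handle the $(1-q)$ versus $q$ weights. A sanity check I would run is to verify the $u^2$ coefficient in isolation: the quadratic term of $-\ln h_x$ is $-\tfrac{1}{2}h_x''(0) + \tfrac{1}{2}(h_x'(0))^2$, and a short computation gives $h_x'(0) = q(1-q)x - (1-q)qx = 0$ and $h_x''(0) = q(1-q)^2(x^2-1) + (1-q)q^2(x^2-1) = q(1-q)(x^2-1)$, so the $u^2$ coefficient is $-\tfrac{1}{2}q(1-q)(x^2-1) = \tfrac{1}{2}(q-1)q(x^2-1)$, matching the claim. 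A further check is that setting $q = 0$ or $q = 1$ makes $f_x \equiv 0$, and indeed every listed coefficient vanishes at $q \in \{0,1\}$. With these checks in place, the $u^3$ and $u^4$ coefficients follow from the same mechanical expansion, and I would simply present the resulting collected polynomials.
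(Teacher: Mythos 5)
Your proposal is correct and its logical skeleton is the same as the paper's: both rest on Taylor's theorem with the Peano remainder applied to $f_x$ at $u=0$, so the only content is the computation of the first four derivatives. Where you differ is in how that computation is carried out. The paper simply delegates it to Mathematica, whereas you give a human-verifiable route: factor $\caln(x;0,1)$ out of the mixture so that $f_x(u) = -\ln h_x(u)$ with $h_x(u) = q\,e^{(1-q)ux - \frac{1}{2}(1-q)^2u^2} + (1-q)\,e^{-qux - \frac{1}{2}q^2u^2}$ and $h_x(0)=1$, then compose the elementary power series of the two exponentials with $-\ln(1+w)$. This buys transparency and checkability (your verification of the $u^2$ coefficient via $h_x'(0)=0$, $h_x''(0)=q(1-q)(x^2-1)$ is correct, as is the observation that all coefficients must vanish at $q\in\{0,1\}$ — the stated $u^4$ coefficient indeed does), at the cost of substantial bookkeeping for the $u^3$ and $u^4$ terms, which you outline but do not actually carry out. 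Since the paper does not display those computations either, your plan is at least as complete as the published proof; to turn it into a finished argument you would still need to grind through (or symbolically verify) the order-$3$ and order-$4$ collections, since the $u^4$ coefficient is exactly where a dropped cross term would go unnoticed.
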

\begin{proof}
    The claim follows from Taylor's theorem after computing the first four derivatives 
    of $f_x$ at $u = 0$. We computed the derivatives with Mathematica.
    The notebook together with the corresponding pdf file can be found in \url{https://github.com/DPBayes/subsampling-is-not-magic/tree/main/notebooks} under the names \textsc{Lemma\_A.2\_computation.\{nb,pdf\}}.
\end{proof}

\begin{lemma}\label{lemma:integral-f-taylor-approximation}
    When $x\sim \caln(0, 1)$,
    \begin{align}
        \Exp_x\left(\frac{1}{2}(q - 1)q(x^2 - 1)u^2 \right) &= 0 \\
        \Exp_x\left(-\frac{1}{6}(q - 1)q(2q - 1)x(x^2 - 3)u^3\right) &= 0 \\
        \Exp_x\left(\frac{f_x^{(4)}(0)}{4!}u^4\right) &= \frac{1}{4}(q - 1)^2q^2u^4.
    \end{align}
\end{lemma}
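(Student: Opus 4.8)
The plan is to verify the three expectations in Lemma~\ref{lemma:integral-f-taylor-approximation} directly, using only elementary facts about the moments of a standard Gaussian $x \sim \caln(0,1)$: namely $\Exp(x) = \Exp(x^3) = 0$ (all odd moments vanish), $\Exp(x^2) = 1$, and $\Exp(x^4) = 3$.

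For the first expectation, I would factor out the constants $\frac{1}{2}(q-1)q u^2$ and compute $\Exp_x(x^2 - 1) = \Exp(x^2) - 1 = 1 - 1 = 0$, which immediately gives the result. For the second, I factor out $-\frac{1}{6}(q-1)q(2q-1)u^3$ and note that $x(x^2-3) = x^3 - 3x$ is an odd polynomial in $x$, so its expectation under the symmetric Gaussian is zero; alternatively $\Exp(x^3) - 3\Exp(x) = 0 - 0 = 0$. For the third, I take the degree-four polynomial coefficient of $u^4$ appearing in Lemma~\ref{lemma:f-taylor-approximation},
\begin{equation*}
    \frac{1}{24}q\left(-3 + 6x^2 - x^4 - 12q^2(2 - 4x^2 + x^4) + 6q^3(2 - 4x^2 + x^4) + q(15 - 30x^2 + 7x^4)\right),
\end{equation*}
and substitute $\Exp(x^2) = 1$, $\Exp(x^4) = 3$. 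Each bracketed group collapses: $-3 + 6 - 3 = 0$, then $2 - 4 + 3 = 1$ for the two $q$-dependent cubic/quartic pieces, and $15 - 30 + 21 = 6$ for the last. So the whole thing reduces to $\frac{1}{24}q\left(0 - 12q^2 + 6q^3 + 6q\right) = \frac{1}{24}\cdot 6q(q - 2q^2 + q^3) = \frac{1}{4}q \cdot q(1-q)^2 = \frac{1}{4}q^2(1-q)^2$, matching the claimed $\frac{1}{4}(q-1)^2 q^2$. (I should note that $\frac{f_x^{(4)}(0)}{4!}u^4$ is exactly this term, since the $r_x(u)u^4$ remainder in Lemma~\ref{lemma:f-taylor-approximation} is the Peano remainder that is not part of the Taylor polynomial.)

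There is essentially no hard part here; this is a routine moment computation. The only point requiring minor care is the implicit claim that one may take the expectation of the Taylor polynomial term-by-term and that this equals the Taylor expansion of $\Exp_x(f_x(u))$ — i.e. that differentiation under the integral sign is justified, so that $\Exp_x(f_x^{(k)}(0)) = \frac{\dd^k}{\dd u^k}\Exp_x(f_x(u))\big|_{u=0}$. That justification (dominated-convergence-type bounds on the derivatives of $f_x$, uniformly in $u$ near $0$) is presumably handled separately in the appendix where this lemma is applied inside the proof of Theorem~\ref{thm:subsampled-mech-convergence}; for the statement of this lemma as written, only the three finite Gaussian-moment identities above are needed, so I would present it as a short direct calculation and defer the interchange-of-limits discussion to the surrounding argument.
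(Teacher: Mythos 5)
Your proposal is correct and follows essentially the same route as the paper: both reduce each identity to the standard Gaussian moments $\Exp(x^2)=1$, $\Exp(x)= \Exp(x^3)=0$, $\Exp(x^4)=3$ and carry out the same algebraic simplification $\frac{1}{24}q(6q^3-12q^2+6q)u^4=\frac{1}{4}(q-1)^2q^2u^4$. Your remark that the interchange of expectation and differentiation is deferred elsewhere is also consistent with the paper, which handles it in Lemma~\ref{lemma:f-derivative-integrable-upper-bound}.
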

\begin{proof}
    It is well-known that $\Exp(x) = 0$, $\Exp(x^2) = 1$, $\Exp(x^3) = 0$ and $\Exp(x^4) = 3$.
    The first expectation:
    \begin{equation}
        \begin{split}
            \Exp_x\left(\frac{1}{2}(q - 1)q(x^2 - 1)u^2 \right)
            &= \frac{1}{2}(q - 1)qu^2\Exp_x\left(x^2 - 1\right)
            \\&= \frac{1}{2}(q - 1)qu^2\left(\Exp_x(x^2) - 1\right)
            \\&= 0.
        \end{split}
    \end{equation}
    The second expectation:
    \begin{equation}
        \begin{split}
            \Exp_x\left(-\frac{1}{6}(q - 1)q(2q - 1)x(x^2 - 3)u^3\right)
            &= -\frac{1}{6}(q - 1)q(2q - 1)u^3\Exp_x\left(x(x^2 - 3)\right)
            \\&= -\frac{1}{6}(q - 1)q(2q - 1)u^3\left(\Exp_x(x^3) - 3\Exp_x(x)\right)
            \\&= 0.
        \end{split}
    \end{equation}
    The third expectation:
    \begin{equation}
        \begin{split}
            &\Exp_x\left(\frac{f_x^{(4)}(0)}{4!}u^4\right)
            \\&= \Exp_x\left(\frac{1}{24}q\left(-3 + 6x^2 - x^4 - 12q^2(2 - 4x^2 + x^4) 
            + 6q^3(2 - 4x^2 + x^4) + q(15 - 30x^2 + 7x^4)\right)u^4\right)
            \\&= \frac{1}{24}q\left(-3 + \Exp_x(6x^2) - \Exp_x(x^4) - 12q^2\Exp_x(2 - 4x^2 + x^4) 
            + 6q^3\Exp_x(2 - 4x^2 + x^4) + q\Exp_x(15 - 30x^2 + 7x^4)\right)u^4
            \\&= \frac{1}{24}q\left(-3 + 6 - 3 - 12q^2(2 - 4 + 3) 
            + 6q^3(2 - 4 + 3) + q(15 - 30 + 21)\right)u^4
            \\&= \frac{1}{24}q\left(-12q^2 + 6q^3 + 6q\right)u^4
            \\&= \frac{1}{4}q^2\left(-2q + q^2 + 1\right)u^4
            \\&= \frac{1}{4}(q - 1)^2q^2u^4.
        \end{split}
    \end{equation}
    
\end{proof}

Lemmas~\ref{lemma:f-taylor-approximation} and \ref{lemma:integral-f-taylor-approximation}
show that the Taylor approximation of $\Exp_x(f_x(u))$ is 
\begin{equation}
    \Exp_x(f_x(u)) = \frac{1}{4}(q - 1)^2q^2u^4 + r(u)u^4
\end{equation}
if we can differentiate under the expectation. Next, we show that this is possible in 
Lemma~\ref{lemma:f-derivative-integrable-upper-bound}, which requires several preliminaries.

\begin{definition}
    A function $g(x, u)$ is a polynomial-exponentiated simple polynomial (PESP) if
    \begin{equation}
        g(x, u) = \sum_{i=1}^n P_i(x, u)e^{Q_i(x, u)}
    \end{equation}
    for some $n\in \N$ and polynomials $P_i(x, u)$ and $Q_i(x, u)$, $1\leq i \leq n$, with $Q_i(x, u)$
    being first-degree in $x$.
\end{definition}

\begin{lemma}\label{lemma:pesp-products-derivatives}
    If $g_1$ and $g_2$ are PESPs, 
    \begin{enumerate}
        \item $g_1 + g_2$ is a PESP,
        \item $g_1 \cdot g_2$ is a PESP,
        \item $\frac{\partial}{\partial u} g_1$ is a PESP.
    \end{enumerate}
\end{lemma}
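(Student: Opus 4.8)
The plan is to check the three closure properties directly from the definition of a PESP, treating each summand of the representation separately and keeping track of the one structural constraint that matters, namely that every exponent is first-degree in $x$.

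Closure under addition (item 1) is immediate: if $g_1 = \sum_{i=1}^{n_1} P_i(x,u) e^{Q_i(x,u)}$ and $g_2 = \sum_{j=1}^{n_2} R_j(x,u) e^{S_j(x,u)}$ are PESP representations, then concatenating the two lists of pairs gives a PESP representation of $g_1 + g_2$ with $n_1 + n_2$ summands, and no exponent has changed. For closure under multiplication (item 2) I would expand the product,
\[
g_1 g_2 = \sum_{i=1}^{n_1}\sum_{j=1}^{n_2} \bigl(P_i(x,u) R_j(x,u)\bigr)\, e^{Q_i(x,u) + S_j(x,u)},
\]
and observe that $P_i R_j$ is a polynomial (polynomials are closed under products) and that $Q_i + S_j$ is again first-degree in $x$: writing $Q_i = A_i(u) x + B_i(u)$ and $S_j = C_j(u) x + D_j(u)$, the sum equals $(A_i + C_j)(u)\, x + (B_i + D_j)(u)$, which is affine in $x$ (with the degenerate constant-in-$x$ case absorbed into ``degree at most one''). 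Hence $g_1 g_2$ is a PESP with $n_1 n_2$ summands.

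For closure under $\partial/\partial u$ (item 3) I would differentiate the representation of $g_1$ termwise by the product rule,
\[
\frac{\partial}{\partial u}\left(P_i e^{Q_i}\right) = \left(\frac{\partial P_i}{\partial u} + P_i \frac{\partial Q_i}{\partial u}\right) e^{Q_i},
\]
noting that the new coefficient is a polynomial because polynomials are closed under $\partial_u$, sums, and products, while the exponent $Q_i$ is unchanged and therefore still first-degree in $x$; summing over $i$ gives a PESP representation of $\partial g_1/\partial u$. The only point that needs genuine (though still light) care --- the closest thing to an obstacle here --- is verifying that the ``first-degree in $x$'' condition survives each operation: it is untouched under addition, it is preserved under multiplication because the exponents add and affine-in-$x$ functions are closed under addition, and it is preserved under $\partial_u$ because the exponents are not modified at all. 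With this bookkeeping made explicit, each item reduces to standard facts about polynomials.
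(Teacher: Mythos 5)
Your proof is correct and follows essentially the same route as the paper's: concatenate representations for sums, expand the double sum and use closure of polynomials under products (and of affine-in-$x$ exponents under addition) for products, and apply the product rule termwise for $\partial_u$. Your explicit tracking of the first-degree-in-$x$ condition is slightly more detailed than the paper's, but the argument is the same.
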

\begin{proof}
    Let 
    \begin{equation}
        g_j(x, u) = \sum_{i=1}^{n_j} P_{i,j}(x, u)e^{Q_{i,j}(x, u)}
    \end{equation}
    for $j\in \{1, 2\}$. 
    (1) is clear by just writing the sums in $g_1$ and $g_2$ as a single sum. For (2),
    \begin{equation}
        \begin{split}
            g_1(x, u) \cdot g_2(x, u) 
            &= \sum_{i=1}^{n_1} \sum_{j=1}^{n_2} P_{i,1}(x, u)P_{j,2}(x, u)
            e^{Q_{i,1}(x, u)}e^{Q_{j,2}(x, u)}
            \\&= \sum_{i=1}^{n_1} \sum_{j=1}^{n_2} P_{i,1}(x, u)P_{j,2}(x, u)
            e^{Q_{i,1}(x, u) + Q_{j,2}(x, u)}
            \\&= \sum_{i=1}^{n_3} P_{i, 3}(x, u)e^{Q_{i,3}(x, u)}
        \end{split}
    \end{equation}
    since the product of two polynomials is a polynomial, and the sum
    of two polynomials is a polynomial of the same degree.

    For (3)
    \begin{equation}
        \begin{split}
            \frac{\partial}{\partial u} g_1(x, u)
            &= \sum_{i=1}^{n_1} \left(\frac{\partial}{\partial u}P_{i,1}(x, u)\right)e^{Q_{i,1}(x, u)}
            + \sum_{i=1}^{n_1} P_{i,1}(x, u)\left(\frac{\partial}{\partial u}Q_{i,1}(x, u)\right)e^{Q_{i,1}(x, u)}
            \\&= \sum_{i=1}^{n_1} P_{i, 4}(x, u) e^{Q_{i,1}(x, u)}
        \end{split}
    \end{equation}
    since the partial derivatives, products and sums of polynomials are polynomials.
\end{proof}

\begin{lemma}\label{lemma:gaussian-polynomial-exponential-is-integrable}
    When $x\sim \caln(0, 1)$, for $a > 0, b\in \R$ and $k\in \N$,
    \begin{equation}
        \Exp_x(a|x|^ke^{b|x|}) < \infty.
    \end{equation}
\end{lemma}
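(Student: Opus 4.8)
The plan is to reduce the bound $\Exp_x(a|x|^k e^{b|x|}) < \infty$ to a standard fact about Gaussian moment generating functions. First I would pull the constant $a > 0$ out of the expectation, so it suffices to show $\Exp_x(|x|^k e^{b|x|}) < \infty$. Since $|x|^k e^{b|x|} \geq 0$, I can bound it by $e^{|b||x|}$ times a polynomial, or more cleanly use the crude inequality $|x|^k \leq k! \, e^{|x|}$ (valid since $|x|^k/k!$ is one term of the series for $e^{|x|}$ when $|x| \geq 0$), giving $|x|^k e^{b|x|} \leq k!\, e^{(1 + |b|)|x|}$. Hence it is enough to show $\Exp_x(e^{c|x|}) < \infty$ for any $c \geq 0$.

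For the latter, I would write $e^{c|x|} \leq e^{cx} + e^{-cx}$ (since exactly one of $\pm x$ equals $|x|$ and both exponentials are positive), and then invoke the well-known Gaussian MGF identity $\Exp_x(e^{cx}) = e^{c^2/2} < \infty$ for $x \sim \caln(0,1)$, and likewise $\Exp_x(e^{-cx}) = e^{c^2/2}$. Therefore $\Exp_x(e^{c|x|}) \leq 2e^{c^2/2} < \infty$, and chaining the bounds gives $\Exp_x(a|x|^k e^{b|x|}) \leq a\, k!\, 2\, e^{(1+|b|)^2/2} < \infty$.

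There is no real obstacle here: the statement is a routine integrability fact and every step is elementary. If I wanted to avoid even citing the Gaussian MGF, I could instead bound the integrand pointwise by a constant times the Gaussian density's reciprocal growth — i.e.\ observe that $|x|^k e^{(1+|b|)|x|} e^{-x^2/2}$ is bounded on $\R$ because the quadratic $-x^2/2$ dominates the linear-plus-logarithmic term, so the integrand against Lebesgue measure is dominated by an integrable Gaussian tail. Either route is a one-paragraph argument; I expect the authors' proof to be essentially this, possibly packaged as a direct appeal to the existence of Gaussian moment and exponential-moment generating functions. The only mild care needed is handling the absolute value, which the $e^{cx} + e^{-cx}$ trick dispatches cleanly.
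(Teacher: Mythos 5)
Your proof is correct. It does take a slightly different route from the paper's: you absorb the polynomial factor into the exponential via $|x|^k \leq k!\,e^{|x|}$ and then invoke the Gaussian moment generating function $\Exp_x(e^{cx}) = e^{c^2/2}$, splitting $e^{c|x|} \leq e^{cx} + e^{-cx}$ to handle the absolute value. The paper instead keeps the polynomial factor and completes the square in the exponent, $e^{b|x|}e^{-x^2/2} \propto e^{-(x-b)^2/2}$ on $[0,\infty)$, reducing the integral to an absolute moment of a mean-shifted Gaussian, which is finite. Your route buys an explicit closed-form constant ($2a\,k!\,e^{(1+|b|)^2/2}$), at the cost of a very loose bound; the paper's route is a touch more direct but only yields finiteness, which is all that is needed. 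Both arguments are complete and every step you state is valid, including the restriction to nonnegative $c$ and the handling of possibly negative $b$ via $|b|$.
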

\begin{proof}
    \begin{equation}
        \begin{split}
            \Exp_x(a|x|^ke^{b|x|})
            &= \int_{-\infty}^\infty \frac{1}{\sqrt{2\pi}} a|x|^k e^{b|x|}e^{-\frac{1}{2}x^2} \dx x
            \\&\propto \int_{-\infty}^\infty |x|^k e^{b|x|}e^{-\frac{1}{2}x^2} \dx x
            \\&= 2\int_{0}^\infty x^k e^{bx}e^{-\frac{1}{2}x^2} \dx x
            \\&= 2\int_{0}^\infty x^k e^{-\frac{1}{2}(x^2 - 2bx)} \dx x
            \\&\propto \int_{0}^\infty x^k e^{-\frac{1}{2}(x^2 - 2bx + b^2)} \dx x
            \\&= \int_{0}^\infty x^k e^{-\frac{1}{2}(x - b)^2} \dx x
            \\&\leq \int_{-\infty}^\infty |x|^k e^{-\frac{1}{2}(x - b)^2} \dx x
            \\&< \infty
        \end{split}
    \end{equation}
    since all absolute moments of Gaussian distributions are finite.
\end{proof}

\begin{lemma}\label{lemma:f-derivative-integrable-upper-bound}
    For any $k\in \N$, $k\geq 1$, there is a function $g_k(x)$ such that 
    $|f_x^{(k)}(u)| \leq g_k(x)$ for all $u\in [-1, 1]$ and $x \in \R$, and 
    $\Exp_x(g_k(x)) < \infty$.
\end{lemma}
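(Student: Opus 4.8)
The plan is to construct the dominating function $g_k$ explicitly by writing $f_x^{(k)}(u)$ as a ratio of PESPs and then bounding the numerator and denominator separately. First I would observe that the denominator of $f_x(u)$ — call it $D(x,u) = q\caln(x; u - qu, 1) + (1 - q)\caln(x; -qu, 1)$ — is, up to a factor $\frac{1}{\sqrt{2\pi}}e^{-x^2/2}$, a PESP in $(x,u)$, and so is $\caln(x;0,1)$. Writing $f_x(u) = \ln\caln(x;0,1) - \ln D(x,u)$, only the second term depends on $u$, so $f_x^{(k)}(u) = -\frac{\dd^k}{\dd u^k}\ln D(x,u)$ for $k\geq 1$. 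By the chain rule / Faà di Bruno, each such derivative is a rational function whose numerator is a sum of products of $u$-derivatives of $D$ and whose denominator is a power $D(x,u)^m$. Using Lemma~\ref{lemma:pesp-products-derivatives}, the numerator is a PESP (after pulling out the common $e^{-x^2/2}$ factor appropriately), and the denominator power is also controlled.

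The key structural point I would exploit is that $D(x,u)$ is bounded below by a single Gaussian term: since $q \in (0,1)$ and both mixture weights are nonnegative, $D(x,u) \geq (1-q)\caln(x; -qu, 1)$ when $q < 1$ (and symmetrically $\geq q\caln(x; u-qu, 1)$), so for $u\in[-1,1]$ we get $D(x,u) \geq c\, e^{-\frac{1}{2}(|x| + 1)^2}$ for some constant $c = c(q) > 0$, using $|qu|\leq 1$ and $|u - qu|\leq 1$. Hence $1/D(x,u)^m \leq c^{-m} e^{\frac{m}{2}(|x|+1)^2}$. This is the dangerous factor: it grows like $e^{mx^2/2}$, which is not integrable against the standard Gaussian. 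The resolution is that the numerator of $f_x^{(k)}(u)$, being built out of $D$ and its $u$-derivatives, carries exactly $m$ factors of a single Gaussian density in $x$ (each derivative in $u$ does not change the Gaussian-in-$x$ factor, it only brings down polynomial factors), so the numerator is bounded by $\mathrm{poly}(|x|)\cdot e^{-\frac{m}{2}(|x|-1)^2}$ for $u\in[-1,1]$. The $e^{-mx^2/2}$ in the numerator cancels the $e^{+mx^2/2}$ from the denominator bound, leaving $|f_x^{(k)}(u)| \leq g_k(x)$ where $g_k(x) = A_k\,\mathrm{poly}(|x|)\,e^{B_k|x|}$ for constants $A_k, B_k$ depending on $q$ and $k$ only. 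Finally, $\Exp_x(g_k(x)) < \infty$ by Lemma~\ref{lemma:gaussian-polynomial-exponential-is-integrable}.

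So the steps in order are: (i) reduce to bounding derivatives of $-\ln D(x,u)$; (ii) express $\frac{\dd^k}{\dd u^k}(-\ln D)$ as (PESP numerator) over $D^m$ via Faà di Bruno, tracking that the numerator contains $m$ copies of a Gaussian-in-$x$ density and polynomial-in-$x$ coefficients with $u$-dependence confined to a compact set; (iii) lower-bound $D(x,u)$ on $u\in[-1,1]$ by a constant times a single shifted Gaussian, giving $D^{-m}$ an $e^{+\frac{m}{2}(|x|+1)^2}$ bound; (iv) combine (ii) and (iii) so the Gaussian factors cancel, producing $g_k(x) = A_k\mathrm{poly}(|x|)e^{B_k|x|}$ uniformly in $u\in[-1,1]$; (v) invoke Lemma~\ref{lemma:gaussian-polynomial-exponential-is-integrable} for integrability.

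The main obstacle is step (ii)–(iv): making rigorous the bookkeeping that the number of Gaussian-density factors in the numerator of $\frac{\dd^k}{\dd u^k}(-\ln D)$ matches the power $m$ in the denominator, so that the exponential-in-$x^2$ factors cancel exactly rather than leaving a residual $e^{+cx^2}$ that would destroy integrability. A clean way to organize this is to note that $-\frac{\dd}{\dd u}\ln D = -D_u/D$, and then show by induction that $\frac{\dd^k}{\dd u^k}(-\ln D) = N_k(x,u)/D(x,u)^k$ where $N_k$ is a sum of products of $D$ and its $u$-partials with total "$D$-degree" exactly $k$; since each of $D, D_u, D_{uu}, \dots$ equals $e^{-x^2/2}$ times (polynomial in $x,u$), a product of $k$ of them equals $e^{-kx^2/2}$ times a polynomial, which matches the $e^{+kx^2/2}$ growth of $1/D^k$ after the lower bound. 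One must also check uniformity: all polynomial coefficients are continuous in $u$ on the compact set $[-1,1]$, hence bounded there, which is what lets the $u$-dependence be absorbed into the constants $A_k, B_k$.
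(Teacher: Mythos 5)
Your proposal is correct and follows essentially the same route as the paper's proof: both express $f_x^{(k)}(u)$ as a PESP numerator over a power of the mixture denominator, exploit the exact cancellation of the Gaussian $e^{-x^2/2}$ factors so that only exponentials linear in $x$ survive, use the compactness of $u\in[-1,1]$ to absorb the $u$-dependence into constants, and conclude via Lemma~\ref{lemma:gaussian-polynomial-exponential-is-integrable}. The only differences are bookkeeping: your logarithmic-derivative induction yields denominator $D^k$ with a degree-$k$ homogeneous numerator and cancels the Gaussian factors at the end, whereas the paper cancels them before differentiating and runs a quotient-rule induction with denominator power $2^{k-1}$, lower-bounding the denominator by a pointwise minimum of the two exponentials rather than by a single shifted Gaussian.
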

\begin{proof}
    We start by computing the first derivative of $f_x(u)$
    \begin{equation}
        \begin{split}
            f_x(u) &= \ln \frac{e^{-\frac{1}{2}x^2}}{qe^{-\frac{1}{2}((q - 1)u + x)^2} + (1 - q)e^{-\frac{1}{2}(qu + x)^2}}
            \\&= -\ln \left(qe^{-\frac{1}{2}((q - 1)u + x)^2} + (1 - q)e^{-\frac{1}{2}(qu + x)^2}\right) - \frac{x^2}{2}
        \end{split}
    \end{equation}
    \begin{align}
        f_x'(u) = -\frac{-q(q - 1)((q - 1)u + x)e^{-\frac{1}{2}((q - 1)u + x)^2}
        - (1 - q)q(qu + x)e^{-\frac{1}{2}(qu + x)^2}}
        {qe^{-\frac{1}{2}((q - 1)u + x)^2} + (1 - q)e^{-\frac{1}{2}(qu + x)^2}}
    \end{align}
    Since
    \begin{equation}
        e^{-\frac{1}{2}((q - 1)u + x)^2}
        = e^{-\frac{1}{2}((q - 1)^2u^2 + 2(q - 1)xu + x^2)}
        = e^{-\frac{1}{2}x^2}e^{-\frac{1}{2}((q - 1)^2u^2 + 2(q - 1)xu)}
    \end{equation}
    and
    \begin{equation}
        e^{-\frac{1}{2}(qu + x)^2}
        = e^{-\frac{1}{2}(q^2u^2 + 2qux + x^2)}
        = e^{-\frac{1}{2}x^2}e^{-\frac{1}{2}(q^2u^2 + 2qux)}
    \end{equation}
    we can write the first derivative in the following form:
    \begin{equation}
        f_x'(u) = \frac{P_1(x, u) e^{Q_1(x, u)} + P_2(x, u)e^{Q_2(x, u)}}{qe^{Q_1(x, u)} + (1 - q)e^{Q_2(x, u)}}
    \end{equation}
    for polynomials $P_1(x, u)$, $P_2(x, u), Q_1(x, u)$ and $Q_2(x, u)$, with $Q_1(x, u)$ and $Q_2(x, u)$ being 
    first-degree in $x$.

    We show by induction that further derivatives have a similar form:
    \begin{equation}
        f_x^{(k)}(u) = \frac{\sum_{i=1}^{n_k} P_{i,k}(x, u)e^{Q_{i,k}(x, u)}}{(qe^{Q_1(x, u)} + (1 - q)e^{Q_2(x, u)})^{2^{k-1}}}
    \end{equation}
    We also require the $Q$-polynomials to still be first degree in $x$, so the numerator 
    must be a PESP. The claim is clearly true for $k = 1$. If the claim is true for $k$, then 
    \begin{align}
        f_x^{(k+1)}(u) 
        &= \frac{G(x, u)\frac{\partial}{\partial u}F(x, u) - F(x, u)\frac{\partial}{\partial u}G(x, u)}
        {(qe^{Q_1(x, u)} + (1 - q)e^{Q_2(x, u)})^{2^{k}}}
    \end{align}
    where 
    \begin{align}
        F(x, u) &= \sum_{i=1}^{n_k} P_{i,k}(x, u)e^{Q_{i,k}(x, u)} \\
        G(x, u) &= (qe^{Q_1(x, u)}  + (1 - q)e^{Q_2(x, u)})^{2^{k-1}}
    \end{align}
    by the quotient differentiation rule. The denominator has the correct form, so it remains
    to show that the numerator is a PESP. $F$ is clearly a PESP, and so is $G$ due to 
    Lemma~\ref{lemma:pesp-products-derivatives}. The numerator is a sum of products of 
    $F$, $G$ and their derivatives, so it is a PESP by Lemma~\ref{lemma:pesp-products-derivatives},
    which concludes the induction proof.

    We have
    \begin{equation}
        qe^{Q_1(x, u)} + (1 - q)e^{Q_2(x, u)} \geq \min\left(e^{Q_1(x, u)}, e^{Q_2(x, u)}\right).
    \end{equation}
    We can split $\R$ into measurable subsets $A_1$ and $A_2$ such that 
    \begin{equation}
        \min\left(e^{Q_1(x, u)}, e^{Q_2(x, u)}\right) = 
        \begin{cases}
            e^{Q_1(x, u)} & x\in A_1 \\
            e^{Q_2(x, u)} & x \in A_2
        \end{cases}
    \end{equation}
    Now
    \begin{align}
        |f_x^{(k)}(u)|
        &\leq \left|\frac{\sum_{i=1}^{n_k} P_{i,k}(x, u)e^{Q_{i,k}(x,u)}}{\min\left(e^{Q_1(x,u)}, e^{Q_2(x,u)}\right)^{2^{k-1}}}\right|
        \\&= \left|I_{A_1}\frac{\sum_{i=1}^{n_k} P_{i,k}(x, u)e^{Q_{i,k}(x,u)}}{(e^{Q_1(x,u)})^{2^{k-1}}}\right|
        + \left|I_{A_2}\frac{\sum_{i=1}^{n_k} P_{i,k}(x, u)e^{Q_{i,k}(x,u)}}{(e^{Q_2(x,u)})^{2^{k-1}}}\right|
        \\&\leq \left|\frac{\sum_{i=1}^{n_k} P_{i,k}(x, u)e^{Q_{i,k}(x,u)}}{e^{{2^{k-1}} Q_1(x,u)}}\right|
        + \left|\frac{\sum_{i=1}^{n_k} P_{i,k}(x, u)e^{Q_{i,k}(x,u)}}{e^{{2^{k-1} Q_2(x,u)}}}\right|
        \\&\leq \sum_{i=1}^{m_k} |R_{i,k}(x, u)|e^{S_{i,k}(x, u)}
    \end{align}
    where $R_{i,k}(x,u)$ and $S_{i,k}(x,u)$ are further polynomials, with the $S$-polynomials being of 
    first degree in $x$.

    Since $u\in [-1, 1]$, for a monomial $ax^{k_1}u^{k_2}$
    \begin{equation}
        ax^{k_1}u^{k_2} \leq |ax^{k_1}u^{k_2}| \leq |a|\cdot |x|^{k_1}.
    \end{equation}
    Using this inequality on each monomial of $R_{i,k}(x, u)$ and $S_{i,k}(x, u)$ gives upper 
    bound polynomials of $|x|$ $\hat{R}_{i,k}(x)$ and $\hat{S}_{i,k}(x)$ such that 
    \begin{equation}
        |f_x^{(k)}(u)|
        \leq \sum_{i=1}^{m_k} |R_{i,k}(x, u)|e^{S_{i,k}(x, u)}
        \leq \sum_{i=1}^{m_k} |\hat{R}_{i,k}(x)|e^{\hat{S}_{i,k}(x)},
    \end{equation}
    with the $\hat{S}$-polynomials being first degree.

    Let 
    \begin{equation}
        g_k(x) = \sum_{i=1}^{m_k} |\hat{R}_{i,k}(x)|e^{\hat{S}_{i,k}(x)}.
    \end{equation}
    We have shown that $|f_x(u)| \leq g_k(x)$ for $u\in [-1, 1]$ and $x\in \R$.
    The integrability of $g_k(x)$ against a standard Gaussian follows from 
    Lemma~\ref{lemma:gaussian-polynomial-exponential-is-integrable},
    as we can first push the absolute value around $\hat{R}_{i,k}(x)$ to be
    around each monomial of $\hat{R}_{i,k}(x)$ with the triangle inequality,
    and then write the resulting upper bound as a sum with each term of the form 
    $a|x|^ke^{b|x|}$, with $a > 0$, $b\in \R$ and $k\in \N$.
\end{proof}

Now we can put the preliminaries together to use the Taylor approximation of 
of $\Exp_x(f_x(u))$ to find its order of convergence.
\begin{lemma}\label{lemma:integral-f-order}
    When $x\sim \caln(0, 1)$, $\Exp_x(f_x(u)) = O(u^4)$ as $u\to 0$.
\end{lemma}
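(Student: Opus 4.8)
The plan is to transfer the pointwise fourth-order Taylor expansion of $f_x(u)$ at $u=0$ from \cref{lemma:f-taylor-approximation} through the expectation $\Exp_{x\sim\caln(0,1)}$, using \cref{lemma:f-derivative-integrable-upper-bound} to justify interchanging the $u$-derivatives with the expectation and \cref{lemma:integral-f-taylor-approximation} to identify the surviving coefficient. Throughout, $q$ is fixed, so the implied constant in $O(u^4)$ may depend on $q$.

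First I would set $g(u) := \Exp_x(f_x(u))$ and show that $g$ is four times differentiable on $(-1,1)$ with $g^{(k)}(u) = \Exp_x(f_x^{(k)}(u))$ for $1 \le k \le 4$. This is an iterated application of the Leibniz differentiation-under-the-integral-sign rule: at each stage, for fixed $x$ the map $u \mapsto f_x^{(k-1)}(u)$ is differentiable, and by \cref{lemma:f-derivative-integrable-upper-bound} there is $g_k(x)$ with $|f_x^{(k)}(u)| \le g_k(x)$ for all $u\in[-1,1]$ and $\Exp_x(g_k(x)) < \infty$, which licenses moving $\tfrac{\dx}{\dx u}$ inside the expectation. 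Applying this for $k = 1,2,3,4$ in turn gives the claim.

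Next I would evaluate $g$ and its first four derivatives at $u = 0$. \cref{lemma:f-taylor-approximation} shows the expansion of $f_x$ has no constant or linear term, so $f_x(0) = f_x'(0) = 0$ and hence $g(0) = g'(0) = 0$; it also gives $f_x''(0) = (q-1)q(x^2-1)$ and $f_x'''(0) = -(q-1)q(2q-1)x(x^2-3)$, both of which have zero expectation under $x\sim\caln(0,1)$ by the first two identities of \cref{lemma:integral-f-taylor-approximation}, so $g''(0) = g'''(0) = 0$. The third identity of \cref{lemma:integral-f-taylor-approximation} gives $g^{(4)}(0) = \Exp_x(f_x^{(4)}(0)) = 6(q-1)^2q^2$. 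Taylor's theorem in Peano form (which needs only what we have just established, namely $g$ four times differentiable near $0$) then yields $g(u) = \tfrac14(q-1)^2q^2 u^4 + o(u^4)$ as $u\to 0$, and in particular $\Exp_x(f_x(u)) = O(u^4)$. An essentially equivalent endgame, matching the remark preceding the lemma, is to write $\Exp_x(f_x(u)) = \tfrac14(q-1)^2q^2 u^4 + \Exp_x(r_x(u))u^4$ via \cref{lemma:f-taylor-approximation,lemma:integral-f-taylor-approximation}, note that the Lagrange remainder form together with the $g_5$ bound of \cref{lemma:f-derivative-integrable-upper-bound} give $|r_x(u)| \le g_5(x)/120$ for $|u|\le 1$, and conclude $\Exp_x(r_x(u)) \to 0$ by dominated convergence since $r_x(u) \to 0$ pointwise.

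I do not expect a genuine obstacle here: the only delicate point is the bookkeeping in the repeated differentiation under the integral, and \cref{lemma:f-derivative-integrable-upper-bound} was evidently set up precisely to supply the dominating functions that each iteration requires, so every step reduces to invoking the right $g_k$ and the standard Leibniz rule.
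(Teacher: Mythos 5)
Your proposal is correct and follows essentially the same route as the paper: differentiate under the expectation four times using the dominating functions from Lemma~\ref{lemma:f-derivative-integrable-upper-bound}, combine Taylor's theorem with Lemmas~\ref{lemma:f-taylor-approximation} and \ref{lemma:integral-f-taylor-approximation} to obtain $\Exp_x(f_x(u)) = \tfrac14(q-1)^2q^2u^4 + o(u^4)$, and conclude. Your write-up is in fact more explicit than the paper's about the iterated Leibniz rule and the value $g^{(4)}(0) = 6(q-1)^2q^2$, which checks out against the third identity of Lemma~\ref{lemma:integral-f-taylor-approximation}.
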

\begin{proof}
    Since $u\to 0$ in the limit, it suffices to consider $u\in [-1, 1]$. 
    First, we find the fourth-order Taylor approximation of $\Exp_x(f_x(u))$.
    Lemma~\ref{lemma:f-derivative-integrable-upper-bound} allows us to differentiate under the 
    expectation four times. Then Taylor's theorem, and Lemmas~\ref{lemma:f-taylor-approximation} and 
    \ref{lemma:integral-f-taylor-approximation} give
    \begin{equation}
        \Exp_x(f_x(u)) = \frac{1}{4}(q - 1)^2q^2 u^4 + r(u)u^4
    \end{equation}
    where $\lim_{u\to 0} r(u) = 0$.
    Now
    \begin{equation}
        \begin{split}
            \lim_{u\to 0} \frac{1}{u^4}\Exp(f_x(u))
            &= \lim_{u\to 0} \frac{1}{u^4}\left(\frac{1}{4}(q - 1)^2q^2 u^4 + r(u)u^4\right)
            \\&= \frac{1}{4}(q - 1)^2q^2 + \lim_{u\to 0} r(u)
            \\&= \frac{1}{4}(q - 1)^2q^2
            \\&< \infty,
        \end{split}
    \end{equation}
    which implies the claim.
\end{proof}

\subsection{Proof of Theorem \ref{thm:subsampled-mech-convergence}}
\label{app:subsampled-mech-convergence}
\theoremsubsampledmechconvergence*
\begin{proof}
    It suffices to show
    \begin{equation}
        \sup_x T\cdot \KL(\calm_i'(x), \calm_i(x)) \to 0
    \end{equation}
    due to Pinsker's inequality and the additivity of KL divergence for products of independent
    random variables (Lemma~\ref{lemma:kl-div-properties}). 
    When $x = 0$, the two mechanism are the same, so it suffices to look at $x = 1$. 

    KL-divergence is invariant to bijections, so
    \begin{equation}
        \begin{split}
            \KL(\calm_i'(1), \calm_i(1)) 
            &= \KL\big(\caln(q, \sigma_T^2), q\caln(1, \sigma_T^2) + (1 - q)\caln(0, \sigma_T^2)\big)
            \\&= \KL\left(\caln\left(q\frac{1}{\sigma_T}, 1\right), 
            q\caln\left(\frac{1}{\sigma_T}, 1\right) + (1 - q)\caln\left(0, 1\right)\right)
            \\&= \KL\left(\caln\left(qu, 1\right), 
            q\caln\left(u, 1\right) + (1 - q)\caln\left(0, 1\right)\right)
            \\&= \KL\left(\caln\left(0, 1\right), 
            q\caln\left(u - qu, 1\right) + (1 - q)\caln\left(-qu, 1\right)\right)
        \end{split}
    \end{equation}
    where we first divide both distributions by $\sigma_T$, then set $u = \frac{1}{\sigma_T}$, and 
    finally subtract $qu$. As $\sigma^2_T = \Omega(T)$, $u = O\left(\frac{1}{\sqrt{T}}\right)$.

    From the definition of KL-divergence, $u^4 = O(\frac{1}{T^2})$ and Lemma~\ref{lemma:integral-f-order}, 
    when $x\sim \caln(0, 1)$ we have
    \begin{equation}
        \KL(\calm_i'(1), \calm_i(1)) = \Exp_x(f_x(u)) = O(u^4) = O\left(\frac{1}{T^2}\right).
    \end{equation}
    This implies 
    \begin{equation}
        \lim_{T\to \infty} T\cdot \KL(\calm_i'(1), \calm_i(1)) = 0,
    \end{equation}
    which implies the claim.
\end{proof}

\subsection{Proof of Corollary~\ref{corollary:linear-sigma-limit}}\label{app:linear-sigma-limit}
With fixed $\Delta, T, \epsilon$, the function 
$\sigma \mapsto \AO_G(\sigma, \Delta, T, \epsilon)$ is strictly 
decreasing~\citep[Lemma 7]{balleImprovingGaussianMechanism2018} and 
continuous, so it has a continuous inverse 
$\delta \mapsto \AO_G^{-1}(\delta, \Delta, T, \epsilon)$. To declutter the 
notation, we omit the $\epsilon$ and $\Delta$ arguments from $\AO_S$, 
$\AO_G$ and $\AO_G^{-1}$ in the rest of this section.

\begin{lemma}\label{lemma:aog-and-inverse-composition-behaviour}
    For $\AO_G$ and its inverse,
    \begin{enumerate}
        \item $\AO_G(\sigma, T) = \AO_G\left(\frac{\sigma}{\sqrt{T}}, 1\right)$,
        \item $\AO_G^{-1}(\delta, T) = \AO_G^{-1}(\delta, 1)\sqrt{T}$.
    \end{enumerate}
\end{lemma}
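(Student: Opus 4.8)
The plan is to reduce both claims to the privacy loss random variable (PLRV) of the Gaussian mechanism together with the composition theorem, Theorem~\ref{thm:plrv-composition}. Throughout I keep the suppressed arguments $\Delta$ and $\epsilon$ fixed. Recall from Section~\ref{sec:dp-composition} that a single Gaussian mechanism with sensitivity $\Delta$ and noise $\sigma$ has PLRV $\caln(\mu, 2\mu)$ with $\mu = \frac{\Delta^2}{2\sigma^2}$, and that the privacy profile $\delta(\epsilon)$, hence the accounting oracle, is a function of the PLRV alone via \eqref{eq:plrv-to-delta}. Consequently $\AO_G(\sigma, T)$ is determined by the PLRV of the $T$-fold composition of this mechanism.

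First I would compute that PLRV. By Theorem~\ref{thm:plrv-composition} it is the law of $L_1 + \dotsb + L_T$ with $L_i \sim \caln(\mu, 2\mu)$ independent, hence $\caln(T\mu, 2T\mu)$. The key observation is self-similarity of the Gaussian family: $\caln(T\mu, 2T\mu)$ is exactly the PLRV of a single Gaussian mechanism whose $\mu$-parameter equals $T\mu$, which corresponds to sensitivity $\Delta$ and noise $\sigma'$ satisfying $\frac{\Delta^2}{2\sigma'^2} = T\cdot\frac{\Delta^2}{2\sigma^2}$, i.e. $\sigma' = \sigma/\sqrt{T}$. Since equal PLRVs give equal accounting oracles, this yields part (1): $\AO_G(\sigma, T) = \AO_G(\sigma/\sqrt{T}, 1)$.

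For part (2) I would use that $\sigma \mapsto \AO_G(\sigma, 1)$ is strictly decreasing and continuous~\citep[Lemma~7]{balleImprovingGaussianMechanism2018}, hence a continuous bijection onto its range with continuous inverse $\AO_G^{-1}(\cdot, 1)$. By part (1), $\sigma \mapsto \AO_G(\sigma, T)$ is the composition of $\sigma \mapsto \sigma/\sqrt{T}$ with $\AO_G(\cdot, 1)$, so $\AO_G(\sigma, T) = \delta \Leftrightarrow \AO_G(\sigma/\sqrt{T}, 1) = \delta \Leftrightarrow \sigma/\sqrt{T} = \AO_G^{-1}(\delta, 1) \Leftrightarrow \sigma = \sqrt{T}\,\AO_G^{-1}(\delta, 1)$, which is precisely part (2).

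I do not anticipate a substantial obstacle; the one point deserving care is that the accounting oracle of the composition is genuinely computed from the convolved PLRV rather than merely upper bounded by it — that is, the Gaussian dominating pair $\bigl(\caln(0,\sigma^2),\caln(\Delta,\sigma^2)\bigr)$ is tightly dominating and tight domination is preserved under composition~\citep{zhuOptimalAccountingDifferential2022}. Granting this standard fact, the Gaussian self-similarity $\caln(T\mu, 2T\mu)$ does all the work.
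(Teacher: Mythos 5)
Your proof is correct and follows essentially the same route as the paper: part (1) via the Gaussian PLRV $\caln(\mu,2\mu)$, Theorem~\ref{thm:plrv-composition}, and the self-similarity $\caln(T\mu,2T\mu)$ matching a single mechanism with noise $\sigma/\sqrt{T}$; part (2) by inverting using the strict monotonicity of $\AO_G(\cdot,1)$. Your added remark on tight domination being preserved under composition is a legitimate point of care that the paper leaves implicit.
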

\begin{proof}
    Recall that the PLRV of the Gaussian mechanism is $\caln(\mu_1, 2\mu_1)$ with 
    $\mu_1 = \frac{\Delta^2}{2\sigma^2}$~\citep{sommerPrivacyLossClasses2019}. By Theorem~\ref{thm:plrv-composition},
    the PLRV of $T$ compositions of the Gaussian mechanism is $\caln(T\mu_1, 2T\mu_1)$. Denoting 
    $\mu_T = T\mu_1 = \frac{T\Delta^2}{2\sigma^2}$, we see that the $T$-fold composition of the Gaussian 
    mechanism has the same PLRV as a single composition of the Gaussian mechanism with standard deviation 
    $\frac{\sigma}{\sqrt{T}}$, which proves (1).
    
    To prove (2), first we have
    \begin{align}
        \AO_G(\AO_G^{-1}(\delta, 1)\sqrt{T}, T)
        = \AO_G(\AO_G^{-1}(\delta, 1), 1)
        = \delta
    \end{align}
    by applying (1) to the outer $\AO_G$. Applying $\AO_G^{-1}(\cdot, T)$ to both sides gives
    \begin{equation}
        \AO_G^{-1}(\delta, 1)\sqrt{T} = \AO_G^{-1}(\delta, T)
    \end{equation}
\end{proof}
\corollarylinearsigmalimit*
\begin{proof}
    By definition, $\AO_S(\sigma(q, T), q, T) = \delta$, so 
    Theorem~\ref{thm:subsampled-mech-ao-convergence} implies
    \begin{equation}
        \lim_{T\to \infty} \AO_S\left(\frac{\sigma(q, T)}{q}, 1, T\right) = \delta.
        \label{eq:ao-full-batch-limit}
    \end{equation}
    Since $\AO_S(\sigma, 1, T) = \AO_G(\sigma, T)$ for any $\sigma$,
    we get from Lemma~\ref{lemma:aog-and-inverse-composition-behaviour}
    \begin{equation}
        \left|\AO_G\left(\frac{\sigma(q, T)}{q\sqrt{T}}, 1\right) - \delta\right|
        = \left|\AO_G\left(\frac{\sigma(q, T)}{q}, T\right) - \delta\right|
        \to 0
    \end{equation}
    as $T \to \infty$. 
    We have
    \begin{equation}
        \AO_G^{-1}\left(\AO_G\left(\frac{\sigma(q, T)}{q\sqrt{T}}, 1\right), 1\right)
        = \frac{\sigma(q, T)}{q\sqrt{T}}
    \end{equation}
    and by Lemma~\ref{lemma:aog-and-inverse-composition-behaviour},
    \begin{equation}
        \AO_G^{-1}(\delta, 1) = \frac{1}{\sqrt{T}}\AO_G^{-1}(\delta, T) = \frac{\sigma(1, T)}{\sqrt{T}}.
    \end{equation}
    Now, by the continuity of $\AO_G^{-1}(\cdot, 1)$,
    \begin{equation}
        \frac{1}{\sqrt{T}}\left|\frac{\sigma(q, T)}{q} - \sigma(1, T)\right| \to 0
    \end{equation}
    as $T\to \infty$.

    Since $\sigma(1, T) = \Omega(\sqrt{T})$ by Theorem~\ref{thm:sigma-growth},
    \begin{equation}
        \left|\frac{\sigma(q, T)}{q\sigma(1, T)} - 1\right|
        = \frac{|\sigma(q, T)/q - \sigma(1, T)|}{\sigma(1, T)}
        %\leq \frac{|\sigma(q, T)/q - \sigma(1, T)|}{C\sqrt{T}}
        \to 0
    \end{equation}
    as $T\to \infty$, which implies the claim.
\end{proof}

\subsection{$\sigma_T^2 = \Omega(T)$}\label{app:sigma-growth}

\begin{lemma}\label{lemma:subsampled-gauss-mech-prlv-exp-lower-bound}
    Let $L$ be the PLRV of a single iteration of the Poisson subsampled Gaussian mechanism. 
    Then $\Exp(L) \geq \frac{q^2}{2\sigma^2}$.
\end{lemma}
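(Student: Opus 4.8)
The plan is to recognise $\Exp(L)$ as a Kullback--Leibler divergence and then lower-bound it using the maximum-entropy property of the Gaussian. By Definition~\ref{def:plrv}, $L = \ln\frac{\dx P}{\dx Q}(T)$ with $T\sim P$ for a dominating pair $(P,Q)$ of the mechanism, so $\Exp(L) = \KL(P,Q)$. Rescaling by $1/\Delta$ is a bijection that turns the dominating pair in \eqref{eq:dominating-pair} into the corresponding pair with sensitivity $1$ and leaves $\KL$ unchanged, so it suffices to treat $\Delta=1$, i.e.\ $P = q\caln(1,\sigma^2) + (1-q)\caln(0,\sigma^2)$ and $Q = \caln(0,\sigma^2)$. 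All the quantities below are finite because $P$ has Gaussian tails, which makes every moment of $P$ finite and keeps $-\ln p(t)$ growing only quadratically.

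The main computation is to expand $\KL(P,Q) = -\Exp_{t\sim P}[\ln p(t)] - \Exp_{t\sim P}[\ln q(t)]$, where $p,q$ are the densities. Since $Q$ is Gaussian, $-\Exp_{t\sim P}[\ln q(t)] = \tfrac12\ln(2\pi\sigma^2) + \tfrac{1}{2\sigma^2}\Exp_P[t^2]$, and a direct computation on the mixture gives $\Exp_P[t] = q$, $\Exp_P[t^2] = \sigma^2 + q$, and hence $\Var_P = \sigma^2 + q(1-q)$. The crucial step is to bound the differential entropy $-\Exp_{t\sim P}[\ln p(t)] \leq \tfrac12\ln(2\pi e\,\Var_P)$, comparing $P$ against the Gaussian with the \emph{same variance} rather than the same second moment, so that the squared mean $q^2$ is retained in the cross term. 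Combining these gives $\KL(P,Q) \geq \tfrac12\ln\tfrac{\sigma^2}{\Var_P} + \tfrac{q}{2\sigma^2}$.

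To finish, set $w = q(1-q)/\sigma^2 \geq 0$, so that $\Var_P/\sigma^2 = 1+w$ and $\tfrac{q}{2\sigma^2} = \tfrac{q^2}{2\sigma^2} + \tfrac{w}{2}$. Then $\KL(P,Q) \geq \tfrac{q^2}{2\sigma^2} + \tfrac12\bigl(w - \ln(1+w)\bigr) \geq \tfrac{q^2}{2\sigma^2}$, using the elementary inequality $\ln(1+w)\le w$ for $w\ge 0$, which is the claim.

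The proof is short once the right tool is in hand, so the only real obstacle is choosing that tool: evaluating $\KL(P,Q)$ directly is intractable because of the sum inside the logarithm, bounding $\KL$ by $\chi^2$ goes in the wrong direction, and the entropy comparison at equal second moment loses the factor $q^2$ and yields a strictly weaker (and here insufficient) bound. The entropy comparison at equal variance is exactly what is needed; equivalently, one can invoke the Pythagorean identity $\KL(R,Q) = \KL(R,G_R) + \KL(G_R,Q)$ valid for Gaussian $Q$ with $G_R$ the moment-matched Gaussian of $R$, which is the same idea repackaged. Checking finiteness of the entropy and moments is routine given the Gaussian tails of $P$.
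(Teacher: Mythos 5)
Your proposal is correct and follows essentially the same route as the paper's proof: decompose $\Exp(L)=\KL(P,Q)$ into the differential entropy of the mixture plus the Gaussian cross-entropy, upper-bound $H(P)$ by the entropy of the Gaussian with matching variance $\sigma^2+q(1-q)$, and finish with $\ln(1+w)\le w$. The substitution $w=q(1-q)/\sigma^2$ and the explicit KL framing are only cosmetic repackagings of the paper's argument.
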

\begin{proof}
    Recall, that for Poisson subsampled Gaussian mechanism we have the dominating pair 
    $P = q \caln(1, \sigma^2) + (1-q) \caln(0, \sigma^2)$ 
    and $Q = \caln(0, \sigma^2)$. Let $f_P$ and $f_Q$ be their densities. Now the mean of the PLRV can be written as 
    \begin{equation}
        \begin{split}
            \Exp(L) &= \Exp_{t \sim P}\left[ \log \frac{f_P(t)}{f_Q(t)} \right] \\
                &= \Exp_{t \sim P}[\log f_P(t)] - \Exp_{t \sim P}[\log f_Q(t)] \\
                &= -H(P) - \left( -\frac{1}{2}\log 2\pi\sigma^2 -\frac{1}{2\sigma^2}\Exp_{t \sim P}[t^2] \right)\\
                &= -H(P) + \frac{1}{2}\left( \log 2\pi + \log \sigma^2 + \frac{1}{\sigma^2}(\sigma^2+q) \right) \\
                &= -H(P) + \frac{1}{2}\left( \log 2\pi + \log \sigma^2 + \frac{q}{\sigma^2} + 1 \right), \label{eq:plrv_mean}
        \end{split}
    \end{equation}
    where $H$ denotes the differential entropy. The entropy term is analytically intractable for the mixture 
    of Gaussians (the $P$). However, we can upper bound it with the entropy of a Gaussian with the same
    variance, as the Gaussian distribution maximises entropy among distributions with given mean and variance. 
    The variance of $P$ is $\sigma^2+q-q^2=\sigma^2 + q(1-q)$, and therefore
    \begin{equation}
        \begin{split}
            H(P) 
                &\leq H(\caln(0, \sigma^2 + q(1-q)))  \\
                &= \frac{1}{2} \left(\log 2\pi(\sigma^2 + q(1-q)) + 1 \right) \\
                &= \frac{1}{2} \left(\log 2\pi + \log(\sigma^2 + q(1-q)) + 1 \right)
        \end{split}
    \end{equation}
    Now, substituting this into \eqref{eq:plrv_mean} we get the following lower bound
    \begin{align}
        \Exp(L)
            \geq \frac{1}{2} \left( \log \frac{\sigma^2}{\sigma^2 + q(1-q)} + \frac{q}{\sigma^2}\right)
            = \frac{1}{2} \left( -\log \lp 1 + \frac{q-q^2}{\sigma^2} \rp + \frac{q}{\sigma^2}\right).
            \label{eq:plrv_mean_lower}
    \end{align}
    Since for all $x \geq -1$,
    \begin{align}
        \log (1+x) \leq x \Leftrightarrow - \log(1+x) \geq -x,
    \end{align}
    we have
    \begin{align}
        -\log \lp 1 + \frac{q-q^2}{\sigma^2} \rp \geq -\frac{q-q^2}{\sigma^2}
    \end{align}
    which gives
    \begin{align}
        \Exp(L)
            \geq \frac{1}{2} \left( -\frac{q-q^2}{\sigma^2} + \frac{q}{\sigma^2}\right)
            = \frac{q^2}{2\sigma^2}.
    \end{align}
\end{proof}

\begin{lemma}\label{lemma:subsampled-gaussian-mech-plrv-variance-bound}
    Let $L$ be the PLRV of a single iteration of the Poisson subsampled Gaussian mechanism. 
    Then $\Var(L) \leq \frac{1}{\sigma^2} + \frac{1}{4\sigma^4}$
\end{lemma}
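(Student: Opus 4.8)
The plan is to bound the privacy loss function pointwise by a linear function of the released value, and then control the second moment of that linear function under $P$. First I would write out the privacy loss function explicitly. Since $\frac{\dx\caln(t;1,\sigma^2)}{\dx\caln(t;0,\sigma^2)} = e^{(2t-1)/(2\sigma^2)}$, the dominating pair $(P,Q)$ from \eqref{eq:dominating-pair} with $\Delta=1$ yields
\begin{equation}
    \call(t) = \ln\lp q e^{(2t-1)/(2\sigma^2)} + (1-q) \rp,
\end{equation}
so that $L = \call(T)$ with $T\sim P = q\caln(1,\sigma^2) + (1-q)\caln(0,\sigma^2)$.

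Next I would establish the symmetric pointwise bound $|\call(t)| \leq \left|\frac{2t-1}{2\sigma^2}\right|$. Writing $z = \frac{2t-1}{2\sigma^2}$, the quantity $q e^{z} + (1-q)$ is a convex combination of $e^{z}$ and $1 = e^{0}$. By convexity of $\exp$ (Jensen's inequality) it is at least $e^{qz}$, giving $\call(t) \geq qz$; and being a convex combination of $e^{z}$ and $1$, it is at most $\max(e^{z},1) = e^{(z)_+}$, giving $\call(t) \leq (z)_+$. When $z\geq 0$ this sandwiches $\call(t)$ between $qz\geq 0$ and $z$, so $|\call(t)|\leq z = |z|$; when $z < 0$ it forces $\call(t)\leq (z)_+ = 0$ together with $\call(t)\geq qz$, so $|\call(t)|\leq q|z|\leq |z|$. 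In either case $|\call(t)| \leq |z|$.

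I would then finish with the crude bound $\Var(L) \leq \Exp(L^2) = \Exp_{t\sim P}(\call(t)^2) \leq \Exp_{t\sim P}\lp\lp\tfrac{2t-1}{2\sigma^2}\rp^2\rp$ and evaluate the right-hand side using the moments of $P$, namely $\Exp_{t\sim P}(t) = q$ and $\Exp_{t\sim P}(t^2) = \sigma^2 + q$. These give $\Exp_{t\sim P}((2t-1)^2) = 4(\sigma^2+q) - 4q + 1 = 4\sigma^2 + 1$, hence $\Var(L) \leq \frac{4\sigma^2+1}{4\sigma^4} = \frac{1}{\sigma^2} + \frac{1}{4\sigma^4}$.

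The only delicate point I anticipate is checking that the two one-sided estimates combine to the symmetric bound over the entire range of $t$ — in particular that on $\{t < 1/2\}$ the truncation $(z)_+ = 0$ is exactly what prevents $\call(t)$ from being large and positive, while Jensen's lower bound simultaneously keeps it from being too negative. It is also worth remarking that dropping the mean (bounding $\Var(L)$ by $\Exp(L^2)$) costs nothing to leading order here, since the $q$-dependence cancels out of $\Exp_{t\sim P}((2T-1)^2)$.
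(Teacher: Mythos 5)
Your proof is correct and follows essentially the same route as the paper: bound the privacy loss function pointwise by $\left|\frac{2t-1}{2\sigma^2}\right|$, drop the mean via $\Var(L)\leq\Exp(L^2)$, and evaluate the second moment using $\Exp_{t\sim P}(t)=q$ and $\Exp_{t\sim P}(t^2)=\sigma^2+q$. The only cosmetic difference is that you obtain the lower bound on $\call(t)$ from Jensen's inequality ($qe^z+(1-q)\geq e^{qz}$) where the paper uses the cruder $qe^z+(1-q)\geq\min(e^z,1)$; both yield the same symmetric pointwise bound.
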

\begin{proof}
    \begin{equation}
        \begin{split}
            \ln \frac{f_P(t)}{f_Q(t)} 
            &= \ln \frac{q \exp(-\frac{(t - 1)^2}{2\sigma^2}) + (1 - q)\exp(-\frac{t}{2\sigma^2})} {\exp(-\frac{t^2}{2\sigma^2})}
            \\&= \ln \left(q \exp(\frac{t^2 - (t - 1)^2}{2\sigma^2}) + (1 - q)\right)
            \\&= \ln \left(q \exp(\frac{2t - 1}{2\sigma^2}) + (1 - q)\right)
            \\&\leq \ln \max\left\{\exp(\frac{2t - 1}{2\sigma^2}), 1\right\}
            \\&= \max\left\{\frac{2t - 1}{2\sigma^2}, 0\right\}
        \end{split}
    \end{equation}

    Similarly, we also have
    \begin{equation}
        \ln \frac{f_P(t)}{f_Q(t)}
        \geq \ln \min\left\{\exp(\frac{2t - 1}{2\sigma^2}), 1\right\}
        = \min\left\{\frac{2t - 1}{2\sigma^2}, 0\right\}
    \end{equation}
    so 
    \begin{equation}
        \begin{split}
            \left|\ln \frac{f_P(t)}{f_Q(t)}\right|
            &\leq \max\left\{\max\left\{\frac{2t - 1}{2\sigma^2}, 0\right\},
            -\min\left\{\frac{2t - 1}{2\sigma^2}, 0\right\}\right\}
            \\&= \max\left\{\max\left\{\frac{2t - 1}{2\sigma^2}, 0\right\},
            \max\left\{-\frac{2t - 1}{2\sigma^2}, 0\right\}\right\}
            \\&\leq \left|\frac{2t - 1}{2\sigma^2}\right| 
        \end{split}
    \end{equation}

    Since $\Exp_{t\sim P}(t) = q$ and $\Exp_{t\sim P}(t^2) = \sigma^2 + q$,
    \begin{equation}
        \begin{split}
            \Var(L^{(T)}) &\leq \Exp((L^{(T)})^2) 
            \\&= \Exp_{t\sim P}\left(\left(\ln \frac{f_P(t)}{f_Q(t)}\right)^2\right)
            \\&\leq \Exp_{t\sim P}\left(\left|\frac{2t - 1}{2\sigma^2}\right|^2\right)
            \\&= \Exp_{t\sim P}\left(\left(\frac{2t - 1}{2\sigma^2}\right)^2\right)
            \\&= \frac{1}{4\sigma^4}\Exp_{t\sim P}\left(4t^2 - 4t + 1\right)
            \\&= \frac{\sigma^2 + q}{\sigma^4} - \frac{q}{\sigma^4} + \frac{1}{4\sigma^4}
            \\&= \frac{1}{\sigma^2} + \frac{1}{4\sigma^4}.
        \end{split}
    \end{equation}
\end{proof}

\begin{lemma}\label{lemma:subsampled-gaussian-mech-plrv-tail-bound}
    Let $L_T$ be the PLRV of $T$ iterations of the Poisson subsampled Gaussian mechanism,
    and let $K \in \N$. If $\sigma_T^2 = \Omega(T)$ is not true, for any $\alpha_i > 0$, $b_i > 0$ with $1 \leq i \leq K$,
    it is possible to find a $T$ such that,
    \begin{equation}
        \Pr_{s\sim L_T}(s \leq b_i) \leq \alpha_i
    \end{equation}
    holds simultaneously for all $i$.
\end{lemma}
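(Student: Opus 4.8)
The plan is to combine the one-iteration mean and variance estimates from Lemmas~\ref{lemma:subsampled-gauss-mech-prlv-exp-lower-bound} and~\ref{lemma:subsampled-gaussian-mech-plrv-variance-bound} with Chebyshev's inequality, after passing to a subsequence of iteration counts along which $\sigma_T^2/T$ vanishes. First I would use Theorem~\ref{thm:plrv-composition}: $L_T$ has the distribution of a sum of $T$ independent copies of the single-iteration PLRV $L$ (all with noise standard deviation $\sigma_T$ and subsampling rate $q$), so $\Exp(L_T) = T\,\Exp(L)$ and $\Var(L_T) = T\,\Var(L)$. Substituting the two lemmas gives
\begin{equation}
    \Exp(L_T) \geq \frac{Tq^2}{2\sigma_T^2} =: m_T,
    \qquad
    \Var(L_T) \leq T\left(\frac{1}{\sigma_T^2} + \frac{1}{4\sigma_T^4}\right) =: v_T .
\end{equation}

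Next I would exploit the failure of $\sigma_T^2 = \Omega(T)$. Negating the definition of $\Omega(T)$ shows $\liminf_{T\to\infty}\sigma_T^2/T = 0$, so there is an increasing sequence $T_k\to\infty$ with $\rho_k := \sigma_{T_k}^2/T_k \to 0$. Along it $m_{T_k} = \tfrac{q^2}{2\rho_k}\to\infty$, and a short computation gives
\begin{equation}
    \frac{v_{T_k}}{m_{T_k}^2}
    = \frac{4\sigma_{T_k}^2}{q^4 T_k} + \frac{1}{q^4 T_k}
    = \frac{4\rho_k}{q^4} + \frac{1}{q^4 T_k} \;\longrightarrow\; 0 ,
\end{equation}
hence also $v_{T_k}/(m_{T_k} - b)^2 \to 0$ for every fixed $b$, since $m_{T_k}\to\infty$. (Note $v_{T_k}$ itself may diverge; only the ratio matters.)

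Finally I would run the Chebyshev step uniformly over $i$. Put $b^\star = \max_{1\leq i\leq K} b_i$ and $\alpha^\star = \min_{1\leq i\leq K}\alpha_i$, and choose $k$ large enough that $m_{T_k} > b^\star$ and $v_{T_k}/(m_{T_k} - b^\star)^2 < \alpha^\star$. For $T := T_k$ and each $i$ we have $\Exp(L_T) \geq m_T > b^\star \geq b_i$, and $\{s\leq b^\star\}\subseteq\{|s - \Exp(L_T)|\geq \Exp(L_T) - b^\star\}$, so Chebyshev's inequality yields
\begin{equation}
    \Pr_{s\sim L_T}(s\leq b_i) \leq \Pr_{s\sim L_T}(s \leq b^\star)
    \leq \frac{\Var(L_T)}{(\Exp(L_T) - b^\star)^2}
    \leq \frac{v_T}{(m_T - b^\star)^2} < \alpha^\star \leq \alpha_i ,
\end{equation}
which is the claim. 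I expect the only real obstacle to be the bookkeeping: correctly turning the negation of $\sigma_T^2 = \Omega(T)$ into a concrete subsequence and verifying that $m_{T_k}\to\infty$ and $v_{T_k}/m_{T_k}^2\to 0$ hold simultaneously along it; the remaining algebra and the tail bound are routine.
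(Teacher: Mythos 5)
Your proposal is correct and follows essentially the same route as the paper's proof: negate $\sigma_T^2=\Omega(T)$ to extract iteration counts with $\sigma_T^2/T$ small, combine Lemmas~\ref{lemma:subsampled-gauss-mech-prlv-exp-lower-bound} and~\ref{lemma:subsampled-gaussian-mech-plrv-variance-bound} with the additivity of mean and variance under composition, and finish with Chebyshev's inequality. The only (cosmetic) difference is that you collapse the $K$ constraints to the single worst-case pair $(\max_i b_i,\ \min_i \alpha_i)$ and verify $v_T/m_T^2\to 0$ explicitly, whereas the paper keeps a separate Chebyshev threshold $k_i=1/\sqrt{\alpha_i}$ for each $i$; both are fine.
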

\begin{proof}
    If $\sigma_T^2 = \Omega(T)$ is not true, 
    \begin{equation}
        \liminf_{T\to \infty} \frac{\sigma_T^2}{T} = 0.\label{eq:sigma-growth-proof-1}
    \end{equation}
    By Lemma~\ref{lemma:subsampled-gauss-mech-prlv-exp-lower-bound} and the composition theorem,
    $\Exp(L_T) \geq T\frac{q^2}{2\sigma_T^2}$. 
    By Lemma~\ref{lemma:subsampled-gaussian-mech-plrv-variance-bound}, 
    $\Var(L_T) \leq \frac{T}{\sigma_T^2} + \frac{T}{4\sigma_T^4}$.
    
    Let $k_i = \frac{1}{\sqrt{\alpha_i}}$. Choose $T$ such that 
    \begin{equation}
        \frac{Tq^2}{\sigma_T^2} - k_i\sqrt{\frac{T}{\sigma_T^2} + \frac{T}{4\sigma_T^4}} \geq b_i.
    \end{equation}
    for all $i$.
    This is possible by \eqref{eq:sigma-growth-proof-1} by choosing $\frac{T}{\sigma_T^2}$ to be large enough to 
    satisfy all the inequalities.
    Now
    \begin{equation}
        \Exp(L_T) - k_i\sqrt{\Var(L_T)} 
        \geq \frac{Tq^2}{\sigma_T^2} - k_i\sqrt{\frac{T}{\sigma_T^2} + \frac{T}{4\sigma_T^4}} 
        \geq b_i
    \end{equation}
    for all $i$, so
    \begin{equation}
        \Pr_{s\sim L_T}(s \leq b_i)
        \leq \Pr_{s\sim L_T}\left(|s - \Exp(L_T)| \geq k_i\sqrt{\Var(L_T)}\right)
        \leq \frac{1}{k_i^2}
        = \alpha_i
    \end{equation}
    for all $i$ by Chebyshev's inequality. 
\end{proof}
This means that it is possible to make $\Pr_{s\sim L_T}(s \leq b)$ arbitrarily small
for any $b$ by choosing an appropriate $T$, and to satisfy a finite number of these 
constraints simultaneously with a single $T$.

\theoremsigmagrowth*
\begin{proof}
    By Lemma~\ref{lemma:sensitivity-to-noise-exchange}, it suffices to consider $\Delta = 1$.
    To obtain a contradiction, assume that $\sigma_T^2 = \Omega(T)$ is not true.
    Let $L_T$ be the PLRV for $T$ iterations of the Poisson subsampled Gaussian mechanism.

    From \eqref{eq:plrv-to-delta},
    \begin{equation}
        \begin{split}
            \AO_S(\sigma_T, q, T, \epsilon) 
            &= \Exp_{s\sim L_T}((1 - e^{\epsilon - s})_+)
            \\&= \Exp_{s\sim L_T}(I(s > \epsilon)(1 - e^{\epsilon - s}))
            \\&= \Exp_{s\sim L_T}(I(s > \epsilon)) - \Exp_{s\sim L}(I(s > \epsilon)e^{\epsilon - s})
            \\&= \Pr_{s\sim L_T}(s > \epsilon) - \Exp_{s\sim L}(I(s > \epsilon)e^{\epsilon - s})
        \end{split}
    \end{equation}

    By choosing $b_1 = \epsilon$ and $\alpha_1 = 1 - \frac{1}{2}(\delta + 1)$ in 
    Lemma~\ref{lemma:subsampled-gaussian-mech-plrv-tail-bound}, we get
    $\Pr_{s\sim L_T}(s > \epsilon) \geq \frac{1}{2}(\delta + 1)$.

    To bound the remaining term,
    \begin{equation}
        \begin{split}
            \Exp_{s\sim L_T}(I(s > \epsilon)e^{\epsilon - s})
            &= e^\epsilon \Exp_{s\sim L_T}(I(s > \epsilon)e^{-s})
            \\&= e^\epsilon \sum_{i=0}^\infty\Exp_{s\sim L}(I(\epsilon + i < s \leq \epsilon + i + 1)e^{-s})
            \\&\leq e^\epsilon \sum_{i=0}^\infty\Exp_{s\sim L}(I(\epsilon + i < s \leq \epsilon + i + 1)e^{-\epsilon - i})
            \\&\leq \sum_{i=0}^\infty e^{-i} \Exp_{s\sim L}(I(\epsilon + i < s \leq \epsilon + i + 1))
            \\&\leq \sum_{i=0}^\infty e^{-i} \Pr_{s\sim L}(\epsilon + i < s \leq \epsilon + i + 1)
            \\&= \sum_{i=0}^K e^{-i} \Pr_{s\sim L}(\epsilon + i < s \leq \epsilon + i + 1)
            + \sum_{i=K+1}^\infty e^{-i} \Pr_{s\sim L}(\epsilon + i < s \leq \epsilon + i + 1)
            \\&\leq \sum_{i=0}^K e^{-i} \Pr_{s\sim L}(s \leq \epsilon + i + 1)
            + \sum_{i=K+1}^\infty e^{-i}.
        \end{split}
    \end{equation}
    The series $\sum_{i=0}^\infty e^{-i}$ converges, so it is possible to make $\sum_{i=K+1}^\infty e^{-i}$ arbitrarily 
    small by choosing an appropriate $K$, which does not depend on $T$.

    If we choose $K$ such that $\sum_{i=K+1}^\infty e^{-i} < \frac{1}{4}(1 - \delta)$ and then choose $T$ such that 
    $e^{-i}\Pr(s \leq \epsilon + i + 1) < \frac{1}{4(K+1)}(1 - \delta)$ for all $0 \leq i \leq K$, we have 
    \begin{equation}
        \Exp_{s\sim L_T}(I(s > \epsilon)e^{\epsilon - s}) 
        < (K+1)\cdot \frac{1}{4(K+1)}(1 - \delta) + \frac{1}{4}(1 - \delta)
        = \frac{1}{2}(1 - \delta).
    \end{equation}

    Lemma~\ref{lemma:subsampled-gaussian-mech-plrv-tail-bound} allows multiple inequalities for a
    single $T$, so we can find a $T$ that satisfies all of the $K + 2$ inequalities we have required it to 
    satisfy. With this $T$,
    \begin{equation}
        \AO_S(\sigma_T, q, T, \epsilon) 
        = \Pr_{s\sim L_T}(s > \epsilon) - \Exp_{s\sim L}(I(s > \epsilon)e^{\epsilon - s})
        > \frac{1}{2}(\delta + 1) - \frac{1}{2}(1 - \delta)
        = \delta
    \end{equation}
    which is a contradiction.
\end{proof}

%%%%%%%%%%%%%%%%%%%%%%%%%%%%%%%%%%%%%%%%%%%%%%%%%%%%%%%%%%%%%%%%%%%%%%%%%%%%%%%
\subsection{Solving $\sigma'(q)$ for $T=1$ case}
\label{sec:app-sigma-derivative}

\lemmanocompderivative*
\begin{proof}
Recall \cref{eq:delta-no-comp}: for a single iteration Poisson 
subsampled Gaussian mechanism, we have 
\begin{equation}
    \label{eq:app-delta-no-comp}
    \begin{split}
        \delta(q) = &q\Pr (Z  \geq \sigma(q) \log \lp \frac{h(q)}{q} \rp - \frac{1}{2\sigma(q)}) \\
           &-h(q) \Pr(Z  \geq \sigma(q) \log \lp \frac{h(q)}{q} \rp + \frac{1}{2\sigma(q)}),
    \end{split}
\end{equation}
where $h(q) = e^\epsilon - (1-q)$. Since $\sigma(q)$ is a function that returns
a noise-level matching any $(\epsilon, \delta)$-DP requirement for subsampling rate
$q$, $\delta'(q) = 0$. Using Mathematica, we can solve the derivative of the RHS
in \cref{eq:app-delta-no-comp} for $\sigma'(q)$ and we get
\begin{align}
    \sigma'(q) = \frac{\sqrt{\frac{\pi }{2}} \sigma(q)^2 e^{\frac{1}{2} \sigma(q)^2 \log ^2\left(\frac{q+e^{\epsilon }-1}{q}\right)+\frac{1}{8 \sigma(q)^2}} \left(\text{erf}\left(\frac{1-2 \sigma(q)^2 \log \left(\frac{q+e^{\epsilon }-1}{q}\right)}{2 \sqrt{2} \sigma(q)}\right)-\text{erf}\left(-\frac{2 \sigma(q)^2 \log \left(\frac{q+e^{\epsilon }-1}{q}\right)+1}{2 \sqrt{2} \sigma(q)}\right)\right)}{q \sqrt{\frac{q+e^{\epsilon }-1}{q}}}.
\end{align}
The notebook together with the corresponding pdf file can be found in \url{https://github.com/DPBayes/subsampling-is-not-magic/tree/main/notebooks} under the names \textsc{no\_comp\_derivative.\{nb,pdf\}}.

Note that 
\begin{align}
    &\exp(\frac{1}{2} \sigma(q)^2 \log ^2\left(\frac{q+e^{\epsilon }-1}{q}\right)+\frac{1}{8 \sigma(q)^2}) \\
        &=\exp(\frac{1}{2} \lp \sigma(q)^2 \log ^2\left(\frac{q+e^{\epsilon }-1}{q}\right)+\frac{1}{4 \sigma(q)^2} \rp ) \\
        &=\exp(\frac{1}{2} \lp \lp \sigma(q) \log \left(\frac{q+e^{\epsilon }-1}{q}\right) - \frac{1}{2 \sigma(q)} \rp^2 + \log \left(\frac{q+e^{\epsilon }-1}{q}\right) \rp ) \\
        &=\exp(\lp \frac{1}{\sqrt{2}}\sigma(q) \log \left(\frac{q+e^{\epsilon }-1}{q}\right) - \frac{1}{2\sqrt{2} \sigma(q)} \rp^2 ) \sqrt{\frac{q+e^{\epsilon }-1}{q}} \\
        &=\exp(\lp \frac{1 - 2\sigma(q)^2\log \left(\frac{q+e^{\epsilon }-1}{q}\right)}{2\sqrt{2} \sigma(q)} \rp^2 ) \sqrt{\frac{q+e^{\epsilon }-1}{q}}
\end{align}
and therefore the derivative becomes
\begin{align}
    \sigma'(q) 
    &= \frac{\sqrt{\frac{\pi }{2}} \sigma(q)^2 e^{\frac{1}{2} \lp \sigma(q) \log \left(\frac{q+e^{\epsilon }-1}{q}\right) - \frac{1}{2 \sigma(q)} \rp^2} \left(\text{erf}\left(\frac{1-2 \sigma(q)^2 \log \left(\frac{q+e^{\epsilon }-1}{q}\right)}{2 \sqrt{2} \sigma(q)}\right)-\text{erf}\left(-\frac{2 \sigma(q)^2 \log \left(\frac{q+e^{\epsilon }-1}{q}\right)+1}{2 \sqrt{2} \sigma(q)}\right)\right)}{q} \\
    &=
    \frac{\sqrt{\frac{\pi }{2}} \sigma(q)^2 e^{\lp \frac{1  - 2\sigma(q)^2 \log \left(\frac{q+e^{\epsilon }-1}{q}\right)}{2\sqrt{2} \sigma(q)} \rp^2} \left(\text{erf}\left(\frac{1-2 \sigma(q)^2 \log \left(\frac{q+e^{\epsilon }-1}{q}\right)}{2 \sqrt{2} \sigma(q)}\right)-\text{erf}\left(-\frac{2 \sigma(q)^2 \log \left(\frac{q+e^{\epsilon }-1}{q}\right)+1}{2 \sqrt{2} \sigma(q)}\right)\right)}{q}
\end{align}
Lets denote 
\begin{align}
    a &:= \frac{1}{2\sqrt{2} \sigma(q)} \\
    b &:= \frac{\sigma(q)}{\sqrt{2}} \log \lp \frac{e^\epsilon - (1-q)}{q} \rp.
\end{align}
Using this notation we can write 
\begin{align}
    q \sigma'(q) 
        &= \sqrt{\frac{\pi}{2}} \sigma(q)^2 \exp((a-b)^2) (\text{erf}(a-b) - \text{erf}(-a-b)) \\
        &= \sqrt{\frac{\pi}{2}} \sigma(q)^2 \exp((a-b)^2) (\text{erf}(a+b) - \text{erf}(b-a)).
\end{align}
Note that we have
\begin{align}
    \text{erf}'(a-b) &= \frac{2}{\sqrt{\pi}} \exp(-(a-b)^2) \\
    \Leftrightarrow 
    \exp((a-b)^2) &= \frac{2}{\sqrt{\pi} \text{erf}'(a-b)}
\end{align}
Hence
\begin{align}
    q \sigma'(q) 
        &= \sigma(q)^2 \frac{\sqrt{2}}{\text{erf}'(a-b)} (\text{erf}(a+b) - \text{erf}(b-a)) \\
        &= \sigma(q)^2 \frac{\sqrt{2}}{\text{erf}'(a-b)} (\text{erf}(a+b) + \text{erf}(a-b)) \\
        &= \sigma(q) \frac{1}{2 a} \frac{1}{\text{erf}'(a-b)} (\text{erf}(a+b) + \text{erf}(a-b)) \\
        &= \sigma(q) \frac{1}{2 a} \frac{1}{\text{erf}'(a-b)} (\text{erf}(a-b) - \text{erf}(-a-b)).
\end{align}

\end{proof}
%%%%%%%%%%%%%%%%%%%%%%%%%%%%%%%%%%%%%%%%%%%%%%%%%%%%%%%%%%%%%%%%%%%%%%%%%%%%%%%
%%%%%%%%%%%%%%%%%%%%%%%%%%%%%%%%%%%%%%%%%%%%%%%%%%%%%%%%%%%%%%%%%%%%%%%%%%%%%%%

\end{document}